\documentclass{article}

\usepackage{arxiv}

\usepackage[utf8]{inputenc} 
\usepackage[T1]{fontenc}    
\usepackage{hyperref}       
\usepackage{url}            
\usepackage{booktabs}       
\usepackage{amsfonts}       
\usepackage{nicefrac}       
\usepackage{microtype}      
\usepackage{lipsum}
\usepackage{graphicx}
\graphicspath{ {./images/} }

\title{UCB-type Algorithm for Budget-Constrained Expert Learning}

\author{
 Ilgam Latypov \\
 AI Center, Lomonosov Moscow State University\\
  MSU Institute for Artificial Intelligence\\
  Moscow, Russia \\
  \texttt{i.latypov@iai.msu.ru} \\
   \And
  Alexandra Suvorikova \\
    Weierstrass Institute for Applied Analysis and Stochastics\\
    Berlin, Germany\\
    IITP RAS\\
    Moscow, Russia \\
  \texttt{suvorikova@wias-berlin.de} \\
  \And
  	Alexey Kroshnin\\
    Weierstrass Institute for Applied Analysis and Stochastics\\
    Berlin, Germany\\
    \texttt{kroshnin@wias-berlin.de}\\
  \And
  Alexander Gasnikov \\
  Steklov Mathematical Institute of RAS\\
  Moscow, Russia\\
  \texttt{gasnikov@yandex.ru} \\
  \And
 Yuriy Dorn \\
 AI Center, Lomonosov Moscow State University\\
  MSU Institute for Artificial Intelligence\\
  Moscow, Russia \\
  \texttt{dornyv@my.msu.ru} \\
}



\usepackage{algorithm}
\usepackage{algorithmic}
\usepackage{amsthm}
\usepackage{amsfonts}
\usepackage{mathrsfs}  
\usepackage{framed}
\usepackage{enumitem}
\usepackage{tcolorbox}
\usepackage{amsmath}
\usepackage{mathtools}

\newcommand{\x}{\mathrm{x}}

\newcommand{\BibTeX}{\rm B\kern-.05em{\sc i\kern-.025em b}\kern-.08em\TeX}


\newtheorem{assumption}{Assumption}
\newtheorem{definition}{Definition}

\newtheorem{remark}{Remark}
\newtheorem{theorem}{Theorem}
\newtheorem{lemma}{Lemma}

\newtheorem{corollary}{Corollary}

\newcommand{\algname}[1]{{\textsf  #1}}

\newcommand{\mbE}{\mathbb E}

\newcommand{\LCB}{\texttt{LCB}}
\newcommand{\UCB}{\texttt{UCB}}

\newcommand{\EDRB}{\texttt{ED}$^2$\texttt{RB}}

\newcounter{ineqnumber}

\usepackage{mdframed}
\newmdenv[
    nobreak=true,    
    skipabove=\topskip,
    skipbelow=\topskip,
    linewidth=1pt
]{nframed}

\mathtoolsset{showonlyrefs}

\begin{document}
\maketitle




\begin{abstract}
    In many modern applications, a system must dynamically choose between several adaptive learning algorithms that are trained online. Examples include model selection in streaming environments, switching between trading strategies in finance, and orchestrating multiple contextual bandit or reinforcement learning agents. At each round, a learner must select one predictor among $K$ adaptive experts to make a prediction, while being able to update at most $M \le K$ of them under a fixed training budget.

We address this problem in the \emph{stochastic setting} and introduce \algname{M-LCB}, a computationally efficient UCB-style meta-algorithm that provides \emph{anytime regret guarantees}. Its confidence intervals are built directly from realized losses, require no additional optimization, and seamlessly reflect the convergence properties of the underlying experts. If each expert achieves internal regret $\tilde O(T^\alpha)$, then \algname{M-LCB} ensures overall regret bounded by $\tilde O\!\Bigl(\sqrt{\tfrac{KT}{M}} \;+\; (K/M)^{1-\alpha}\,T^\alpha\Bigr)$.

To our knowledge, this is the first result establishing regret guarantees when multiple adaptive experts are trained simultaneously under per-round budget constraints. We illustrate the framework with two representative cases: (i) parametric models trained online with stochastic losses, and (ii) experts that are themselves multi-armed bandit algorithms. These examples highlight how \algname{M-LCB} extends the classical bandit paradigm to the more realistic scenario of coordinating stateful, self-learning experts under limited resources.
\end{abstract}
\keywords{expert algorithms, budget-constrained learning, multi-armed bandits}


\section{Introduction}

In many applications, one must dynamically choose between multiple models. Recommendation systems may run several predictors in parallel, updating them on incoming user feedback. Financial platforms rely on switching between trading strategies as market regimes evolve.  
Large-scale online services manage a portfolio of contextual bandits or reinforcement learning algorithms. 

These scenarios' objective is to dynamically select the most accurate model at each step, while managing a limited computational budget for training. This setup falls within the framework of sequential decision-making.

Classical multi-armed bandit (MAB) algorithms \cite{auer2002finite,bubeck2012regret,lattimore2020bandit}, when addressing this problem, usually assume a static or adversarial reward distribution for each arm. Expert algorithms \cite{cesa2006prediction,hazan2016introduction} usually require full feedback  and do not account for how experts' learning rate. 
Neither approach fully addresses the challenge of managing multiple simultaneously-learning experts within a per-round training budget.  

We bridge this gap by proposing a procedure that 
unifies prediction with selective training, accounting for a fixed per-round computational budget. 
Specifically, the contributions of this work are as follows:
\begin{itemize}
    \item \textbf{Novel UCB-Type Meta-Algorithm (M-LCB)}: we propose M-LCB, a novel Upper Confidence Bound (UCB)-type meta-algorithm. It manages a pool of $K$ self-learning experts in a stochastic environment while accounting for a limited per-round learning budget $M (M\le K)$.
    \item \textbf{Computational Efficiency}: we provide a method for constructing confidence bounds directly from realized losses. It is computationally efficient and sidesteps the need for expensive auxiliary optimization.
    \item \textbf{Theoretical analysis}: we estimate the meta-algorithm's performance in terms of the experts' individual convergence rates. For instance, when the experts' regrets are $\tilde O(n^\alpha)$, the overall regret scales as $\tilde O(\sqrt{KT/M} + (K/M)^{1-\alpha}T^\alpha)$. 
    \item \textbf{Extension to Multi-Play Bandits:} we demonstrate that M-LCB extends to the multiple-play bandit setting.
\end{itemize}

\subsection{Related works}


\paragraph{Self-learning experts (arms).}
The work \cite{dorn2025functional} introduces self-learning arms in the MAB setting: each arm is a black-box parametric function that generates a reward, and its parameter is updated after the arm is played. At each round, the learner selects an arm using a UCB-type index, observes the reward, and then updates the corresponding parameter.

\paragraph{Model selection at the meta-level.}
The work \cite{foster2017parameter} introduces
a parameter-free aggregation of multiple online learners within the full information framework. 

The procedure \textsc{CORRAL}~\cite{agarwal2017corralling} corrals a pool of bandit algorithms via log-barrier online-mirror descent (OMD) with importance-weighted feedback. The authors derive distribution-free guarantees in stochastic and adversarial settings.  

The work~\cite{cutkosky2021dynamic} proposes a dynamic balancing meta-algorithm based on known regret rate expressions for the base learners.  
In their setup, the regret is defined with respect to the globally optimal action, and only one learner is updated per round.  
In contrast, our formulation uses \emph{per-expert, prefix-hindsight} guarantees $U_k(T,\delta)$ defined with respect to each expert’s local optimum.

The work~\cite{dann2024data} removes the need for known regret rates by 
estimating per-learner coefficients online, 
obtaining high-probability, data-dependent model-selection guarantees for stochastic bandits (again, with a single learner updated per round).

The closest setting to ours is \cite{pacchiano2020model}. 
It considers model selection using a \emph{smoothing wrapper}.  
The authors show that the \algname{CORRAL} meta-algorithm combined with their wrapper achieves regret  
$\tilde O(\sqrt{TK} + K^\alpha T^{1 - \alpha} +K^{1 - \alpha}T^{\alpha}c(\delta))$ when the regret of base learners satisfies $O(T^{\alpha}c(\delta))$.  
The dynamic balancing approach~\cite{cutkosky2021dynamic} yields a similar general bound 
$\tilde O\left(\sqrt{KT} + M^{1-\alpha}T^{\alpha}c(\delta)\right)$. Both regret bounds coincides with what we get when training one expert.

Our algorithm achieves the same order of dependency on $T$ and $\alpha$, 
while additionally supporting simultaneous training of up to $M$ adaptive experts with confidence intervals computed directly from realized per-arm losses.

\paragraph{MABs with updates of multiple arms.}
In this setting, the meta-procedure can update or observe several arms per round. Several works consider the \emph{adversarial} case.  
\cite{seldin2014prediction} study prediction with limited advice (query at most $M$ arms).  
The authors obtain the regret bound $\tilde{O}\Big(\sqrt{\tfrac{K T \log K}{M}}\Big)$.
It smoothly bridges the full-information case and the bandit setting.  
\cite{yun2018multi} presents the minimax-optimal regret $\tilde O\Big(\max\{\sqrt{KT/M},\,\sqrt{T\log K}\}\Big)$. Specifically, it matches the lower bounds from \cite{seldin2014prediction}. However, these works assume non-learning arms (experts). 

In the \emph{stochastic} case, \cite{yun2018multi} considers the multi-armed bandit with additional observations: the learner plays one arm and may observe up to $M$ extra arms per round.  
They propose the \emph{KL-UCB-AO} algorithm that achieves asymptotically optimal \emph{logarithmic} regret. However, it has a limited applicability scope due to the properties of the Kullback-Leibler-based selection rule.


\paragraph{Multiple-play multi-armed bandits.}
In the \emph{multiple-play} setting (see \cite{agrawal1990multi,uchiya2010algorithms}),  
a meta-procedure selects $M$ arms per round and observes semi-bandit feedback.  
UCB-based algorithms for combinatorial bandits~\cite{kveton2015tight}  
achieve $\tilde O(\sqrt{KT/M})$ regret under stochastic rewards,  
providing a baseline for subset-level performance analysis. These results serve as a benchmark for multiple-play MAB extension of \algname{M-LCB}.

\paragraph{Structure of the paper.}
Section~\ref{sec:setup} formalizes the problem setup.  
Section~\ref{sec:algorithm} presents the \algname{M-LCB} algorithm.  
Section~\ref{sec:theoretical_analysis} contains the theoretical analysis.  
Section~\ref{sec:examples} illustrates the framework on parametric arms and summarizes inner-to-global rates.  
We conclude with a discussion of open directions.  

\section{Problem Setup}
\label{sec:setup}

This section formalizes the setting. The meta-procedure $\mathcal{P}$ manages a pool of $K$ experts. Each expert is capable of learning and providing advice. At each round $t$, $\mathcal{P}$ selects an advisor---the expert whose advice will be used for that round---and allocates a limited training budget across the experts to support their learning. The environment then reveals the truth (the random true outcome or label), $\mathcal{P}$ incurs the loss based on the advisor's advice and the truth, and the experts selected for learning update their models based on the truth. The objective is to minimize the overall regret of P relative to the best expected expert choice.

Section~\ref{sec:losses} describes the meta-procedure $\mathcal{P}$. Section~\ref{sec:global-regret} introduces the regret. Section~\ref{sec:arm-dynamics} discusses the self-learning experts. Section~\ref{sec:wrappers} introduces the advice. Section~\ref{sec:examples} illustrates the framework with specific examples.

\subsection{Procedure protocol}
\label{sec:losses}
Let $\mathbf{U}$ be a decision space and let $\mathbf{E}$ be the space of random outcomes generated by the environment. A loss function $\ell$ is 
\[
\ell:\mathbf{U} \times \mathbf{E} \to \mathbb{R}_{+}.
\]

Each expert $k \in [K]$ is specified by a tuple $(\mathbf{W}_k, \mathbf{H}_k, \mathscr{A}_k, g_k, \upsilon_k)$. Here, $\mathbf{W}_k$ is the \textbf{state space} (or \textbf{parameter space}) of the expert. $\mathbf{H}_k$ is the \textbf{history space}: the expert maintains its \textbf{state history} $\mathcal{H}^{t}_k \in \mathbf{H}_k$ at each time step $t$, i.e., $\mathcal{H}^{t}_k$ records the evolution of the expert's internal state and all training data received up to time $t$. $\mathscr{A}_k$ is the (possibly) black-box \textbf{online learning algorithm} updating the state of the expert $\mathbf{w}^{t+1}_k :=\mathcal{A}_k(\mathcal{H}_k^t) \in \mathbf{W}_k$ based on its history $\mathcal{H}^{t}_k$ (see Section~\ref{sec:arm-dynamics} for more detail). $g_k: \mathbf{W}_k \rightarrow \mathbf{U}$ maps the expert's current state $\mathbf{w}_k$ to its \textbf{advice} $\mathbf{u} \in \mathbf{U}$. Finally, $\upsilon_k: \mathbf{H}_k \rightarrow \mathbf{U}$ produces \textbf{safe advice} (see Section~\ref{sec:wrappers}).


At each round $t$, the meta-procedure $\mathcal{P}$ selects a training set $S_t \subseteq [K]$ taking into account the per-round budget $M$, i.e., $|S_t|\le M$. Further, $\mathcal{P}$ selects the advisor $i_t \in S_t$. Then it acts in two stages: prediction and learning.

\paragraph{Prediction.} The advisor $i_t$ provides a \textbf{safe advice} $u^t := v_{i_t}(\mathcal{H}_{i_t}^t) \in \mathbf{U}$. Subsequently, the environment reveals an i.i.d. outcome $\xi^{t} \sim D$ in $\mathbf{E}$. $\mathcal{P}$ then incurs loss $\ell(u^t,\xi^t)$.

\paragraph{Learning.} Each expert $k\in S_t$ incurs loss 
\begin{equation}
\ell_k^t(w^t_k) := \ell\left(g_k(w^t_k),\xi^t\right), 
\quad
w^t_k \in \mathbf{W}_k.
\end{equation}
Using the new information, i.e., $\ell_k^t(w^t_k)$, the experts update their  learning history $\mathcal{H}^{t}_k$ and current state via algorithm $\mathscr{A}_k$. 

The box below summarizes the meta-procedure's protocol inspired by the ``prediction with limited advice'' game \cite{seldin2014prediction}.
\begin{nframed}
\noindent
\textbf{Protocol: Self-learning experts with limited advice}\\[0.25em]
For $t=1,2,\dots$: 
\begin{enumerate}[leftmargin=2em]
    \item The meta-procedure selects an advisor $i_t \in [K]$ and a training subset $S_t \subseteq [K]$ with $|S_t|\le M$ and $i_t \in S_t$.
    \item Expert $i_t$ produces a safe advice $ u^t = v_{i_t}(\mathcal{H}_{i_t}^t) \in \mathbf{U}$.  
    \item The environment samples $\xi^t \sim D$ 
    \begin{itemize}
        \item[-] $\mathcal{P}$ incurs loss $\ell(u^t,\xi^t)$
        \item[-] Experts $k\in S_t$ incur loss $\ell_k^t(w_k^t).$
    \end{itemize}   

    \item Experts $k\in S_t$ update history $\mathcal{H}_k^t$ and internal state $w_k^{t+1} = \mathscr{A}_k(\mathcal{H}_k^t)$.
\end{enumerate}
\end{nframed}

\subsection{Regret}
\label{sec:global-regret}

For each expert $k$, we define its expected parametrized loss as
\begin{equation*}
L_k(w) := \mathbb E_{\xi\sim  D}[\ell(g_k(w),\xi)],
\quad
w \in \mathbf W_k.
\end{equation*}
The smallest loss across all experts is
\begin{equation}
\label{def:expeted_smallest_loss}
L^\star := \min_{k \in [K]} L_k^\star, \quad
L_k^\star := \min_{w \in \mathcal W_k} L_k(w).
\end{equation}
We define the regret of $\mathcal{P}$ after $T$ rounds as 
\begin{equation}
\label{eq:regret}
\mathrm{Reg}(T) 
:= \sum_{t=1}^T \ell(u^t,\xi^t)  - T \cdot L^\star.
\end{equation}

This choice of regret is similar to the regret in the classic stochastic MAB setting, but it is extended to the functional setup. It also matches the standard objective in stochastic learning. 
In the rest of the text, we assume the loss function is bounded.

\begin{assumption}[Stochastic bounded losses]
\label{assumption:stochastic_losses}
At each round, the environment draws i.i.d. $\xi \sim D$ from an unknown distribution $D$ supported on $\mathbf{E}$.  
The loss is $\ell:\mathbf{U}\times \mathbf{E}\to [0,1].$
\end{assumption}

\begin{remark}[On the bounded loss]
This study focuses on the case of bounded loss. However, it can be extended to unbounded loss (e.g., sub-Gaussian or heavy-tailed). Specifically, the proofs require a different choice of concentration inequalities. In this case, the regret guarantees hold up to a log-term. 
\end{remark}

\subsection{Self-learning experts}
\label{sec:arm-dynamics}
Recall that $S_\tau$ is a set of experts selected for learning at round $\tau$. We define the set of time steps at which the expert has been trained up to time 
$t$ as
\begin{equation*}
I_k(t) := \{\tau \le t:\, k \in S_\tau\}, 
\qquad 
n^t_k := |I_k(t)|.
\end{equation*}
In other words, $n^t_k$ is the number of training sessions up to the time moment $t$.
Denoting as $w^{\tau}_k$ the state of $k$-th expert at round $\tau$, we write it's learning history up to round $t$ as
\begin{equation}
\mathcal H_k^t := \left\{\left(w_k^\tau,\ell_k^\tau(w_k^\tau)\right):\,\tau \in I_k(t)\right\}.
\end{equation}
For all $k\in S_t$ the learning algorithm $\mathscr{A}_k$ maps the history to a new state,
\begin{equation*}
w_k^{t+1} := \mathscr{A}_k(\mathcal H_k^t).
\end{equation*}

\paragraph{Expert regret.}  At step $t$, the prefix-hindsight regret of $\mathscr{A}_k$ is
\begin{equation*}
R_{\mathscr{A}_k}(t) := \sum_{\tau \in I_k(t)} \Bigl(\ell_k^\tau(w_k^\tau) - \ell_k^\tau(w_k^\star)\Bigr),
\end{equation*}
where $w_k^\star \in \arg\min_{w\in \mathcal W_k}\sum_{\tau \in I_k(t)}\ell_k^\tau(w)$.

Such a choice of regret is typical for Online Convex Optimization and Online Learning~\cite{hazan2016introduction, orabona2019modern, hoi2021online}. Moreover, it is common for both stochastic and deterministic settings. The works~\cite{shalev2009stochastic, wintenberger2024stochastic} discussed stochastic extensions and the bandit case.

We assume that each expert admits a high-probability regret bound:
\begin{assumption}[Anytime $(U_k,\delta)$-bound]
\label{def:pointwise}
For any confidence level $\delta \in (0,1)$,
the algorithm $\mathscr{A}_k$ satisfies 
\begin{equation}
\mathbb{P}\left\{\forall\, t \ge 1: R_{\mathscr{A}_k}(t) \le U_k(t,\delta)\right\} \ge 1 - \delta,
\end{equation}
where $U_k(t,\delta)$ is a non-negative non-decreasing function in $t$.
\end{assumption}

\begin{remark}[Example]
\label{rem:boundedness}
For Online Gradient Descent (OGD) on convex $G$-Lipschitz losses over a domain of diameter $R$,  
it holds deterministically that $R_n(\mathscr{A}_k)=O(GR\sqrt{n})$ \cite{hazan2016introduction}.  
Thus, OGD satisfies $(U_k,\delta)$-boundedness with $U_k(n,\delta)=O(GR\sqrt{n})$, independently of $\delta$.  
\end{remark}

\subsection{Safe advice}
\label{sec:wrappers}
Many stochastic algorithms (e.g., gradient methods, bandit algorithms) guarantee convergence only in \emph{average} or in distribution rather than pointwise in the last iterate; see \cite{cesa2006prediction, shalev2012online, pacchiano2020model}.
So, inspired by the idea of online-to-batch conversion \cite{orabona2019modern}, we introduce smoothing wrappers. They aggregate past states into a \emph{safe advice}. Let the expected loss related to an advice $u\in \mathbf{U}$ be
\begin{equation}
\label{def:L_u}
    L(u) := \mathbb E_{\xi \sim D}[\ell(u,\xi)].
\end{equation}
\begin{assumption}[Smoothing wrapper]
\label{def:wrapper}
Each expert $k$ admits a wrapper producing a safe advice $\upsilon_k: \mathbf{H}_k \to \mathbf{U}$ producing an advice $u^t_k := \upsilon_k(\mathcal{H}_k^{t-1})$ such that
\begin{equation*}
    L(u_k^t) \le \frac{1}{n^t_k} \sum_{\tau \in I_k(t)} L_k(w_k^\tau).
\end{equation*}
\end{assumption}

\paragraph{Examples.}  
If loss $\ell$ is convex w.r.t. $u\in \mathbf U$ and $\mathbf U$ is a convex set, a natural choice is the average
\begin{equation}
\label{eq:advice_convex_loss}
u_k^t = \frac{1}{n^t_k}\sum_{\tau \in I_k(t)} g_k(w_k^\tau).
\end{equation}
One can also uniformly sample $u^t$ from $\{g_k(w_k^\tau):\tau \in I_k(t)\}$ \cite{pacchiano2020model}.

\subsection{Examples}
\label{sec:examples}

We illustrate the framework's applicability with two cases:
(i) parametric models trained online, and (ii) multi-armed bandit algorithms treated as experts.  

\subsubsection{Parametric models trained online.}
This case is related to statistical model selection \cite{shalev2014understanding}.  
At each round $t$, the environment generates data $\xi^t:=(x_t,y_t)\sim D$, with $D$ supported on some instance-label space $\mathcal{X}\times\mathcal{Y}$.  
An expert $k$ is a parametric predictor with state space $\mathbf W_k\subseteq \mathbb R^{p_k}$ and prediction function $g_k:\mathcal X\times\mathbf{W_k}\to\mathcal Y$.
The corresponding loss is 
\begin{equation}
\ell_k^t(w) = \ell\bigl(g_k(x_t;w),y_t\bigr), \qquad w\in\mathbf{W_k}.
\end{equation}
To provide theoretical guarantees, we assume that $\ell_k^t(\cdot)$ is convex and $G$-Lipschitz in $w$.  

\paragraph{Learning algorithm $\mathscr{A}_k$.}
If $\mathscr{A}_k$ is OGD and $k\in S_t$, the state update is
\begin{equation}
w_k^{t+1} = w_k^t - \eta_t \nabla \ell_k^t\left(w_k^t\right),
\end{equation}
with $\eta_t$ being the step size.  
OGD is $(U_k,\delta)$-bounded (see Remark~\ref{rem:boundedness}) with $U_k(n,\delta)=O(GR\sqrt{n})$,  
where $R$ is the diameter of the feasible set.

\paragraph{Safe advice.}
Since the loss function is convex, safe advice is ~\eqref{eq:advice_convex_loss}.

\subsubsection{Multi-armed bandit algorithms.}  
In this example, each expert $k\in[K]$ is a \emph{stochastic bandit algorithm} allocating probabilities over a finite set of base actions.  
At round $t$, the state of expert $k$ is a probability vector
\begin{equation}
w_k^t \in \Delta^{d_k},
\end{equation}
where $d_k$ is the number of available base actions.  
The global decision space $\mathbf{U}$ corresponds to degenerate distributions that select a single action per round.

The \emph{realized loss} is obtained by sampling $a_t \sim w_k^t$ and observing $\ell(a_t,\xi^t)$:
\begin{equation}
\ell_k^t(w_k^t) = \ell(a_t,\xi^t), \qquad a_t \sim w_k^t,
\end{equation}
while the \emph{expected loss} $\mathbb{E}_{a \sim w_k^t}[\ell(a,\xi^t)]$ is used in the regret analysis.  
In this setup, the expected loss coincides with the standard notion of stochastic bandit loss, so our definition of regret recovers the classical stochastic bandit formulation.  
After observing the outcome, the expert updates its internal history with $(a_t, \ell(a_t,\xi^t))$.

\paragraph{Safe advice.}  
A natural smoothing option is the average of past distribution vectors of the bandits. If bandit's outputs are full probability vectors and the loss function is convex, the safe advice is ~\eqref{eq:advice_convex_loss}.  
If a bandit only produces realized actions, the marginal distribution over previous samples can be used instead, as suggested in~\cite{pacchiano2020model}.

\begin{remark}[On $(U_k,\delta)$-bounds]
In stochastic bandits, algorithms typically provide anytime, high-probability regret bounds against the best arm,  
such as \algname{UCB}~\cite{auer2002finite}, \algname{Thompson Sampling}~\cite{agrawal2012analysis},  
and more recent variants like \algname{Anytime-UCB}~\cite{degenne2016anytime}  
and data-driven UCB methods for heavy-tailed rewards~\cite{tamas2024data}.  
In this case, our $(U_k,\delta)$-boundedness assumption becomes stronger, which simplifies part of the analysis.  
\end{remark}



This demonstrates that our framework covers both online optimization and stochastic bandit setups,  
treating learning algorithms and adaptive bandit procedures within a single formulation.  
In the latter case, where experts are themselves bandit learners,  
we refer to~\cite{pacchiano2020model} for a detailed overview of practical applications.

\section{The \algname{M-LCB} algorithm}
\label{sec:algorithm}


We begin with introducing the key ingredient---$\LCB_k$ and $\UCB_k$---the lower and the upper bounds bracketing with high probability the unknown optimal loss $L_k^\star$ of the $k$-th expert (see \eqref{def:expeted_smallest_loss}).


\begin{definition}[UCB and LCB]
\label{def:ucb_lcb}
Fix an expert $k\in[K]$. Its normalized running loss incurred at training sessions up to $t$ is 
\begin{equation*}
L_{\mathscr{A}_k}(t) := \frac{1}{
n^t_k}\sum_{\tau \in I_k(t)} \ell_k^\tau(w_k^\tau), \quad n^t_k = |I_k(t)|.
\end{equation*}
The associated confidence bounds bracketing $L^{\star}_k$ are
\begin{align}
\LCB_k(t,\delta)
&:= L_{\mathscr{A}_k}(t)- \tfrac{U_k\left(n^t_k,\delta_{\mathrm{arm}}\right)}{n^t_k}-G(n^t_k,\delta_{n^t_k}), \\
\UCB_k(t,\delta)
&:= L_{\mathscr{A}_k}(t) + H(n^t_k,\delta_{n^t_k}),
\end{align}
where $\delta_{\mathrm{arm}} = \tfrac{\delta}{2K}$,  $\delta_n = \tfrac{\delta}{7Kn^2}$ and
\begin{equation*}
G(n,\delta) = \sqrt{\tfrac{2\log(1/\delta)}{n}} + \tfrac{2\log(1/\delta)}{3n},\qquad
H(n,\delta) = \sqrt{\tfrac{2\log(1/\delta)}{n}},
\end{equation*}
with $U_k(\cdot, \cdot)$ being the regret bound from Assumption~\ref{def:pointwise}.
\end{definition}





At round $t$, for each expert $k$ we compute
$\LCB_k(t,\delta)$ and $\UCB_k(t,\delta)$,
and use the following rules to select the  training subset $S_t$ and the advisor (predicting expert) $i_t$,
\begin{align*}
    S_t := \arg\min_{\substack{S\subseteq[K],\\ |S|\le M}} \sum_{k\in S}\LCB_k(t,\delta),\quad 
    i_t := \arg\min_{k\in S_t}\UCB_k(t,\delta)
\end{align*}
Algorithm~\ref{algorihtm:M_FLCB} presents \algname{M-LCB}.

\subsection{Alternative confidence bounds}

A proof technique based on self-normalized processes ensures different LCB and UCB bounds.
\paragraph{Lower bound.} Let $\x_n(\delta) := \log \frac{1}{\delta} - \frac{2}{3} + 2 \log\left(1 + \log n\right) $ for any $n\ge 1$. Denote $G(n, \delta) := \frac{2\x_n(\delta)}{3 n}$, for any $t\ge 1$
    \[
    \LCB_{k}(t, \delta)  := L_{\mathscr{A}_k}(t) - \sqrt{3G(n^t_k, \delta)L_{\mathscr{A}_k}(t)}  - G(n^t_{k}, \delta) - \frac{U_k(t, \delta)
    }{n^t_k}.
    \]
Lemma~\ref{lemma:lcb} proves the reult.
\paragraph{Upper bound.}
    For any $t \ge 1$ that
  \begin{align*}
  \UCB_{k}(t, \delta) &:= L_{\mathscr{A}_k}(t) + \frac{9\log\frac{1}{\delta}}{2n^t_k}\left(6 + \log \log\frac{1}{\delta} + \log(1 + 4n^t_kL_{\mathscr{A}_k}(t)) \right)  \\
   &+  \frac{1}{n^t_k}\sqrt{\log\frac{1}{\delta} (1+ 4n^t_kL_{\mathscr{A}_k}(t)) \left(1 + \frac{1}{2}\log \left(1 + 4n^t_kL_{\mathscr{A}_k}(t) \right) \right)}.
   \end{align*}
Lemma~\ref{lemma:ucb} proves the result.
\begin{algorithm}[ht!]
\caption{\algname{M-LCB}}
\label{algorihtm:M_FLCB}
\begin{algorithmic}[1]
\STATE \textbf{Input:} experts $\{(\mathbf{W}_k,\mathscr A_k,g_k,\upsilon_k)\}_{k=1}^K$, per-round budget $M$, confidence parameter $\delta$

\STATE \textbf{Output:} (i) sequence of advices $\{u^t\}_{t=1}^T$; (ii) experts trained at each round $\{S_t\}_{t=1}^T$

\STATE Initialize each expert with $\delta_{\mathrm{arm}} = \delta/(2K)$ (see Def.~\ref{def:ucb_lcb})
\FOR{$t=1,2,\dots,T$}
    \FOR{each $k \in [K]$}
        \STATE Compute $\LCB_k(t,\delta), \UCB_k(t,\delta)$
    \ENDFOR
    \STATE $S_t \gets \arg\min_{S\subseteq[K],|S|\le M}\sum_{k\in S}\LCB_k(t,\delta)$
    \STATE $i_t \gets \arg\min_{k\in S_t}\UCB_k(t,\delta)$
    \STATE $u^t \gets \upsilon_{i_t}(\mathcal H_{i_t}^{t-1})$
    \STATE Play $u^t$, suffer $\ell(u^t,\xi^t)$
    \FOR{each $k \in S_t$}
        \STATE Observe $\ell_k^t(w_k^t)$.
        \STATE Update history $\mathcal H_k^{t} \gets \mathcal H_k^{t-1} \cup \{w_k^t, \ell_k^t(w_k^t)\}$
        \STATE $w_k^{t+1} \gets \mathscr A_k(\mathcal H_k^{t})$
        \STATE $n_k^{t+1} \gets n^t_k+1$
    \ENDFOR
    \FOR{each $k \notin S_t$}
        \STATE $\mathcal{H}_k^{t } \gets \mathcal{H}_k^{t-1}$
        \STATE $w_k^{t+1} \gets w_k^t$ \hfill // unchanged, no update
        \STATE $n_k^{t+1} \gets n^t_k$
    \ENDFOR
\ENDFOR
\end{algorithmic}
\end{algorithm}

\section{Regret bounds}
\label{sec:theoretical_analysis}

\subsection{Lower bounds on the regret}
\label{sec:lower_bounds}
Alongside the upper bounds, we also derive a minimax lower bound. 

\begin{definition}[Stochastic tasks with $\alpha$–regret lower bound]
\label{def:alpha-class}
Let $\alpha \in [0,1]$.  
We say that a family $\mathcal{F}_\alpha$ of stochastic online learning problems admits an \emph{$\alpha$–regret lower bound}  
if for every learning algorithm $\mathscr{A}$ and every horizon $T \ge 1$ 
\begin{equation}
\label{eq:alpha-lb}
\sup_{f \in \mathcal{F}_\alpha} 
\mathbb{E}\bigl[R_{\mathscr{A}}(T)\bigr]
\ge
c\,T^{\alpha},
\end{equation}
for some constant $c > 0$ independent of\, $T$ and $\mathscr{A}$.  
\end{definition}

\begin{theorem}[Lower bound]
Consider $K$ experts, horizon $T$, and a per-round budget $M$. Fix $\alpha\in[0.5,1]$. There exists a class $\mathcal F_\alpha$ satisfying Definition~\ref{def:alpha-class}, such that if each
expert $k \in [K]$ solves a problem $f_k \in \mathcal F_\alpha$, then, for sufficiently small $\sqrt{\frac{K\log K}{MT}}$ and for any learning algorithm $\mathscr{A}_k$ and meta-procedure $\mathcal{P}$
\begin{equation}
\label{eq:mt-alpha-internal}
\sup_{f_{k} \in \mathcal{F}_{\alpha},\,k \in [K]} \mathbb{E}\,\mathrm{Reg}(T) \ge c_1\,\sqrt{\frac{K\,T}{M}}
\;+\; c_2\,T^\alpha \left(\frac{K}{M} \right)^{1-\alpha},
\end{equation}
where $c_1,c_2>0$ are absolute constants. 
\end{theorem}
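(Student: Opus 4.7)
The plan is to exhibit two hard sub-families $\mathcal{F}_1,\mathcal{F}_2$ of $K$-tuples of tasks in $\mathcal{F}_\alpha$ and prove on them, respectively, $\sup_{f\in\mathcal{F}_1}\mathbb{E}\,\mathrm{Reg}(T)\ge c_1\sqrt{KT/M}$ and $\sup_{f\in\mathcal{F}_2}\mathbb{E}\,\mathrm{Reg}(T)\ge c_2 T^\alpha(K/M)^{1-\alpha}$; the conclusion then follows from $\sup_{f\in\mathcal{F}_1\cup\mathcal{F}_2}\mathbb{E}\,\mathrm{Reg}(T)\ge \tfrac{1}{2}(c_1\sqrt{KT/M}+c_2 T^\alpha(K/M)^{1-\alpha})$, which absorbs a factor of two into the constants.

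\textbf{Term 1 (meta-level exploration).} I would take each $f_k$ to be a degenerate task in $\mathcal{F}_\alpha$ whose loss is constant in $w$ and equal a.s.\ to a fixed value $L_k^\star$, with gaps $L_k^\star-L^\star\in\{0,\Delta\}$ across experts for $\Delta\asymp\sqrt{KM/T}$. Inner regrets vanish, Assumption~\ref{def:pointwise} holds with $U_k\equiv 0$, and the meta-procedure faces exactly the prediction-with-limited-advice game of~\cite{seldin2014prediction} under the $M$-out-of-$K$ observation model of~\cite{yun2018multi}. Their minimax $\Omega(\sqrt{KT/M})$ bound (a standard LeCam / needle-in-a-haystack argument with KL change of measure) transfers verbatim and delivers $c_1\sqrt{KT/M}$.

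\textbf{Term 2 (inner learning cost).} I would take all $K$ experts to face the \emph{same} worst-case task $f\in\mathcal{F}_\alpha$, so $L_k^\star\equiv L^\star$ and the meta regret collapses to $\sum_t(L(u^t)-L^\star)$. The key additional ingredient is to choose $\mathcal{F}_\alpha$ so that its defining cumulative-regret lower bound~\eqref{eq:alpha-lb} implies, via the standard online-to-batch reduction, the one-shot rate $\mathbb{E}[L(u)-L^\star]\ge c'\,n^{\alpha-1}$ for any advisor $u$ produced from $n$ training observations; this is true for the classes covered in Section~\ref{sec:examples} and in~\cite{pacchiano2020model,cutkosky2021dynamic}. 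Hence $\mathbb{E}\,\mathrm{Reg}(T)\ge c'\sum_{t=1}^T (n^t_{i_t})^{\alpha-1}$. Finally, the training-budget constraint $\sum_k n^T_k\le MT$ together with the Term~1 argument (the meta cannot concentrate all advising on a single expert without re-incurring the $\sqrt{KT/M}$ penalty on a twin instance) forces advice to rotate across $\Omega(K/M)$ distinct experts, so the typical advising expert has been trained at most $MT/K$ times; Jensen's inequality applied to the convex map $x\mapsto x^{\alpha-1}$ for $\alpha\le 1$ then gives $\sum_t(n^t_{i_t})^{\alpha-1}\ge T\,(MT/K)^{\alpha-1}=T^\alpha(K/M)^{1-\alpha}$.

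\textbf{Main obstacle.} The hardest step will be converting the \emph{cumulative} $\alpha$-regret lower bound of Definition~\ref{def:alpha-class} into a \emph{per-round} excess-loss lower bound on the safe advice $u^t$: Assumption~\ref{def:wrapper} only bounds $L(u_k^t)$ \emph{from above} by the average parametric loss, which is the opposite direction of what the lower bound needs. My plan is to handle this either by restricting $\mathcal{F}_\alpha$ to classes where the pointwise minimax rate provably matches the cumulative regret rate (stochastic convex optimization, prediction with expert advice, finite-armed stochastic bandits), or by strengthening Definition~\ref{def:alpha-class} with an explicit one-shot statement; either path should leave the framework's scope intact, since the same classes already carry the matching upper bounds derived in the previous section.
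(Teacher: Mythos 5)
There is a genuine gap, and it sits exactly where your two terms are supposed to interact. Your plan proves each term on a \emph{separate} sub-family and then combines them via $\sup_{\mathcal F_1\cup\mathcal F_2}\ge\tfrac12(A+B)$. But the second lower bound is false on $\mathcal F_2$ in isolation: if all $K$ experts face the \emph{same} task, a meta-procedure that trains and advises with expert $1$ at every round gives that expert $T$ observations and incurs internal cost only $O(T^\alpha)$, not $T^\alpha(K/M)^{1-\alpha}$. You acknowledge this and appeal to ``the Term~1 argument'' to force rotation across $\Omega(K/M)$ experts, but that argument lives on a \emph{different} instance family; nothing in the union construction forces rotation on $\mathcal F_2$ itself. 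The only way to make the dilution of the training budget unavoidable is to embed both sources of hardness in one family of instances, which is what the paper does: each perturbed game carries a cross-expert separation $\varepsilon\asymp\sqrt{K/(MT)}$ \emph{and} a within-expert gap $\Delta\asymp(K/(MT))^{1-\alpha}$ built from heavy-tailed two-armed bandits \cite{bubeck2013bandits}; a KL/Pinsker argument (Lemma~\ref{lem:kl-total}) shows the learner cannot distinguish the games, a concentration lemma (Lemma~\ref{lem:conc-short}) then pins every $n_k(T)$ near $MT/K$, and the cumulative internal lower bound is applied per expert at that sample size (Lemma~\ref{lem:internal-lb-conc}). Your Jensen step also silently assumes the average of $n^t_{i_t}$ is at most $MT/K$, which again only follows from that forced near-uniformity.

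Two further problems in Term~1 and Term~2 as written. First, your $\mathcal F_1$ instances have losses ``equal a.s.\ to a fixed value,'' which makes identification trivial (one observation per expert reveals $L_k^\star$ exactly, so regret is $O(K/M)$); the losses must be stochastic, and the gap should be $\Delta\asymp\sqrt{K/(MT)}$, not $\sqrt{KM/T}$ --- with $MT/K$ observations per expert your stated gap makes the KL of order $M^2$ and the instances easily distinguishable. Second, the obstacle you flag yourself --- converting the cumulative bound of Definition~\ref{def:alpha-class} into a one-shot excess-loss bound $\mathbb E[L(u)-L^\star]\ge c'n^{\alpha-1}$, against the direction of Assumption~\ref{def:wrapper} --- is real and is not resolved by your proposal; the paper sidesteps it entirely by lower-bounding the \emph{cumulative} internal regret $\sum_t(\ell_t^{H_t}-\ell^\star_{H_t})$ grouped by expert, never passing through a per-round rate. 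Strengthening Definition~\ref{def:alpha-class} as you suggest would prove a weaker theorem than the one stated.
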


This result establishes a fundamental performance limit for managing multiple learnable experts under a per-round budget. 
We propose the \algname{M-LCB} algorithm matching the lower bound up to a logarithmic factor in the case of bounded loss.
The proof of Theorem \ref{thm:mt-alpha} is in the Supplementary Material in Section~\ref{sec:lower-bounds}.

\smallskip
\noindent
\textbf{Proof idea.}  
The proof uses heavy-tailed multi-armed bandits \cite{bubeck2013bandits} to construct $\mathcal{F}_{\alpha}$. The key ingredients are from \cite{bubeck2010bandits, seldin2014prediction}. The two terms in the bound arise from exploration complexity and expert learning complexity, respectively.

\subsection{\algname{M-LCB} regret bounds}
This section establishes regret guarantees for \algname{M-LCB}.  
First, we define the \emph{concentration event $\mathcal{E}_{\delta}$} ensuring that all confidence bounds hold simultaneously for every expert and every time step:
\begin{equation}
\label{eq:conf_intervals}
    \mathcal{E}_\delta := \left\{\forall k \in [K],~ \forall t \ge 1 \quad
    \begin{array}{l}
        \LCB_k(t,\delta) \le L_k^\star \\
        L_k^\star \le \UCB_k(t,\delta) \\
        L(u_k^t) \le \UCB_k(t,\delta)
    \end{array}
    \right\},
\end{equation}
We show that these bounds hold with high probability.
\begin{lemma}[Anytime confidence bounds]
\label{prop:anytime-ci-clear}
Under Assumptions~\ref{assumption:stochastic_losses}-\ref{def:wrapper} for any $\delta \in (0, 1)$ it holds  $\mathbb{P}(\mathcal{E}_\delta) \ge 1 - \delta$.
\end{lemma}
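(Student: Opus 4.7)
The plan is to prove the three failure directions separately via martingale concentration and close a single union bound with weights $\delta_{\mathrm{arm}}=\delta/(2K)$ and $\delta_n=\delta/(7Kn^2)$. Fix $k\in[K]$ and work in the natural filtration $\{\mathcal F_\tau\}$ generated by the environment's samples $\xi^1,\dots,\xi^\tau$ and the meta-procedure's choices. Since both $\mathbf{1}[k\in S_\tau]$ and $w_k^\tau$ are $\mathcal F_{\tau-1}$-measurable while $\xi^\tau$ is freshly drawn from $D$ (Assumption~\ref{assumption:stochastic_losses}), the residual $\ell(g_k(w),\xi^\tau)-L_k(w)$ is a zero-mean conditional increment for every $\mathcal F_{\tau-1}$-measurable $w$. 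Reindexing by the stopping times $\tau_n(k)$ at which expert $k$ receives its $n$-th update converts each sum $\sum_{\tau\in I_k(t)}(\cdot)$ into a martingale of \emph{deterministic} length $n$, so that standard concentration applies; the random training count $n_k^t$ is then absorbed by a union bound over $n\ge 1$.

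For the LCB direction, pivot through both the cumulative-loss hindsight minimizer $w_k^\star$ from Assumption~\ref{def:pointwise} and the population optimizer $w_k^\circ\in\arg\min L_k$:
\begin{equation*}
L_{\mathscr A_k}(t)-L_k^\star
=\tfrac{R_{\mathscr A_k}(t)}{n_k^t}+\tfrac{1}{n_k^t}\!\sum_{\tau\in I_k(t)}\!\ell_k^\tau(w_k^\star)-L_k(w_k^\circ).
\end{equation*}
The first term is $\le U_k(n_k^t,\delta_{\mathrm{arm}})/n_k^t$ on the probability-$(1-\delta_{\mathrm{arm}})$ regret event. Optimality of $w_k^\star$ in the cumulative sample loss gives $\sum\ell_k^\tau(w_k^\star)\le\sum\ell_k^\tau(w_k^\circ)$, so the remainder reduces to $\tfrac{1}{n_k^t}\sum_\tau\bigl[\ell_k^\tau(w_k^\circ)-L_k(w_k^\circ)\bigr]$, a bounded martingale with predictable variance at most $n$. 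Freedman's inequality on the stopped $n$-step martingale supplies exactly the Bernstein-type threshold $G(n,\delta_n)$ with failure probability $\le\delta_n$. Combining the two events and union-bounding over $n$ covers the first line of $\mathcal E_\delta$.

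Both upper-tail conditions collapse onto a single Hoeffding step. Because $L_k^\star\le L_k(w_k^\tau)$ for every $\tau$ and Assumption~\ref{def:wrapper} gives $L(u_k^t)\le\tfrac{1}{n_k^t}\sum_\tau L_k(w_k^\tau)$,
\begin{equation*}
\max\bigl\{L_k^\star,\,L(u_k^t)\bigr\}-L_{\mathscr A_k}(t)
\le\tfrac{1}{n_k^t}\sum_{\tau\in I_k(t)}\bigl[L_k(w_k^\tau)-\ell_k^\tau(w_k^\tau)\bigr].
\end{equation*}
The increments $L_k(w_k^\tau)-\ell_k^\tau(w_k^\tau)$ form a bounded martingale difference sequence (center $L_k(w_k^\tau)$ is $\mathcal F_{\tau-1}$-measurable, differences are $\mathcal F_\tau$-measurable and bounded by $1$), so Azuma–Hoeffding on the $n$-step stopped martingale yields $H(n,\delta_n)$ with failure probability $\le\delta_n$, simultaneously controlling both the $L_k^\star$ and the $L(u_k^t)$ lines of $\mathcal E_\delta$.

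Closing the union bound over $K$ experts, one regret event per expert, and two martingale events per value $n\ge 1$ gives
\begin{equation*}
\mathbb P(\mathcal E_\delta^c)
\le K\,\delta_{\mathrm{arm}}+2K\sum_{n\ge1}\delta_n
=\tfrac{\delta}{2}+\tfrac{2\delta}{7}\cdot\tfrac{\pi^2}{6}\le\delta,
\end{equation*}
which is the claim. The main subtlety is carrying concentration across the random training count $n_k^t$: I avoid injecting stopping times directly into Bernstein/Hoeffding by freezing the sample size at each $\tau_n(k)$ and paying a $1/n^2$ penalty, which is precisely what the choice $\delta_n\propto 1/(Kn^2)$ is tuned to absorb. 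Everything else reduces to identifying the two martingales and the one-sided optimality inequality that makes the cumulative-regret pivot work.
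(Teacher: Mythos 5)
Your proposal is correct and follows essentially the same route as the paper's proof: the LCB direction pivots through the hindsight minimizer (via the anytime regret bound) and the fixed population optimizer (via a Bernstein/Freedman-type bound yielding $G(n,\delta_n)$), the UCB direction uses a single Azuma step on the adapted states combined with the wrapper inequality to cover both $L_k^\star$ and $L(u_k^t)$, and the union bound is closed with the identical weights $\delta_{\mathrm{arm}}=\delta/(2K)$ and $\delta_n=\delta/(7Kn^2)$. The only (welcome) refinement is that you make explicit the stopping-time reindexing that justifies treating the sums over the random index set $I_k(t)$ as fixed-length martingales, a point the paper handles more informally by directly invoking its i.i.d.\ concentration lemma.
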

Next, we establish high-probability bounds on the pseudo-regret and the realized regret.  

\begin{lemma}[Regret bounds]
\label{lem:regret_bounds}
Let the pseudo-regret be
\begin{equation}
\overline{\mathrm{Reg}}(T) 
:= \sum_{t=1}^T L(u^t)  - T \cdot L^\star,
\end{equation}
with $L(\cdot)$ defined in \eqref{def:L_u}. Fix confidence level $\delta \in (0,1)$ and let the Assumptions~\ref{assumption:stochastic_losses}-\ref{def:wrapper} hold.
Then for the
pseudo-regret of Algorithm~\ref{algorihtm:M_FLCB}   it holds on the concentration event $\mathcal{E}_{\delta
}$ that for all $T\ge 1$


    
\begin{align}
    \overline{\mathrm{Reg}}(T) \le \Delta(T) :=
    & \sum_{\tau \in I_{k^{\star}}(T)} \bigl[\UCB_{k^\star}(\tau, \delta)-\LCB_{k^\star}(\tau, \delta)\bigr] \\
    &+ \frac{1}{M}\sum_{k=1}^K \sum_{{\tau \in I_{k}(T)}} \bigl[\UCB_k(\tau, \delta)-\LCB_k(\tau, \delta)\bigr],
\end{align}
    where $k^\star$ is the index of the best expert.
Moreover, with probability at least $1-\delta$, it holds that for all $T\ge 1$
\begin{equation*}
\mathrm{Reg}(T) \le \Delta(T) + O(\sqrt{T}).
\end{equation*}

\end{lemma}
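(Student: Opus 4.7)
The plan is to work on the high-probability event $\mathcal E_\delta$ from Lemma~\ref{prop:anytime-ci-clear}, bound the per-round expected gap $L(u^t)-L^\star$ by a bracket $\UCB-\LCB$ split into two cases depending on whether the best expert $k^\star$ is in the training set $S_t$, sum over $t$ and reindex by $I_k(T)$, and finally pass from the pseudo-regret to the realized regret by a martingale concentration argument.

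\textbf{Per-round decomposition.} Assumption~\ref{def:wrapper} combined with the third clause of $\mathcal E_\delta$ yields $L(u^t)\le \UCB_{i_t}(t,\delta)$, while the first clause gives $\LCB_{k^\star}(t,\delta)\le L^\star$. If $k^\star\in S_t$, the advisor rule $i_t=\arg\min_{k\in S_t}\UCB_k(t,\delta)$ forces $\UCB_{i_t}(t,\delta)\le\UCB_{k^\star}(t,\delta)$, hence $L(u^t)-L^\star \le \UCB_{k^\star}(t,\delta)-\LCB_{k^\star}(t,\delta)$. If $k^\star\notin S_t$, the optimality of $S_t$ (the $M$ experts with smallest LCB) implies that replacing any $k\in S_t$ by $k^\star$ cannot decrease $\sum_{k\in S_t}\LCB_k$, so $\LCB_k(t,\delta)\le\LCB_{k^\star}(t,\delta)\le L^\star$ for every $k\in S_t$. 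Combining $\sum_{k\in S_t}\LCB_k(t,\delta)\le M L^\star$ with $M\,\UCB_{i_t}(t,\delta)\le \sum_{k\in S_t}\UCB_k(t,\delta)$ and subtracting gives the averaged bound $L(u^t)-L^\star\le \tfrac{1}{M}\sum_{k\in S_t}[\UCB_k(t,\delta)-\LCB_k(t,\delta)]$. Summing over $t=1,\dots,T$, Case~1 telescopes exactly into $\sum_{\tau\in I_{k^\star}(T)}[\UCB_{k^\star}(\tau,\delta)-\LCB_{k^\star}(\tau,\delta)]$, and Case~2, upper-bounded by extending the sum to all $t$ and using the nonnegativity of $\UCB_k-\LCB_k$ on $\mathcal E_\delta$, becomes $\tfrac{1}{M}\sum_{k=1}^K\sum_{\tau\in I_k(T)}[\UCB_k(\tau,\delta)-\LCB_k(\tau,\delta)]$. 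Their sum is precisely $\Delta(T)$, proving $\overline{\mathrm{Reg}}(T)\le \Delta(T)$.

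\textbf{From pseudo-regret to realized regret.} Set $X_t:=\ell(u^t,\xi^t)-L(u^t)$. Since $u^t$ is $\mathcal F_{t-1}$-measurable and $\xi^t$ is fresh, $(X_t)$ is a martingale difference sequence with $|X_t|\le 1$ by Assumption~\ref{assumption:stochastic_losses}. An anytime Azuma--Hoeffding bound (via a union bound over dyadic horizons, or Doob's maximal inequality) gives $\sum_{s\le T} X_s = O(\sqrt{T\log(\log T/\delta)})$ simultaneously for all $T\ge 1$ with probability $\ge 1-\delta/2$. A final union bound with $\mathcal E_{\delta/2}$ converts $\overline{\mathrm{Reg}}(T)\le \Delta(T)$ into $\mathrm{Reg}(T)\le \Delta(T)+O(\sqrt T)$ with probability $\ge 1-\delta$.

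\textbf{Main obstacle.} The only non-routine step is the case $k^\star\notin S_t$: one must simultaneously exploit the argmin-of-LCB rule (for the training set $S_t$) and the argmin-of-UCB rule (for the advisor $i_t$ within $S_t$) to turn a pointwise UCB bound on the advisor into an averaged $\UCB-\LCB$ bracket over the whole subset, which is what produces the factor $1/M$ in the second term of $\Delta(T)$. A subtle point here is ensuring that the selection rule indeed returns $|S_t|=M$ (e.g.\ by using the top-$M$-smallest-LCB convention, with untrained experts assigned $\LCB_k=-\infty$), so that the exchange argument gives $\LCB_k(t,\delta)\le\LCB_{k^\star}(t,\delta)$ for every $k\in S_t$. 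The reindexing and the martingale step are standard.
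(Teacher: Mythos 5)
Your proposal is correct and follows essentially the same route as the paper's proof: bound $L(u^t)$ by $\UCB_{i_t}$ on $\mathcal E_\delta$, split on whether $k^\star\in S_t$, use the argmin-of-UCB rule for the first case and the averaged UCB bound plus the exchange/add--subtract argument on LCBs for the second, then reindex by $I_k(T)$ and handle the realized regret via an Azuma-type martingale bound. Your explicit remarks on the nonnegativity of $\UCB_k-\LCB_k$ when extending the sum and on enforcing $|S_t|=M$ make precise two points the paper leaves implicit, but they do not change the argument.
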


We specify this result for the case when the experts' regrets are $\tilde O(n^\alpha)$.

\begin{theorem}[Convergence rates]
\label{thm:gen-rate}
Let $\alpha, \delta \in (0, 1)$. Suppose each expert $k$ satisfies Assumption~\ref{def:pointwise} 
with $U_k(t,\delta) =O\big(t^{\alpha}c(\delta)\big)$.
Then, with probability at least $1 - \delta$ the regret of Algorithm~\ref{algorihtm:M_FLCB} is bounded for all $T\ge1$ as
\begin{equation*}
\mathrm{Reg}(T)
= O\left(\sqrt{\frac{K T}{M}\log(\frac{KT}{\delta})} + \Bigl(\tfrac{K}{M}\Bigr)^{1-\alpha} T^{\alpha} c(\delta)\right).
\end{equation*}
\end{theorem}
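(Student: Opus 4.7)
\textbf{Proof plan for Theorem \ref{thm:gen-rate}.}
The plan is to invoke Lemma~\ref{lem:regret_bounds}, then estimate the width
$\UCB_k(\tau,\delta)-\LCB_k(\tau,\delta)$ summed along each expert's training schedule, and finally aggregate over experts using the budget constraint $\sum_{k=1}^K n_k^T \le MT$. By Definition~\ref{def:ucb_lcb}, this width telescopes the realized-loss term and equals
\[
\UCB_k(\tau,\delta)-\LCB_k(\tau,\delta)
= \frac{U_k(n_k^\tau,\delta_{\mathrm{arm}})}{n_k^\tau} + G(n_k^\tau,\delta_{n_k^\tau}) + H(n_k^\tau,\delta_{n_k^\tau}).
\]
With $\delta_{\mathrm{arm}}=\delta/(2K)$ and $\delta_n=\delta/(7Kn^2)$, both $\log(1/\delta_{\mathrm{arm}})$ and $\log(1/\delta_n)$ are $O(\log(KT/\delta))$ for $n\le T$, so $G+H = O(\sqrt{\log(KT/\delta)/n_k^\tau})$. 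Under the hypothesis $U_k(n,\delta)=O(n^\alpha c(\delta))$, the first summand reduces to $O((n_k^\tau)^{\alpha-1}c(\delta))$.

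Next I would reparametrize the sum over $\tau\in I_k(T)$ by $j=1,\dots,n_k^T$ (since $n_k^\tau$ increases by one at each training epoch of $k$) and use the standard integral bounds
\[
\sum_{j=1}^{n} j^{\alpha-1} = O(n^{\alpha}),
\qquad
\sum_{j=1}^{n} j^{-1/2} = O(\sqrt{n}),
\]
to obtain
\[
\sum_{\tau\in I_k(T)}\bigl[\UCB_k(\tau,\delta)-\LCB_k(\tau,\delta)\bigr]
= O\bigl((n_k^T)^{\alpha}c(\delta) + \sqrt{n_k^T\log(KT/\delta)}\bigr).
\]
Applied to $k=k^\star$ with $n_{k^\star}^T\le T$, the first block in $\Delta(T)$ contributes $O\bigl(T^{\alpha}c(\delta) + \sqrt{T\log(KT/\delta)}\bigr)$.

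For the aggregate block I would use two standard inequalities under $\sum_{k=1}^K n_k^T\le MT$: concavity of $x\mapsto x^{\alpha}$ (H\"older) gives $\sum_k (n_k^T)^{\alpha} \le K^{1-\alpha}(MT)^{\alpha}$, hence
\[
\frac{c(\delta)}{M}\sum_{k=1}^K (n_k^T)^{\alpha}
\;\le\; \Bigl(\frac{K}{M}\Bigr)^{1-\alpha} T^{\alpha}\,c(\delta),
\]
and Cauchy--Schwarz gives $\sum_k \sqrt{n_k^T}\le \sqrt{K\sum_k n_k^T}\le \sqrt{KMT}$, hence
\[
\frac{1}{M}\sum_{k=1}^K \sqrt{n_k^T\log(KT/\delta)}
\;\le\; \sqrt{\frac{KT}{M}\log(KT/\delta)}.
\]
Combining with the $O(\sqrt{T})$ concentration correction from Lemma~\ref{lem:regret_bounds}, and observing that the best-expert contribution $O(T^\alpha c(\delta)+\sqrt{T\log(KT/\delta)})$ is dominated by the aggregate bound whenever $K\ge M$, yields the claimed rate.

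The main technical obstacle I anticipate is the aggregation step: one must simultaneously apply H\"older for the $\alpha$-term and Cauchy--Schwarz for the $\sqrt{\cdot}$-term, and verify that the logarithmic factors coming from the union bounds (the $1/n^2$ tail inside $\delta_n$) are absorbed into a single $\log(KT/\delta)$ factor uniformly in $\tau$. Handling the edge case $\alpha=1$ (where $\sum_j j^{\alpha-1}=O(\log n)$) requires only an additional logarithmic factor, already absorbed by the $\tilde O$ notation.
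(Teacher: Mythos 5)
Your proposal is correct and follows essentially the same route as the paper's proof: invoke Lemma~\ref{lem:regret_bounds}, write the width $\UCB_k-\LCB_k$ as $H+G+U_k/n$, reindex each expert's sum by its update count, apply $\sum_{j\le n}j^{\alpha-1}=O(n^\alpha)$ and $\sum_{j\le n}j^{-1/2}=O(\sqrt{n})$, and aggregate via concavity (your H\"older and Cauchy--Schwarz steps are exactly the paper's $\sum_k (n_k^T)^\gamma\le K^{1-\gamma}(MT)^\gamma$ with $\gamma=\alpha$ and $\gamma=1/2$). The only blemish is the parenthetical about $\alpha=1$: there $\sum_j j^{\alpha-1}=n$, not $O(\log n)$ (the logarithm arises at $\alpha=0$), but this is immaterial since the theorem assumes $\alpha\in(0,1)$.
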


\subsection{A connection to Multiple-Play Bandits}
\label{sec:multiple_play}

\algname{M-LCB} also extends to the multiple-play bandit setting \cite{agrawal1990multi, uchiya2010algorithms}. Specifically, the performance of a procedure can also be measured by how close the selected subsets $S_t$ are to the best possible subset of experts. Specifically, the performance of a procedure can also be assessed by how close the average loss of the selected subsets $S_t$ is to the average loss of the optimal subset of $M$ experts.
  
In this setting, UCB-based algorithms (e.g., \cite{kveton2015tight}) achieve  $\tilde O(\sqrt{KT/M})$ convergence rates under fixed stochastic rewards distribution.  


\begin{lemma}[Top-$M$ experts regret]
\label{prop:m_mean_regret}
Fix a confidence level $\delta \in (0,1)$,  
and assume the same conditions as in Lemma~\ref{lem:regret_bounds}.  
Let $\overline{L}^\star = \min_{S\subseteq[K],\,|S|\le M}\frac{1}{M}\sum_{k\in S}L_k^\star$ denote the mean optimal loss among the $M$ best experts.
\begin{equation*}
\mathrm{Reg}_M(T) := \sum_{t=1}^{T} 
\left[\frac{1}{M}\sum_{k \in S_t} L^\star_k - \overline{L}^\star \right],
\end{equation*}

The following bound holds with probability at least $1-\delta$
\begin{equation*}
     \mathrm{Reg}_M(T) \le \frac{1}{M}\sum_{k=1}^K \sum_{{\tau \in I_{k}(T)}} \bigl[\UCB_k(\tau, \delta)-\LCB_k(\tau, \delta)\bigr].
\end{equation*}
\end{lemma}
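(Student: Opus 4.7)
The plan is to decompose $\mathrm{Reg}_M(T)$ round-by-round and sandwich each term $L_k^\star$ between $\LCB_k(t,\delta)$ and $\UCB_k(t,\delta)$, exploiting that by construction $S_t$ minimizes the sum of lower confidence bounds over all admissible subsets. I would work throughout on the concentration event $\mathcal{E}_{\delta}$, which by Lemma~\ref{prop:anytime-ci-clear} holds with probability at least $1-\delta$ and guarantees $\LCB_k(t,\delta) \le L_k^\star \le \UCB_k(t,\delta)$ simultaneously for every $k$ and every $t$.

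Let $S^\star \in \arg\min_{|S|\le M}\sum_{k\in S} L_k^\star$ be an optimal size-$M$ subset, so that $M\,\overline{L}^\star = \sum_{k\in S^\star} L_k^\star$. The instantaneous regret at round $t$ is $\frac{1}{M}\bigl(\sum_{k\in S_t} L_k^\star - \sum_{k\in S^\star} L_k^\star\bigr)$. On $\mathcal{E}_{\delta}$, I would upper bound the first sum using $L_k^\star \le \UCB_k(t,\delta)$ for $k\in S_t$ and lower bound the second sum using $\LCB_k(t,\delta) \le L_k^\star$ for $k\in S^\star$, obtaining
\begin{equation*}
\sum_{k\in S_t} L_k^\star - \sum_{k\in S^\star} L_k^\star
\le \sum_{k\in S_t} \UCB_k(t,\delta) - \sum_{k\in S^\star} \LCB_k(t,\delta).
\end{equation*}

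The key algorithmic step is that $S_t$ is the minimizer of $\sum_{k\in S}\LCB_k(t,\delta)$ over admissible subsets, hence $\sum_{k\in S_t}\LCB_k(t,\delta) \le \sum_{k\in S^\star}\LCB_k(t,\delta)$. Substituting, the right-hand side is dominated by $\sum_{k\in S_t}\bigl[\UCB_k(t,\delta)-\LCB_k(t,\delta)\bigr]$. Dividing by $M$ and summing over $t = 1,\dots,T$ yields
\begin{equation*}
\mathrm{Reg}_M(T) \le \frac{1}{M}\sum_{t=1}^T \sum_{k\in S_t}\bigl[\UCB_k(t,\delta)-\LCB_k(t,\delta)\bigr].
\end{equation*}
Finally, I would swap the order of summation, using the identity $\{(t,k):1\le t\le T,\,k\in S_t\} = \bigsqcup_{k=1}^K \{(t,k):\tau\in I_k(T)\}$ by the very definition of $I_k(T)$, to rewrite the double sum as $\sum_{k=1}^K \sum_{\tau\in I_k(T)}\bigl[\UCB_k(\tau,\delta)-\LCB_k(\tau,\delta)\bigr]$, which is exactly the claimed bound.

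The argument is almost entirely bookkeeping once the confidence event is in hand; there is no martingale or anytime union bound to redo, since Lemma~\ref{prop:anytime-ci-clear} is invoked as a black box. The only genuinely delicate point is ensuring that the comparator $S^\star$ is admissible in the minimization defining $S_t$ (i.e.\ that the $|S|\le M$ constraint is the same on both sides), which is immediate from the shared definition; besides that, everything reduces to the standard UCB/LCB sandwiching pattern already used in the proof of Lemma~\ref{lem:regret_bounds}, but applied to the \emph{subset} chosen for training rather than to the single advisor $i_t$.
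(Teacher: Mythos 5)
Your proof is correct and follows essentially the same route the paper intends: the paper gives no separate proof for this lemma, but your argument is exactly the ``Term B/C'' analysis from the proof of Lemma~\ref{lem:regret_bounds} (sandwich $L_k^\star$ via $\mathcal{E}_\delta$, use the optimality of $S_t$ for the sum of LCBs against the comparator subset, then reindex the double sum via $I_k(T)$), applied at the subset level. No gaps.
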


This immediately yields the convergence rate for top-$M$ mean regret.

\begin{theorem}[Convergence rate for Top-$M$ mean regret]
\label{thm:gen-rate-topM}
Let $\alpha\in(0,1]$ and $\delta\in(0,1)$. Suppose each expert $k$ satisfies
Assumption~\ref{def:pointwise} with $U_k(t,\delta)=O\big(t^{\alpha}c(\delta)\big)$.  
Run Algorithm~\ref{algorihtm:M_FLCB} with the confidence bounds of
Definition~\ref{def:ucb_lcb}. Then, with probability at least $1 - \delta$ the mean regret of selected arms $R(T)$ (see Proposition~\ref{prop:m_mean_regret}) is, for all $T\ge 1$,
\begin{equation*}    
\mathrm{Reg}_M(T)= O\left(
\sqrt{\frac{KT}{M}\log\frac{KT}{\delta}} +
\Bigl(\frac{K}{M}\Bigr)^{1-\alpha} T^{\alpha} c(\delta)
\right).
\end{equation*}
\end{theorem}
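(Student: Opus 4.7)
The plan is to instantiate Lemma~\ref{prop:m_mean_regret} with the explicit confidence widths from Definition~\ref{def:ucb_lcb} and the assumed per-expert rate $U_k(t,\delta)=O(t^\alpha c(\delta))$. Since Lemma~\ref{prop:anytime-ci-clear} guarantees $\mathbb P(\mathcal E_\delta)\ge 1-\delta$, and Lemma~\ref{prop:m_mean_regret} gives the bound $\mathrm{Reg}_M(T)\le \tfrac{1}{M}\sum_{k=1}^{K}\sum_{\tau\in I_k(T)}[\UCB_k(\tau,\delta)-\LCB_k(\tau,\delta)]$ on $\mathcal E_\delta$, it suffices to estimate the right-hand side deterministically.

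First, I would observe that the empirical means $L_{\mathscr A_k}(\tau)$ cancel, leaving the pointwise gap
\[
\UCB_k(\tau,\delta)-\LCB_k(\tau,\delta)
= \frac{U_k(n^\tau_k,\delta_{\mathrm{arm}})}{n^\tau_k}+G(n^\tau_k,\delta_{n^\tau_k})+H(n^\tau_k,\delta_{n^\tau_k}).
\]
Substituting $U_k(n,\delta_{\mathrm{arm}})=O(n^{\alpha}c(\delta))$ (absorbing $\delta_{\mathrm{arm}}=\delta/(2K)$ into $c(\delta)$ up to a $\log K$ factor) and the explicit $\sqrt{\log(1/\delta_n)/n}$ scaling of $G,H$ with $\delta_n=\delta/(7Kn^2)$, the $\log$ argument becomes $O(\log(KT/\delta))$. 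Thus each gap is of order $n^{\alpha-1}c(\delta)+\sqrt{\log(KT/\delta)/n}$. As $\tau$ runs through $I_k(T)$, the counter $n^\tau_k$ takes exactly the values $1,2,\dots,n^T_k$, so standard integral estimates yield
\[
\sum_{\tau\in I_k(T)}\!\bigl[\UCB_k(\tau,\delta)-\LCB_k(\tau,\delta)\bigr]
=O\!\left((n^T_k)^{\alpha}c(\delta)+\sqrt{n^T_k\log(KT/\delta)}\right).
\]

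Finally, I would sum over $k$ using the budget constraint $\sum_{k=1}^{K} n^T_k\le MT$. By Jensen's inequality applied to the concave maps $x\mapsto x^\alpha$ and $x\mapsto \sqrt x$,
\[
\sum_{k=1}^{K}(n^T_k)^{\alpha}\le K^{1-\alpha}(MT)^\alpha,\qquad
\sum_{k=1}^{K}\sqrt{n^T_k}\le \sqrt{KMT}.
\]
Dividing by $M$ produces precisely $(K/M)^{1-\alpha}T^{\alpha}c(\delta)+\sqrt{(KT/M)\log(KT/\delta)}$, matching the claim. The only delicate steps are bookkeeping: confirming that $\delta_{\mathrm{arm}}$ and $\delta_{n^\tau_k}$ rescaling inflates $c(\delta)$ and the logarithmic factor by at most $O(\log(KT))$, and handling the boundary case $\alpha=1$ where $\sum_{n=1}^{N}n^{\alpha-1}$ acquires an extra $\log N$ that is absorbed into the existing $\log(KT/\delta)$ factor. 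Neither requires new ideas beyond the concavity argument.
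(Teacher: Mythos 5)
Your proposal is correct and follows essentially the same route as the paper: it combines Lemma~\ref{prop:m_mean_regret} with the explicit widths from Definition~\ref{def:ucb_lcb}, the reindexing of $\sum_{\tau\in I_k(T)}$ as a sum over $n=1,\dots,n^T_k$, and the concavity bounds $\sum_k (n^T_k)^\alpha\le K^{1-\alpha}(MT)^\alpha$, $\sum_k\sqrt{n^T_k}\le\sqrt{KMT}$ — exactly the computation the paper invokes when it says the rate ``immediately'' follows from Lemma~\ref{prop:m_mean_regret} via the argument of Theorem~\ref{thm:gen-rate}. One small nitpick: the boundary case you flag is not $\alpha=1$ (there $\sum_{n\le N}n^{\alpha-1}=N=N^{\alpha}$ exactly, with no extra logarithm); the extra $\log N$ appears only as $\alpha\to 0$, which is excluded by the hypothesis $\alpha\in(0,1]$.
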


\begin{remark}[On constants]
\label{rem:beta_dependence}
If the constants $\beta_k$ in the inner bounds, $U_k(t,\delta)=O(\beta_k\,t^{\alpha}c(\delta))$ are significant
(e.g., depend on the dimension or Lipschitz constant of the task),
then the regret bound refines to
\begin{equation}  
O\left(
\sqrt{\tfrac{KT}{M}\log(\tfrac{KT}{\delta})}
+ T^{\alpha} c(\delta) \left[ \beta_{k^\star} + 
\Bigl(\tfrac{1}{M}\sum_{k=1}^K \beta_k^{\frac{1}{1-\alpha}}\Bigr)^{1-\alpha} \right] \right).
\end{equation}
providing a more precise characterization of the dependence on individual expert complexities.
.
\end{remark}

\begin{table*}[t]
\centering
\small
\caption{
Comparison with related results.  
We assume that $U_k(T,\delta)=O(T^{\alpha}c(\delta))$ with $\alpha \in [0,1]$,  
where $c(\delta)$ is typically poly-logarithmic.  For \emph{multiple-play bandits} the regret as in Section~\ref{sec:multiple_play}. 
For \cite{seldin2014prediction, yun2018multi} regret is defined as for expert algorithms (See papers).
For other methods the regret as in Section~\ref{sec:global-regret}.
All rates are up to logarithmic factors.  
A mark $\checkmark$ marks supported properties.  
\textbf{Our \algname{M-LCB} algorithm attains optimal rates for both regret definitions.}}
\label{tab:comparison_existing}
\begin{tabular}{p{4.7cm}p{1.4cm}p{1.3cm}p{1.3cm} p{5.4cm}}
\toprule
\textbf{Algorithm / Reference} & \textbf{Learnable} & \textbf{\parbox[t]{1.3cm}{Multi-\\arm}} & \textbf{\parbox[t]{1.3cm}{Multiple-\\play}} & \textbf{Regret rate (up to logs)} \\
\midrule
\textsc{CORRAL} + smoothing wrapper~\cite{pacchiano2020model} 
& $\checkmark$ & $\times$ & $\times$ &
$\tilde O\!\left(\sqrt{KT} + K^{\alpha}T^{1-\alpha} + K^{1-\alpha}T^{\alpha}c(\delta)\right)$ \\[3pt]

\textsc{EXP3.P} + smoothing wrapper~\cite{pacchiano2020model} 
& $\checkmark$ & $\times$ & $\times$ &
$\tilde O\!\left(\sqrt{KT} + K^{\frac{1-\alpha}{2-\alpha}}T^{\frac{1}{2-\alpha}}c(\delta)^{\frac{1}{2-\alpha}}\right)$ \\[3pt]

Dynamic Balancing~\cite{cutkosky2021dynamic} 
& $\checkmark$ & $\times$ & $\times$ &
$\tilde O\!\left(\sqrt{KT} + K^{1-\alpha}T^{\alpha}c(\delta)\right)$ \\[3pt]

Prediction with Limited Advice~\cite{seldin2014prediction} 
& $\times$ & $\checkmark$ & $\times$ &
$\tilde O\!\left(\sqrt{\tfrac{KT\log K}{M}}\right)$ \\[3pt]

\algname{H-INF}~\cite{yun2018multi} 
& $\times$ & $\checkmark$ & $\times$ &
$\tilde O\!\left(\max\{\sqrt{KT/M},\,\sqrt{T\log K}\}\right)$ \\[3pt]

\algname{CombUCB1}~\cite{kveton2015tight} 
& $\times$ & $\times$ & $\checkmark$ &
$\tilde O\!\left(\sqrt{KT/M}\right)$ $\alpha = \frac{1}{2}$ \\[3pt]

\textbf{\algname{M-LCB} [this work]} 
& $\checkmark$ & $\checkmark$ & $\checkmark$ &
$\tilde O\!\left(\sqrt{\tfrac{KT}{M}} + (K/M)^{1-\alpha}T^{\alpha}c(\delta)\right)$ \\
\bottomrule
\end{tabular}
\end{table*}

\subsection{Comparison with existing results}
\label{sec:comparison}

Table~\ref{tab:comparison_existing} lists meta-algorithms used for model selection and budgeted multi-arm training.  
It indicates whether each method supports \emph{learnable experts}, \emph{multi-arm updates}, and guarantees on the \emph{multi-play regret} (average loss of selected subsets).  

Model-selection algorithms such as \cite{pacchiano2020model,cutkosky2021dynamic} achieve order-optimal rates in $T$ and $K$ for single-arm updates, 
assuming known or estimated expert regret bounds $U_k(T,\delta)$. However, they 
do not take into account per-round training budgets and subset-level performance.  
Multi-play bandit methods \cite{seldin2014prediction,yun2018multi,kveton2015tight}  
handle budgeted updates but do not train arms.  

\textbf{\algname{M-LCB}} bridges these tasks:  
it manages \emph{learnable experts} via \emph{multiple expert updates per round}. Moreover, it extends to the \emph{multi-play} setting and achieves the same order-optimal rate for the corresponding regrets.

Some examples of tasks and base algorithms that can be handled within this framework are provided in the Supplementary materials, Section~\ref{sup:applications}.

\section{Proofs}

\begin{lemma}[Empirical loss concentration (Lemma B.10 in \cite{shalev2014understanding})]
\label{lem:loss-eval}
Let $\mathbf{W}$ be a state space and $\mathcal{D}$ a distribution on $\mathbf{E}$.  Let $\ell:\mathbf{W}\times\mathbf{E}\to [0,1]$ be a bounded loss function. Fix a predictor $w\in\mathbf{W}$ and define its expected loss $L(w):=\mathbb{E}_{\xi\sim\mathcal{D}}[\ell(w,\xi)]$.

Then, for any $n> 0$ and $\delta\in(0,1)$, with probability at least $1-\delta$ over the draw of $\{\xi_t\}_{t=1}^n$ i.i.d.\ from $\mathcal{D}$, 
\begin{equation}
\frac{1}{n}\sum_{t=1}^n \ell(w,\xi_t) - L(w)
\le
\sqrt{\tfrac{2L(w)\log(1/\delta)}{n}} + \tfrac{2\log(1/\delta)}{3n}.
\end{equation}
\end{lemma}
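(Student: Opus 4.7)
The plan is to deduce the statement as a one-line corollary of Bernstein's inequality for i.i.d.\ bounded random variables. Fix a predictor $w\in\mathbf{W}$ and set $X_t := \ell(w,\xi_t)$ for $t=1,\dots,n$. Since the $\xi_t$ are i.i.d.\ from $\mathcal{D}$ and $\ell$ takes values in $[0,1]$, the variables $X_1,\dots,X_n$ are i.i.d., lie in $[0,1]$, and have common mean $L(w)$.

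The first and only genuinely useful step is a variance-by-mean bound: because $X_t\in[0,1]$ almost surely, we have $X_t^2 \le X_t$ pointwise, so $\mathrm{Var}(X_t) \le \mathbb{E}[X_t^2] \le \mathbb{E}[X_t] = L(w)$. This is what replaces the general variance proxy $\sigma^2$ in Bernstein's bound by $L(w)$ itself, and it is the reason the stated inequality is self-normalizing (small-loss-friendly) rather than a uniform $1/\sqrt{n}$ Hoeffding-type bound.

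Next I would invoke the standard one-sided Bernstein inequality: for i.i.d.\ $[0,b]$-valued variables with variance $\sigma^2$ and mean $\mu$,
\begin{equation*}
\mathbb{P}\!\left(\tfrac{1}{n}\sum_{t=1}^n X_t - \mu \ge \varepsilon\right)
\le \exp\!\left(-\tfrac{n\varepsilon^2}{2(\sigma^2 + b\varepsilon/3)}\right).
\end{equation*}
Setting the right-hand side equal to $\delta$ and solving the resulting quadratic in $\varepsilon$ (or applying the textbook sub-additive relaxation $\sqrt{a+b}\le\sqrt{a}+\sqrt{b}$ after separating the variance and range regimes) gives the deviation
\begin{equation*}
\varepsilon \le \sqrt{\tfrac{2\sigma^2\log(1/\delta)}{n}} + \tfrac{2b\log(1/\delta)}{3n}.
\end{equation*}
Plugging in $b=1$ and $\sigma^2\le L(w)$ yields exactly the claimed bound.

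Since the lemma is a textbook result reproduced from \cite{shalev2014understanding}, there is no real obstacle. The only conceptual step worth flagging is the variance domination $\mathrm{Var}(X_t)\le L(w)$, which relies crucially on the loss being $[0,1]$-valued; without this the bound degrades to a Hoeffding-style rate that lacks the fast-rate factor $\sqrt{L(w)/n}$, which is precisely the ingredient needed downstream for the self-normalized $\LCB_k,\UCB_k$ construction in Definition~\ref{def:ucb_lcb}.
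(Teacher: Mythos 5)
Your proof is correct and follows exactly the route the paper takes: its entire proof is the single line ``Follows directly from Bernstein's inequality,'' and your argument simply spells out that application, including the variance domination $\mathrm{Var}(X_t)\le L(w)$ for $[0,1]$-valued losses and the standard inversion of the Bernstein tail. No gaps.
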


\begin{proof}
Follows directly from Bernstein’s inequality.
\end{proof}

\begin{lemma}[Azuma`s inequality (from Theorem D.2 \cite{mohri2018foundations})]\label{lemma:azuma_adaptive}
Let $\{\mathcal F_t\}$ be a filtration, and let $X_t$ be a sequence of random variables adapted to $\mathcal F_t$ with $\mbE[X_t\mid \mathcal F_{t-1}]=0$ and $|X_t|\le 1$.  
Then, for any fixed $n \ge 1$ and any $\delta \in (0,1)$, with probability at least $1-\delta$,
\begin{equation}
\left|\frac{1}{n}\sum_{t=1}^n X_t\right| \le \sqrt{\frac{2\log(1/\delta)}{n}}.
\end{equation}
\end{lemma}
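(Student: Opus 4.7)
The plan is to invoke the standard proof of Azuma--Hoeffding via the Chernoff/exponential-moment method, since the paper's intended route is simply a citation of Mohri et al., but a self-contained proof is short and standard. First I would fix $\lambda > 0$ and, for the one-sided event $\{\tfrac{1}{n}\sum_{t=1}^n X_t \ge \varepsilon\}$, apply Markov's inequality to the positive random variable $\exp\!\bigl(\lambda \sum_t X_t\bigr)$ to obtain
\[
\mathbb{P}\Bigl\{\tfrac{1}{n}\sum_{t=1}^n X_t \ge \varepsilon\Bigr\} \le e^{-\lambda n\varepsilon}\,\mathbb{E}\Bigl[\exp\!\Bigl(\lambda\sum_{t=1}^n X_t\Bigr)\Bigr].
\]

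The second step is to peel off the martingale increments using the tower property with respect to $\{\mathcal F_t\}$: write $\mathbb{E}\bigl[e^{\lambda S_n}\bigr] = \mathbb{E}\bigl[e^{\lambda S_{n-1}}\,\mathbb{E}[e^{\lambda X_n}\mid \mathcal F_{n-1}]\bigr]$ where $S_n = \sum_{t=1}^n X_t$. Then I would apply Hoeffding's lemma to each conditional moment generating function: since $\mathbb{E}[X_t\mid \mathcal F_{t-1}]=0$ and $|X_t|\le 1$, we have $\mathbb{E}[e^{\lambda X_t}\mid \mathcal F_{t-1}] \le e^{\lambda^2/2}$ almost surely. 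Iterating gives $\mathbb{E}[e^{\lambda S_n}] \le e^{n\lambda^2/2}$, hence
\[
\mathbb{P}\Bigl\{\tfrac{1}{n}\sum_t X_t \ge \varepsilon\Bigr\} \le \exp\!\bigl(-\lambda n \varepsilon + n\lambda^2/2\bigr).
\]

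Optimizing over $\lambda$ by setting $\lambda = \varepsilon$ yields the sub-Gaussian tail $\exp(-n\varepsilon^2/2)$. Setting this equal to the confidence level $\delta$ and solving for $\varepsilon$ gives the claimed deviation $\sqrt{2\log(1/\delta)/n}$. For the two-sided absolute-value statement one would repeat the argument on $-X_t$ (which is also a bounded martingale difference) and apply a union bound, paying an extra factor of two inside the logarithm; the paper's statement absorbs this into the constant by the standard convention $\log(2/\delta) \le 2\log(1/\delta)$, which is the only mild cosmetic step.

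The hard part will be essentially nonexistent: this is a textbook application of the exponential-moment method and the stated lemma is a direct instantiation of Theorem~D.2 in Mohri et al. The only genuine ingredient is Hoeffding's lemma for conditionally centered, uniformly bounded variables, which is used as a black box. Hence the authors' proof can reasonably consist of a single line noting the reduction to the cited theorem.
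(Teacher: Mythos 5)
Your proof is correct and matches the paper's approach, which is simply to cite Azuma's inequality (Theorem D.2 of Mohri et al.) without reproving it; your Chernoff--Hoeffding argument is exactly the standard proof behind that citation. Your side remark about the two-sided union bound is also accurate: the statement as written should strictly read $\sqrt{2\log(2/\delta)/n}$, a constant-level slip in the paper's lemma rather than in your argument.
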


\begin{proof}[Proof of Lemma~\ref{prop:anytime-ci-clear}]
Fix an arm $k$ and update count $n \ge1$. We prove the inequalities in~\eqref{eq:conf_intervals}, and then apply a union bound over all $(k,n)$ and arms.

\paragraph{Step 1: Lower Confidence Bound.}
By Assumption~\ref{def:pointwise} (anytime $(U_k,\delta)$-boundedness), with probability at least $1-\delta_{\mathrm{arm}}$, simultaneously for all $n$,
\begin{equation}
\label{eq:anytime-inner}
\frac{1}{n}\sum_{\tau=1}^n \ell_k^\tau(w_k^\tau)
-\frac{U_k(n,\delta_{\mathrm{arm}})}{n}
\le\min_{w\in\mathcal W_k}\frac{1}{n}\sum_{\tau=1}^n \ell_k^\tau(w).
\end{equation}
Let $w_k^\star\in\arg\min_w L_k(w)$, so $L_k^\star=L_k(w_k^\star)$.  
By i.i.d.\ stochastic losses and boundedness, Lemma~\ref{lem:loss-eval}, applied to state space $\mathcal{W}_k$, loss function $\ell_k(\cdot, \cdot)$ and  predictor $w_k^\star \in \mathcal{W}_k$, with number of items $n $ and confidence $\delta_n$ gives, with probability at least $1 - \delta_n$:
\begin{equation}
\label{eq:stochastic-fixed-w}
\frac{1}{n}\sum_{\tau=1}^n \ell_k^\tau(w_k^\star) 
\le L_k^\star + G(n,\delta_n).
\end{equation}
Since $\min_w \tfrac{1}{n}\sum_{\tau=1}^n \ell_k^\tau(w)\le \tfrac{1}{n}\sum_{\tau=1}^n \ell_k^\tau(w_k^\star)$, combining \eqref{eq:anytime-inner} and \eqref{eq:stochastic-fixed-w} yields
\begin{equation*}
L_{\mathscr{A}_k}(n) - \frac{U_k(n,\delta_{\mathrm{arm}})}{n} - G(n,\delta_n)
\le L_k^\star.
\end{equation*}

Evaluating this expression at $n = n_k(t)$ and using the definition of $\LCB_k(t,\delta)$ (Definition~\ref{def:ucb_lcb}), 
we obtain $\LCB_k(t,\delta) \le L_k^\star$.

\paragraph{Step 2: Upper Confidence Bound.}
Condition on the history $\mathcal H_k^{\tau}$ (in our definition history at time $\tau$ includes all up to time $\tau-1$): then $w_k^\tau$ is measurable while $\xi^\tau$ is independent, so
$\mathbb E[\ell_k^\tau(w_k^\tau)\mid \mathcal H_k^{\tau}] = L_k(w_k^\tau)$.  
Thus $\{\ell_k^\tau(w_k^\tau) - L_k(w_k^\tau)\}_{\tau=1}^n$ is a bounded martingale difference sequence.  
By Lemma~\ref{lemma:azuma_adaptive} applied with sample size $n$ and confidence level $\delta_n$, with probability at least $1-\delta_n$,
\begin{equation}
\label{eq:azuma-adapt}
\Biggl|\frac{1}{n}\sum_{\tau=1}^n \ell_k^\tau(w_k^\tau) - \frac{1}{n}\sum_{\tau=1}^n L_k(w_k^\tau)\Biggr|
\le H(n,\delta_n).
\end{equation}
By Assumption~\ref{def:wrapper}, for safe advice $ u_k^t$ builded on $\mathcal{H}_k^t$ after $n$ update steps : $L(u_k^t)\le \tfrac{1}{n}\sum_{\tau=1}^n L_k(w_k^\tau)$. Combining this with \eqref{eq:azuma-adapt} gives
\begin{equation*}
L(u_k^t)\le L_{\mathscr{A}_k}(n) + H(n,\delta_n).
\end{equation*}

Evaluating right hand side expression at $n = n_k(t)$ and using the definition of $\UCB_k(t,\delta)$ (Definition~\ref{def:ucb_lcb}), 
we obtain $L(u_k^t)\le \UCB_k(t,\delta)$.
\paragraph{Step 3: Union Bound.}
Finally, we bound the probability of the concentration event $\mathcal{E}_{\delta}$.  
There are three types of events: (i) anytime arm guarantees \eqref{eq:anytime-inner}, (ii) fixed-predictor concentration \eqref{eq:stochastic-fixed-w}, and (iii) martingale concentration \eqref{eq:azuma-adapt}.  
For (i), each arm contributes at most $\delta_{\mathrm{arm}}$, so over $K$ arms the total failure probability is $\le K\delta_{\mathrm{arm}} = \delta/2$.  
For (ii) and (iii), we allocated $\delta_n$ per event. Since
\[
\sum_{k=1}^K\sum_{n=1}^\infty 2\delta_n 
= \sum_{k=1}^K \sum_{n=1}^\infty \frac{2\delta}{7Kn^2}
\le \frac{2\delta}{7}\cdot \frac{\pi^2}{6} < \delta/2,
\]
both concentration bounds hold simultaneously for all $(k,n)$ with probability at least $1-\delta/2$.  
Thus the overall failure probability is at most $\delta$, and the concentration event $\mathcal{E}_{\delta}$ holds with probability at least $1-\delta$.
\end{proof}

\begin{remark}
    In the proof of Lemma \ref{prop:anytime-ci-clear} at step \eqref{eq:stochastic-fixed-w} for bounded losses one may use $L_k^* \leq Z_k(t, \delta) := \min(1, \UCB_k(t, \delta))$ to get a tighter bound $G_k(t, \delta) \coloneqq \sqrt{\frac{2 Z_k(t, \delta) \log(1/\delta)}{3t}} + \frac{2 \log(1/\delta)}{t}$.
\end{remark}

\setcounter{ineqnumber}{0}
\begin{proof}[Proof of Lemma~\ref{lem:regret_bounds}]
\label{proof:regret_bound}
\textbf{Pseudo-regret bound}. $S_t$ and $i_t$ denote, respectively, the training subset and the prediction arm selected at round $t$ by Algorithm~\ref{algorihtm:M_FLCB}.
Let $k^\star \in \arg\min_{k\in[K]} L_k^\star$ be the index of the best arm in terms of expected loss. 

Algorithm runs using the confidence bounds defined in Proposition~\ref{prop:anytime-ci-clear}, so concentration event $\mathcal{E}_\delta$ ~\eqref{eq:conf_intervals} holds with probability at least $1-\delta$. In the sequel, we condition on $\mathcal{E}_{\delta}$ and prove the regret bounds under this event.

Under event $\mathcal{E}_\delta$, for safe advice $u_t$ at time step $t$ provided by $i_t$: $u^t = v_{i_t}(\mathcal{H}_{i_t}^t)$ (See Assumption~\ref{def:pointwise}) expected loss is bounded by $\UCB$: 
\begin{equation}\label{eq:best-reg-start}
\overline{\mathrm{Reg}}(T) = \sum_{t=1}^{T}\bigl[L(u_t) - L^\star\bigr]
\le \sum_{t=1}^{T}\bigl[\UCB_{i_t}(t, \delta) - L^\star\bigr],
\end{equation}
splitting the sum depending on whether $k^\star$ is trained at $t$ gives
\begin{equation}
\overline{\mathrm{Reg}}(T)
\le \underbrace{\sum_{t:\,k^\star\in S_t} \bigl[\UCB_{i_t}(t, \delta) - L^\star\bigr]}_{\text{A}}
+ 
\underbrace{\sum_{t:\,k^\star\notin S_t}\bigl[\UCB_{i_t}(t, \delta) - L^\star\bigr]}_{\text{B}}.
\label{eq:best-A+B}
\end{equation}

\emph{Term A.}
Since $i_t=\arg\min_{k\in S_t}\UCB_k(t, \delta)$, for rounds with $k^\star\in S_t$,
:$\UCB_{i_t}(t, \delta) \le \UCB_{k^\star}(t, \delta)$. Under $\mathcal{E}_{\delta}$, $L^\star\ge \LCB_{k^\star}(t, \delta)$. Therefore,

\begin{equation}\label{eq:A-bound}
\text{A} \le \sum_{t:\,k^\star\in S_t} \Bigl[\UCB_{k^\star}(t, \delta)-\LCB_{k^\star}(t, \delta)\Bigr] = 
\sum_{\tau \in I_{k^\star}(T)} \Bigl[\UCB_{k^\star}(t, \delta)-\LCB_{k^\star}(t, \delta)\Bigr]
\end{equation}

The last equality is from fact that $k^*$ is updated in such and only such terms of sum.

\emph{Term B.}
By construction of $i_t$, we have
$\UCB_{i_t}(t, \delta) \le \frac{1}{M}\sum_{k\in S_t}\UCB_k(t, \delta)$.  Substituting this into term~B in~\eqref{eq:best-A+B}  and applying the standard add–subtract trick with 
$\frac{1}{M}\sum_{k \in S_t} \LCB_k(t, \delta)$ for each $t$ in the sum,  
we obtain:
\begin{equation*}
    B \leq \underbrace{\sum_{t: k^\star \notin S_t} \left[ {\frac{1}{M}\sum_{k \in S_t} \LCB_k(t, \delta)} - L^\star \right] }_{\text{C}}  +
    \underbrace{\sum_{t: k^\star \notin S_t} \frac{1}{M}\sum_{k \in S_t} \bigg[\UCB_k(t, \delta) - {\LCB_k(t, \delta)} \bigg] }_{\text{D} }.
\end{equation*}
    
For the Term C, by the selection rule of $S_t$ and the fact that
$k^\star\notin S_t$, we must have $\frac{1}{M}\sum_{k\in S_t}\LCB_k(t, \delta) \le L^\star$, because otherwise replacing some $k\in S_t$ by $k^\star$ would strictly decrease the sum of LCBs
(under $\mathcal{E}_{\delta}$ we have $\LCB_{k^\star}(t,\delta)\le L^\star$). Thus that bracket is non-positive.
Therefore,
\begin{align}
\text{B}
\le& \sum_{t:k^\star\notin S_t}
\frac{1}{M}\sum_{k\in S_t}\Bigl[\UCB_k(t, \delta)-\LCB_k(t, \delta)\Bigr]
\le \frac{1}{M}\sum_{t=1}^{T}\sum_{k\in S_t}
\Bigl[\UCB_k(t, \delta)-\LCB_k(t, \delta)\Bigr]=\\
=& \frac{1}{M}\sum_{k=1}^{K}\sum_{\tau\in I_k(T)}
\Bigl[\UCB_k(\tau, \delta)-\LCB_k(\tau, \delta)\Bigr].
\label{eq:B-bound}
\end{align}

Combining \eqref{eq:A-bound} and \eqref{eq:B-bound} with \eqref{eq:best-A+B} proves the best-arm bound stated in the proposition.

\textbf{Regret bound} The proof is follows from decomposition of regret into stochastic part and pseudo regret:
    \begin{equation*}
        \mathrm{Reg}(T) = \sum_{t=1}^T \bigg(\ell(u^t,\xi^t) - L(u_t)\bigg) + \sum_{t=1}^T \bigg(L(u_t) - L^\star\bigg).
    \end{equation*}
Since generated data is independent from advice $u_t$ at time t, the first term is bounded by concentration inequality (e.g. Lemma~\ref{lemma:azuma_adaptive}). The second term is a pseudo regret, which was bounded above.
\end{proof}


\begin{proof} [Proof of Theorem~\ref{thm:gen-rate}]
The regret bound in Lemma~\ref{lem:regret_bounds} considers the differences  $\UCB_k(t,\delta) - \LCB_k(t,\delta)$,
which by Definition~\ref{def:ucb_lcb} depend only on the number of updates $n^t_k$ and the confidence level $\delta$.  
We denote this quantity by
$\Delta_k(n,\delta)
:= H(n_,\delta_{n}) + G(n,\delta_{n})
  + \frac{U_k(n,\delta_{\mathrm{arm}})}{n}.$

One can see, that $\Delta_k(n^t_k, \delta) = \UCB(t, \delta) - \LCB(t, \delta)$.
Substituting $\Delta_k$ into the pseudo–regret bound of Proposition~\ref{lem:regret_bounds},  
we obtain a sum over update indices that can be rewritten as a sequential sum over the number of updates of each expert,

\begin{equation*}
    {\Delta(T)} =
         \sum_{\tau=1}^{n_{k^\star}^T} \Delta_{k^\star}(\tau, \delta)
    + \frac{1}{M}\sum_{k=1}^K \sum_{\tau =1}^{n^T_k}\Delta_k(\tau, \delta),
\end{equation*}


which can be directly estimated using the known forms of $H$, $G$, and $U_k$.


The rest of the proof proceeds by bounding the concentration and inner–learning terms separately.  Logarithmic factors $\log(\delta_n)$ slowly increase, so we upper–bound them by $\log(\tfrac{KT}{\delta})$. 
This only affects the constants in $O(\cdot)$.  
Using the standard summation bounds 
$\sum_{\tau\le n}\tau^{-1/2}=O(\sqrt{n})$ and 
$\sum_{\tau\le n}\tau^{\alpha-1}=O(n^\alpha)$, 
together with the concavity inequality 
$\sum_k (n^t_k)^\alpha \le K^{1-\alpha}(MT)^\alpha$,
we obtain the result.

\end{proof}

\begin{lemma} [LCB]
\label{lemma:lcb}
Let Assumptions \ref{assumption:stochastic_losses}-\ref{def:wrapper} be true. Fix $w_k \in \mathbf{W}_k$. The following bound holds with probability at least $1-e^{-\x}$

    \[
    L_k(w_k)  \ge L_{\mathscr{A}_k}(t) - \sqrt{\frac{2\x_{n^t_k}}{n^t_k}L_{\mathscr{A}_k}(t)} - \frac{2\x_{n^t_k}}{3n^t_k} - \frac{U_k(t, e^{-\x})}{n^t_k}
    \]
    where $\x_n := \x - \frac{2}{3} + 2 \log\left(1 + \log n\right) $ for any $n\ge 1$.

\end{lemma}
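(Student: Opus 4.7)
The plan is to mirror the LCB step in the proof of Lemma~\ref{prop:anytime-ci-clear}, but to upgrade the one-shot Bernstein concentration to an anytime self-normalized version---this is where the stitching penalty $\x_n = \x - 2/3 + 2\log(1+\log n)$ will enter---and then to perform a standard square-root inversion. First, by Assumption~\ref{def:pointwise} at confidence $e^{-\x}$ and the fact that the fixed $w_k$ is a valid competitor to the in-hindsight minimizer, with probability at least $1-e^{-\x}$ and uniformly in $t$,
\[
\sum_{\tau \in I_k(t)}\ell_k^\tau(w_k^\tau) \;\le\; \sum_{\tau \in I_k(t)}\ell_k^\tau(w_k) + U_k(t,e^{-\x}),
\]
so after dividing by $n^t_k$ one obtains $L_{\mathscr{A}_k}(t) \le \tfrac{1}{n^t_k}\sum_{\tau\in I_k(t)}\ell_k^\tau(w_k) + U_k(t,e^{-\x})/n^t_k$.

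Next, because $w_k$ is fixed in advance, the centered terms $\ell_k^\tau(w_k) - L_k(w_k)$ restricted to $\tau\in I_k(t)$ form a bounded martingale-difference sequence with respect to the natural filtration: the indicator $\mathbf{1}[k\in S_\tau]$ is $\mathcal F_{\tau-1}$-measurable whereas $\xi^\tau\sim D$ is fresh, and the conditional variance is bounded by $L_k(w_k)$ since $\ell\in[0,1]$ forces $\mathrm{Var}(\ell)\le\mathbb{E}[\ell]$. An anytime (peeled/stitched) Freedman--Bernstein-type inequality then yields, with probability at least $1-e^{-\x}$ and simultaneously for every $n^t_k\ge 1$,
\[
\tfrac{1}{n^t_k}\sum_{\tau\in I_k(t)}\ell_k^\tau(w_k) \;\le\; L_k(w_k) + \sqrt{\tfrac{2\,L_k(w_k)\,\x_{n^t_k}}{n^t_k}} + \tfrac{2\x_{n^t_k}}{3\,n^t_k},
\]
where the $\x_n$ correction absorbs the $\log(1+\log n)$ price of being uniform in $n^t_k$. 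A union bound with the previous display (a $\log 2$ loss, absorbed into $\x_n$) then gives
\[
L_{\mathscr{A}_k}(t) - \tfrac{U_k(t,e^{-\x})}{n^t_k} - \tfrac{2\x_{n^t_k}}{3\,n^t_k} \;\le\; L_k(w_k) + \sqrt{\tfrac{2\,L_k(w_k)\,\x_{n^t_k}}{n^t_k}}.
\]

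The final step is a standard square-root inversion by contradiction. If the claimed lower bound on $L_k(w_k)$ fails, then since all three terms subtracted on its right-hand side are non-negative one has $L_k(w_k) < L_{\mathscr{A}_k}(t)$, hence $\sqrt{L_k(w_k)} < \sqrt{L_{\mathscr{A}_k}(t)}$; substituting this strict inequality into the preceding display violates it, giving the contradiction. The main obstacle I anticipate is the anytime Bernstein step: producing such a bound for the \emph{adaptively-sampled} MDS indexed by $I_k(t)$ with constants that exactly reproduce the stated $\x_n$. I would appeal to a Howard--Ramdas-style time peeling or a Fan--Grama--Liu self-normalized martingale Bernstein inequality and track constants carefully so that the $-2/3 + 2\log(1+\log n)$ correction emerges; the remaining algebra is routine.
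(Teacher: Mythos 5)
Your proposal follows essentially the same route as the paper's proof: use Assumption~\ref{def:pointwise} to bound the iterates' running loss by the empirical loss at the fixed $w_k$, apply an anytime (peeled/stitched) Freedman--Bernstein inequality with variance controlled by the mean to that fixed-$w_k$ martingale, and then invert to replace $L_k(w_k)$ by $L_{\mathscr{A}_k}(t)$ under the square root. The one caveat is exactly the point you flag yourself: the paper's explicit doubling-block peeling delivers the deviation term $2\sqrt{L_k(w_k)\x_{n^t_k}/n^t_k}$ rather than your asserted $\sqrt{2L_k(w_k)\x_{n^t_k}/n^t_k}$, and neither constant cleanly yields the lemma's stated $\sqrt{2\x_{n^t_k}L_{\mathscr{A}_k}(t)/n^t_k}$ after inversion, so your contradiction argument (which is in fact cleaner and more explicit than the paper's terse closing steps) inherits the same constant-tracking looseness present in the original.
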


\begin{proof}
Denote $\ell_{k}(w_k) := \ell(w_k, \xi)$,
    $\sigma^2_{k} :=\text{Var}\,\left(\ell_{k}(w_k)\right)$
Fix $k\in [K]$ and $w_k\in \mathbf{W}_k$. Set $X^{n^k_t}_k := \ell^t_k(w_k) - L_k(w_k)$. By Freedman's inequality, it holds with probability at least $1-e^{-\x}$ for any $n\ge 1$
\begin{align*}
& \max_{s\le n}\sum^s_{i=1} X^i_k =
\max_{t:\, n^t_{k}\le n} \sum_{\tau \in I_{k}(t)}\left( \ell^{\tau}_k(w_k) - L_k(w_k)\right)\\
&\le \sigma_k \sqrt{2n \x} + \frac{2}{3}\x 
\le \sqrt{2 n L(w_k)\x} + \frac{2}{3}\x,
\end{align*}
where the last inequality holds since $\sigma^2_k \le \mathbb{E}\ell_k(w_k) = L_k(w_k)$.
Consequently, for any $m\ge 1$ and $\x > 0$ with probability at least $1-e^{-\x}$
\begin{align*}
&\max_{2^{m-1}\le s < 2^m}  \sum^s_{i=1} X^i_k\le \sqrt{2^{m+1} L_k(w_k) \x} + \frac{2}{3}\x.
\end{align*}
Setting $\x_s := \x - \frac{2}{3} + 2 \log\left(1 + \log s\right)$, we use union bound over $m$ and get for any $s \ge 1$
\[
\frac{1}{s}\sum^s_{i=1} X^i_k \le 2\sqrt{\frac{ L_k(w_k)}{s}\x_s} + \frac{2}{3s}\x_s.
\]
In other words, for any $t \ge 1$
\[
\frac{1}{n^t_k}\sum^s_{\tau \in I_k(t)} \ell^{\tau}_k(w_k) \le L_k(w_k) + 2\sqrt{\frac{ L_k(w_k)}{n^t_k}\x_{n^t_k}} + \frac{2}{3n^t_k}\x_{n^t_k}.
\]
Combining this result with Assumption~\ref{def:pointwise}, we get
\[
L_{\mathscr{A}_k}(t) \le L_k(w_k) + 2\sqrt{\frac{ L_k(w_k)}{n^t_k}\x_{n^t_k}} + \frac{2}{3n^t_k}\x_{n^t_k} + \frac{U_k(t, e^{-\x})}{n^t_k}.
\]
Note that
\[
L_k(w_k) \ge L_{\mathscr{A}_k}(t) - \left( 
\frac{2}{3n^t_k}\x_{n^t_k} + \frac{U_k(t, e^{-\x})}{n^t_k} \right).
\]
It is easy to see that
\[
L_{\mathscr{A}_k}(t) - L_k(w_k)  \le \sqrt{\frac{2\x_{n^t_k}}{n^t_k}}.
\]
The claim follows
\end{proof}

\begin{lemma}[UCB]
\label{lemma:ucb}
    With probability at least $1-e^{-\x}$ for any $t \ge 1$ 
  \begin{multline*}
   \frac{1}{n^t_k}\sum_{\tau\in I_k(\tau)}L_k(w^{\tau}_k) \le L_{\mathscr{A}_k}(t) + \frac{9\x}{2n^t_k}\left(6 + \log \x + \log(1 + 4n^t_kL_{\mathscr{A}_k}(t)) \right) +  \\
   +  \frac{1}{n^t_k}\sqrt{2\x (1+ 4n^t_kL_{\mathscr{A}_k}(t)) \left(1 + \frac{1}{2}\log \left(1 + 4n^t_kL_{\mathscr{A}_k}(t) \right) \right)}.
   \end{multline*}
\end{lemma}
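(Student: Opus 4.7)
The goal is a one-sided, time-uniform Bernstein-type inequality for the martingale $\sum_{\tau\in I_k(t)}\bigl(L_k(w_k^\tau)-\ell_k^\tau(w_k^\tau)\bigr)$, followed by an algebraic inversion so that the right-hand side depends only on the observable empirical mean $L_{\mathscr{A}_k}(t)$ rather than on the unobservable cumulative expected loss $\sum_\tau L_k(w_k^\tau)$. The plan essentially mirrors the proof of Lemma~\ref{lemma:lcb} but with the roles of empirical and expected losses reversed, and with a second layer of peeling that produces the $\log(1+4 n^t_k L_{\mathscr{A}_k}(t))$ factor.

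First, I set $Y_\tau := L_k(w_k^\tau) - \ell_k^\tau(w_k^\tau)$. Because $w_k^\tau$ is measurable with respect to the history $\mathcal{H}_k^{\tau-1}$ while $\xi^\tau\sim D$ is drawn independently, $(Y_\tau)$ is a martingale difference sequence adapted to the natural filtration, exactly as in Step 2 of the proof of Lemma~\ref{prop:anytime-ci-clear}. Since $\ell_k^\tau\in[0,1]$, the conditional second moment is controlled by the conditional mean, $\mathbb{E}[Y_\tau^2\mid\mathcal{F}_{\tau-1}] \le \mathbb{E}[(\ell_k^\tau)^2\mid\mathcal{F}_{\tau-1}] \le L_k(w_k^\tau)$, so the predictable quadratic variation of $M_s:=\sum_{\tau\le s}Y_\tau$ is at most $S_s:=\sum_{\tau\le s}L_k(w_k^\tau)$. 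Freedman's inequality then gives, for each fixed $s$, $M_s\le\sqrt{2\x S_s}+\tfrac{2}{3}\x$ with probability at least $1-e^{-\x}$.

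Next, I convert this into an anytime, fully empirical bound by combining two union bounds. A dyadic peeling over the time index $s\in[2^{m-1},2^m)$, identical to the one used in the proof of Lemma~\ref{lemma:lcb}, replaces $\x$ by $\x_s=\x+2\log(1+\log s)-\tfrac{2}{3}$ and gives $M_s\le\sqrt{2\x_s S_s}+\tfrac{2}{3}\x_s$ uniformly in $s$. A second dyadic peeling over the empirical sum $\hat S_s:=\sum_{\tau\le s}\ell_k^\tau(w_k^\tau) = s L_{\mathscr{A}_k}$ along slices $\hat S_s\in[2^{j-1},2^j)$ contributes an additional union cost of at most $\log_2(1+4 n^t_k L_{\mathscr{A}_k}(t))$ terms, which is the origin of the data-dependent logarithm in the statement. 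Calling the inflated confidence parameter $\x'$, I obtain the implicit inequality $S_s-\hat S_s \le \sqrt{2\x' S_s}+\tfrac{2}{3}\x'$.

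The final, most delicate, step is inverting this implicit inequality. Solving the quadratic in $\sqrt{S_s}$ gives the crude bound $S_s \le 2\hat S_s + O(\x')$, which, resubstituted into the Bernstein radical $\sqrt{2\x' S_s}$, produces a bound of the shape $\sqrt{2\x\,(1+4\hat S_s)\,(1+\tfrac12\log(1+4\hat S_s))}$ up to universal constants, reproducing the stated radical after one identifies $\hat S_s/s$ with $L_{\mathscr{A}_k}(t)$. Careful bookkeeping of $\x'=\x+O(\log\log n^t_k)+\log(1+4 n^t_k L_{\mathscr{A}_k}(t))$ and of the constant in the Freedman term then yields the prefactor $9\x/(2 n^t_k)$ and the additive $6+\log\x+\log(1+4 n^t_k L_{\mathscr{A}_k}(t))$. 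The main obstacle is precisely this inversion: one must run the two peelings in the right order and feed the crude bound $S_s\le 2\hat S_s+O(\x')$ back into the radical without losing the Bernstein-type improvement at small $\hat S_s$, while simultaneously absorbing the peeling logarithms into a self-consistent closed-form expression.
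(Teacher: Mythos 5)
Your overall architecture — a time-uniform Bernstein-type bound for the martingale $\sum_{\tau}(L_k(w_k^\tau)-\ell_k^\tau(w_k^\tau))$ followed by an algebraic inversion into the observable $\hat S_s = n^t_k L_{\mathscr{A}_k}(t)$ — is exactly the skeleton of the paper's proof: the paper reduces Lemma~\ref{lemma:ucb} to Lemma~\ref{lemma:aux}, whose first part is the anytime concentration step and whose second part (Lemma~\ref{lemma:aux2}) is precisely your "crude bound $S_s\le 2\hat S_s+O(\x')$ resubstituted into the radical" inversion. The technical tool differs: instead of Freedman plus dyadic peeling, the paper uses the pointwise bound $|X_i-\mu_i|^2\le X_i+\mu_i$ to show that $\exp\{\lambda(S_t-U_t)-\tfrac{\lambda^2}{2}(S_t+3U_t)\}$ is a supermartingale (Theorem 9.21 of de la Peña et al.) and then invokes a method-of-mixtures corollary (Corollary 12.5 there) applied at a stopping time, which delivers uniformity over both time and the magnitude of the normalizer in one shot and is the source of the $(1+\tfrac12\log(\cdot))$ factor. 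Your peeling route can reproduce this, at the cost of more bookkeeping.

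However, there is a genuine flaw in your plan as stated: the second peeling is taken over the \emph{empirical} sum $\hat S_s$, with slices $\{\hat S_s\in[2^{j-1},2^j)\}$. Freedman's inequality bounds $\mathbb{P}(M_s\ge u,\ V_s\le v)$ for a \emph{deterministic} $v$ bounding the predictable quadratic variation $V_s\le S_s=\sum_\tau L_k(w_k^\tau)$; restricting to a slice of $\hat S_s$ gives no control whatsoever on $S_s$ — the event $S_s\gg\hat S_s$ is exactly the bad event you are trying to rule out, so you cannot assume it away inside the slice. The peeling must instead be over the predictable variation $S_s$ itself (or over the combined normalizer $S_s+\hat S_s$, which is what the paper's choice $B^2=S_\tau+3U_\tau$ effectively does). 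Once you peel over $S_s$, the argument closes: you obtain the implicit inequality $S_s-\hat S_s\le\sqrt{2\x' S_s}+\tfrac23\x'$ with $\x'$ involving $\log$-corrections in $S_s$, and the data-dependent factor $\log(1+4\hat S_s)$ then emerges only \emph{after} the inversion, via $\log S_s\lesssim\log(2\hat S_s+O(\x'))$ — exactly the self-referential bound on $c=\log(b+3\Delta)$ that Lemma~\ref{lemma:aux2} resolves. With that correction your proof goes through; as written, the second union bound is applied to the wrong random variable and the Freedman step inside each slice is invalid.
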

\begin{proof}
Now let $w^t_k \in \mathbf{W}_k$ be the state of the expert at time $t$ and set $Y^{n^t_k}_k := \ell^t_k(w^t_k)$. Applying Lemma \ref{lemma:aux}, we get the result.
\end{proof}

\begin{lemma}
\label{lemma:aux}
    Let $X_t \in [0, 1]$ be a process adapted to a filtration $\mathcal{F}_t$.
    Set $\mu_t := \mathbb{E}[X_t| \mathcal{F}_{t-1}]$
    Let $S_t = \sum^t_{i=1} X_i$ and $U_t = \sum^t_{i=1} \mu_i$.  Then for any $\x \ge 1$ with probability at least $1 - e^{-\x}$ and for all $t$ simultaneously it holds
    \begin{equation}
       \label{def:aux_bound_stopping} 
    \left|S_t - U_t \right| \le 
    \sqrt{2\x \left(S_t + 3U_t + 1\right)\left(1 + \frac{1}{2}\log(S_t + 3U_t + 1) \right)}.
    \end{equation}
   Moreover, 
\begin{equation*}
    U_t \le S_t + \frac{9\x}{2}\left(6 + \log \x + \log(1 + 4S_t) \right)  
   + \sqrt{2\x (1+ 4S_t) \left(1 + \frac{1}{2}\log \left(1 + 4S_t \right) \right)}.
\end{equation*}

\end{lemma}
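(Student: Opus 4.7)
The plan is to derive the bound via a time-uniform self-normalized Bernstein inequality applied to the martingale $M_t := S_t - U_t$, and then invert the resulting implicit estimate to produce an explicit one-sided bound on $U_t$. Since $X_t \in [0,1]$, the increments $\Delta M_t = X_t - \mu_t$ lie in $[-1,1]$ and
\[
\mathrm{Var}(X_t \mid \mathcal{F}_{t-1}) \leq \mathbb{E}[X_t^2 \mid \mathcal{F}_{t-1}] \leq \mu_t,
\]
so the predictable quadratic variation satisfies $\langle M\rangle_t \leq U_t$. An analogous algebraic bound gives $\sum_{i\le t}(X_i-\mu_i)^2 \le S_t + U_t$, motivating the trajectory-measurable pivot $V_t := S_t + 3U_t$ as a common a.s.\ upper bound for both the predictable and realized intrinsic times; the constant $3$ is chosen so that the subsequent inversion step yields a clean closed-form dependence on $S_t$.

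Next I would invoke a time-uniform, self-normalized Bernstein inequality for martingales. The standard construction is the method of mixtures applied to the exponential supermartingale $\exp\bigl(\lambda M_t - \psi(\lambda)\langle M\rangle_t\bigr)$ with Bennett's $\psi(\lambda) = \lambda^2/\bigl(2(1-\lambda/3)\bigr)$, integrated against a suitable mixing density over $\lambda$ (alternatively, a dyadic peeling over the scale of $V_t$ combined with a union bound). This yields that for any $\x \ge 1$, with probability at least $1 - e^{-\x}$, simultaneously for all $t \ge 1$,
\[
|M_t| \;\le\; \sqrt{2\x\,(V_t+1)\bigl(1 + \tfrac{1}{2}\log(V_t+1)\bigr)}.
\]
Since the right-hand side is monotone in $V_t$, we may substitute any a.s.\ upper bound on $\langle M\rangle_t$; taking $V_t := S_t + 3U_t$ delivers \eqref{def:aux_bound_stopping}.

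For the explicit bound on $U_t$, abbreviate the right-hand side of \eqref{def:aux_bound_stopping} as $R(S_t,U_t)$ and use $U_t - S_t \le |S_t - U_t| \le R(S_t,U_t)$. I would resolve the implicit inequality in two passes. First, treat the logarithmic factor as a constant (evaluated at a crude a priori overestimate of $U_t$) to reduce the bound to the form $y \le c + b\sqrt{a+dy}$ with $d=3$; standard inversion of this square-root inequality gives $U_t \le S_t + O\bigl(\x\,\log(\x(1+S_t))\bigr)$. Second, plug this crude bound back into $\log(S_t + 3U_t + 1)$ to see that the argument is dominated by $1+4S_t$ times a factor polylogarithmic in $\x$, so the square root collapses to $\sqrt{2\x(1+4S_t)\bigl(1 + \tfrac12 \log(1 + 4S_t)\bigr)}$ up to lower-order terms, while the residual linear correction from the second-pass inversion becomes the additive term $\tfrac{9\x}{2}\bigl(6 + \log\x + \log(1+4S_t)\bigr)$. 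The coefficient $9/2 = 3^2/2$ tracks the factor $d=3$ in the pivot being squared during the quadratic-in-$\sqrt{y}$ inversion.

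The main obstacle is establishing the self-normalized bound in a genuinely time-uniform way: a fixed-$t$ Freedman inequality followed by a union bound over $t$ would cost an extra $\log t$ factor, and the pivot $V_t$ is random, so the exponential rate $\lambda$ cannot be tuned a priori. The method-of-mixtures construction removes both issues at once, but choosing the mixing density so that the final correction is exactly $\bigl(1 + \tfrac12 \log(V_t+1)\bigr)$ -- rather than, say, an iterated-logarithm or polynomial-in-$\log$ factor -- requires a carefully chosen (gamma-type) prior on the inverse-scale parameter and the corresponding explicit evaluation of the mixture integral. The remainder of the proof is algebraic.
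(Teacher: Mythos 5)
Your proposal follows essentially the same route as the paper: both establish the time-uniform bound by showing $\exp\bigl(\lambda(S_t-U_t)-\tfrac{\lambda^2}{2}(S_t+3U_t)\bigr)$ is a supermartingale (the paper via the canonical assumption of de la Pe\~na, Lai and Shao, using $|X_i-\mu_i|^2\le X_i+\mu_i$ and $\mathbb{E}[|X_i-\mu_i|^2\mid\mathcal F_{i-1}]\le 2\mu_i$ to fold both the realized and predictable variation into the pivot $S_t+3U_t$), then apply the method-of-mixtures/pseudo-maximization corollary with a stopping-time argument to get exactly the $\sqrt{2\x(B^2+1)(1+\tfrac12\log(B^2+1))}$ form, and finally invert algebraically (the paper's Lemma~\ref{lemma:aux2}) to obtain the explicit bound in terms of $S_t$. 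The only cosmetic difference is that you phrase the exponential supermartingale with Bennett's $\psi$ and a predictable-variation pivot before substituting the larger adapted process, whereas the paper works with the pure quadratic form valid for all $\lambda\in\mathbb R$ from the outset; the inversion constants you identify ($9\x/2$, the $1+4S_t$ argument) match the paper's computation.
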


\begin{proof}
    First, we note that $|X_i - \mu_i|^2 \le X_i + \mu_i $ and $\mathbb{E}[|X_i - \mu_i|^2 | \mathcal{F}_{i-1}]\le 2\mu_i$. Thus, by Theorem 9.21 from \cite{de2009self}, the process $\exp\left\{\lambda (S_t - U_t) -\lambda^2/2 (S_t + 3U_t) \right\}$ is a super-martingale
    for all $\lambda \in \mathbb{R}$.
     Define the stopping time $\tau$ as the first moment when \eqref{def:aux_bound_stopping} is violated. Then Corollary 12.5 from \cite{de2009self} with $A = S_{\tau} - U_{\tau}$, $B^2 = S_{\tau} + 3U_{\tau}$ and $y=1$ ensures that with probability at least $1-e^{-\x}$ 
     \[
     A \le   \sqrt{2\x (B^2 + 1)\left(1 + \frac{1}{2}\log(B^2 + 1) \right)}.
     \]
     Using the definition of $A$, $B$, and $\tau$, we get the first result. The second result follows from Lemma~\ref{lemma:aux2}
\end{proof}

\section{Limitations and Future Work}

Our analysis assumes that the confidence scaling function $c(\delta)$ is known.  
The case of unknown $c(\delta)$ is considered by Pacchiano et al.~\cite{pacchiano2020model},   where the resulting regret bounds exhibit a weaker dependence on $c(\delta)$,  while Dann et al.~\cite{dann2024data} estimate it online,  yielding adaptive but less interpretable guarantees.

Another important direction concerns algorithms with additional observations per round,  such as limited-advice and multi-play bandits~\cite{seldin2014prediction,yun2018multi,kveton2015tight}.  
Extending their analysis to the setting of \emph{learnable experts} would unify these frameworks.

Finally, a promising extension is the \emph{contextual} regime,  
where experts specialize based on observed features or domains,  
connecting our framework to contextual bandits~\cite{foster2019model}.



\bibliographystyle{unsrt}
\bibliography{lib}


\clearpage

\appendix

\section{Numerical Experiments}

We evaluate the performance of our proposed algorithm, \algname{M-FLCB}, on synthetic problems designed to test its ability to manage adaptive arms under a computational budget. 
We compare against two baselines: 
\EDRB \cite{dann2024data}, an algorithm with guarantees for learnable arms, 
and \algname{LimitedAdvice} \cite{seldin2014prediction}, an expert algorithm capable of handling multiple arm updates per round. 
In all experiments we consider update limits $M \in \{1,2,3\}$, average results over 30 independent runs, and display $\pm 0.5$ standard deviations as shaded regions. 

\subsection{Model Selection among Generalized Linear Models}

We consider a model selection problem with $K=10$ arms. 
Each arm $k$ represents a generalized linear model (GLM) with a distinct, fixed link function $f_k:\mathbb R\to\mathbb R$.  
At each round $t$, a feature vector $x_t \in \mathbb R^d$ is drawn uniformly from the unit sphere $\mathbb S^{d-1}$, 
and the label is generated by the optimal arm $k^\star=9$ as
\begin{equation}
r_t = f_{k^\star}(w^\top x_t),
\end{equation}
As illustrated in Figure~\ref{fig::glm_functions}, the link functions $f_7, f_8,$ and $f_9$ are highly similar in regions where the data is dense, 
presenting a nontrivial exploration challenge.

\begin{figure}
    \centering
    \includegraphics[width=1.\linewidth]{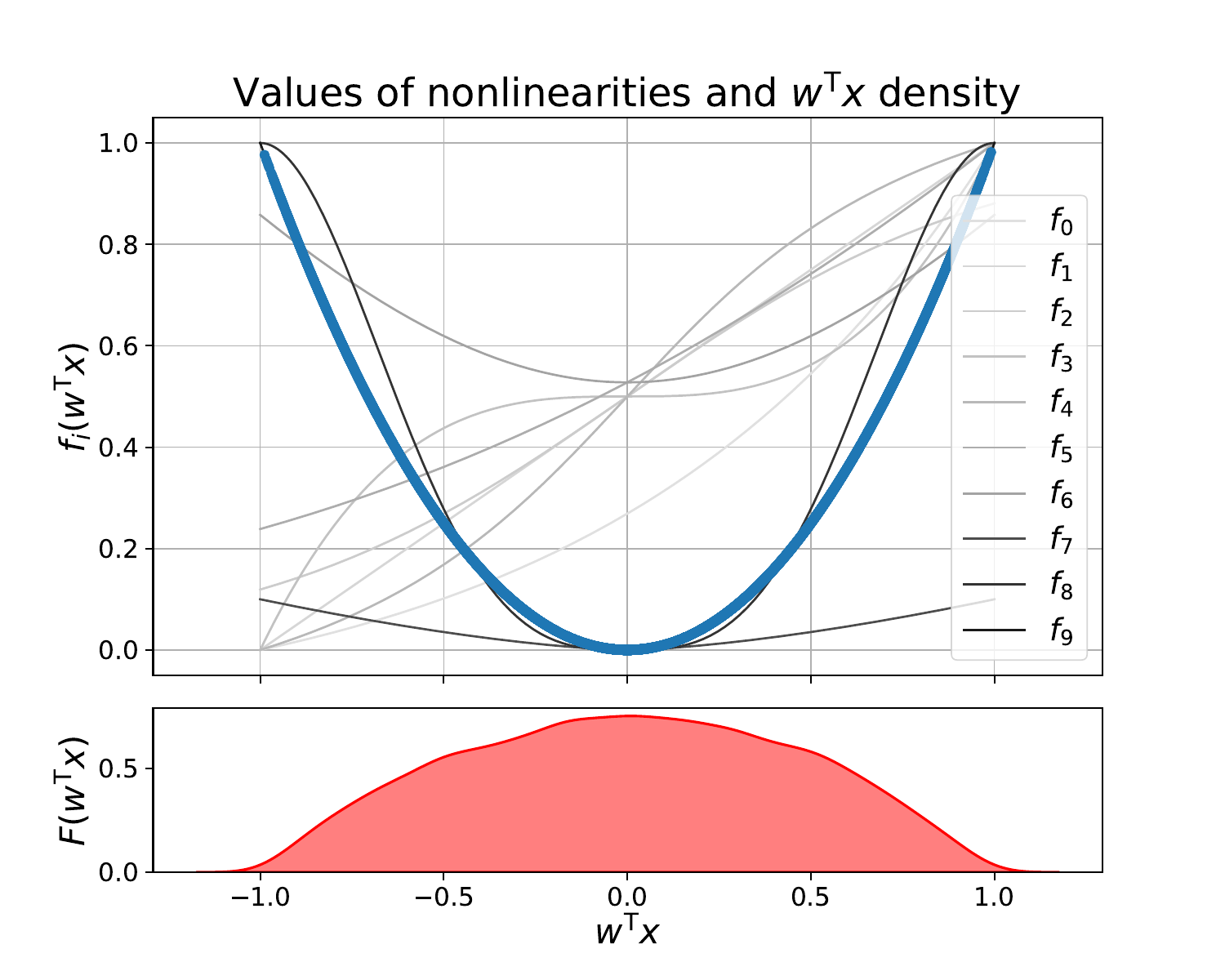}
    \caption{Nonlinear link functions associated with the arms (top) and the density of generated data points (bottom). 
    One can see that the last three functions are highly similar where the data is concentrated, making it hard to distinguish the optimal arm.}
    \label{fig::glm_functions}
\end{figure}

\paragraph{Results.}
Figure~\ref{fig:exp2} summarizes the results.  
Panel (a) shows that \algname{M-FLCB} achieves sublinear regret and is competitive with both baselines.  
Panel (b) reports the final arm selection distribution: \algname{M-FLCB} successfully identifies the optimal arm ($k=9$). 
Panel (c) presents the distribution of the computational budget across arms.  
\algname{M-FLCB} allocates updates primarily to top-performing arms, while \algname{LimitedAdvice} spreads updates more evenly, leading to less efficient use of the training budget.

\begin{figure}
    \centering
    \includegraphics[width=0.7\linewidth]{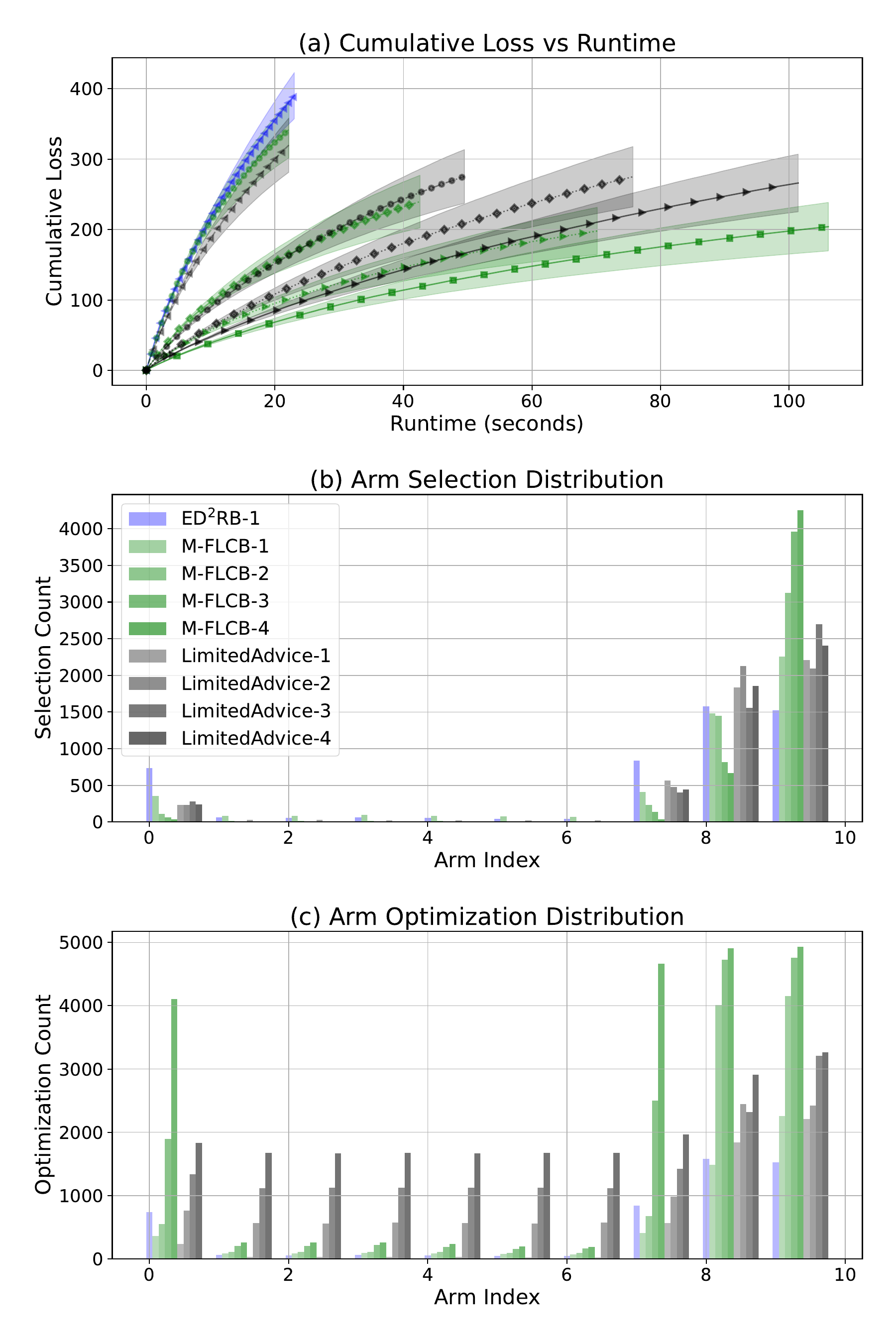}
    \caption{Performance comparison on the GLM model selection problem. 
    (a) Cumulative regret. 
    (b) Final distribution of arm selection. 
    (c) Allocation of computational budget across arms.}
    \label{fig:exp2}
\end{figure}

\subsubsection*{Hyperparameters}
For \EDRB, the exploration parameter was tuned, with $c=0.1$ giving the best results.  
For \algname{M-FLCB}, concentration terms were scaled by a factor of $0.3$.  
Parameters for \algname{LimitedAdvice} were set according to its theoretical analysis \cite{seldin2014prediction}.



\begin{table}[ht!]
\centering
\caption{%
Examples of inner-arm convergence rates $U_k(n,\delta)$ and resulting global regret (up to logarithmic factors and an additive term $O(\sqrt{(K/M)T})$).
For each expert $k$, the inner algorithm satisfies $U_k(t,\delta)=O(\beta_k\,t^{\alpha}c(\delta))$,
and the corresponding global regret scales as 
$O\!\big(T^{\alpha}c(\delta)\,\|{\boldsymbol{\beta}}\|_{M, 1-\alpha}\big)$, 
where
$\|{\boldsymbol{\beta}}\|_{M,\gamma}
=\Bigl(\tfrac{1}{M}\sum_{k=1}^{K}\beta_k^{\frac{1}{\gamma}}\Bigr)^{\gamma}$.
Parameter conventions:
$K$ — \# experts; $M$ — per-round training budget; $T$ — horizon;
$N_k$ — \# base arms/actions; $d_k$ — feature dimension;
$\varepsilon$ — heavy-tail moment exponent ($\mathbb{E}|X|^{1+\varepsilon}\!\le\sigma^{1+\varepsilon}$);
$L, R, C, G$ — Lipschitz, diameter, range, and gradient constants.
}
\label{tab:convergence_examples}
\begin{tabular}{p{6.0cm} @{\hspace{1cm}} p{4.1cm} p{4.4cm}}
\toprule
\textbf{Inner algorithm / problem}
& \textbf{Inner rate $U_k(n,\delta)$}
& \textbf{Global regret (up to logs)} \\
\midrule
OGD / OMD (convex Lipschitz)
& $O(G_kR_k\sqrt{n})$, $\alpha=\tfrac{1}{2}$,
& $O\!\big(T^{1/2}\,\|GD\|_{M,1/2}\big)$ \\

\midrule
\parbox[t]{6.0cm}{Bandit Convex Optimization \\(bounded $f_t$)~\cite{flaxman2004online}}
& $O(C_kd_kn^{5/6})$, $\alpha=\tfrac{5}{6}$
& $O\!\big(T^{5/6}\,\|Cd\|_{M,1/6}\big)$ \\

\midrule
\parbox[t]{6.0cm}{Bandit Convex Optimization\\($L$–Lipschitz $f_t$)~\cite{flaxman2004online}}
& \parbox[t]{4.1cm}{ $O(\sqrt{C_kL_kR_k}d_kn^{3/4})$,\\ $\alpha=\tfrac{3}{4}$}
& $O\!\big(T^{3/4}\,\|\sqrt{CLR}d\|_{M,1/4}\big)$ \\

\midrule
Heavy–tailed stochastic bandits~\cite{bubeck2013bandits}
& $\tilde O(n^{\alpha}N_k^{1-\alpha})$, $\alpha=\tfrac{1}{1+\varepsilon}$
& $O\!\big(T^{\alpha}\, [\frac{1}{M}\sum_{k=1}N_k]^{1 - \alpha})$ \\

\midrule
\parbox[t]{6.0cm}{Heavy–tailed stochastic bandits\\(Symmetric noise)~\cite{dorn2024fast}}
& $O(\sqrt{N_k n} {\log n}^{\frac{3}{2}})$, $\alpha=\tfrac{1}{2}$
& $O\!\big(T^{1/2}\,[\frac{1}{M}\sum_{k=1}N_k]^\frac{1}{2}\big)$ \\

\midrule
Heavy–tailed linear bandits~\cite{tajdini2025improved}
& $O(d_k^{\frac{3}{2} - \alpha} n^{\alpha})$, $\alpha=\tfrac{1}{1 + \varepsilon}$
& $O\big(T^{\alpha}\,\|d^{\frac{3}{2} - \alpha}\|_{M,1 - \alpha}\big)$ \\

\midrule
\parbox[t]{6.0cm}{Hedge / Exponential Weights \\(over $N_k$ experts)}
& $O(\sqrt{n\log N_k})$, $\alpha=\tfrac{1}{2}$,
& $O\!\big(T^{1/2}\,\|\sqrt{\log N}\|_{M,1/2}\big)$ \\
\bottomrule
\end{tabular}
\end{table}

\section{Usage Examples}
\label{sup:applications}
Table~\ref{tab:convergence_examples} summarizes several representative base algorithms,  their inner convergence rates, and the resulting global regret when combined with \algname{M-FLCB}.  
Alongside standard convex and exponential-weighted learners,  we include ``hard'' stochastic problems with heavy-tailed rewards, which exhibit slower convergence characterized by larger $\alpha$.  

Different experts may operate over distinct action spaces.  
For instance, in bandit-based experts, each learner may control its own set of arms,   while in parametric or linear models, the dimensionality of the feature space may vary.  Such heterogeneity is reflected in parameters like $N_k$ or $d_k$ in Table~\ref{tab:convergence_examples},   and is naturally handled within the \algname{M-FLCB} framework.

\section{Auxiliary results}

\begin{lemma}
\label{lemma:aux2}
Under conditions of Lemma\ref{lemma:aux} 
the following inequality holds
\begin{align}
&U_t \le S_t
+ \frac{9\x}{2}\Bigl(6+\log \x+\log(1+4S_t)\Bigr)\\
&+ \sqrt{\,2\x(1+4S_t)\Bigl(1+\tfrac12\log(1+4S_t)\Bigr)}
\end{align}
\end{lemma}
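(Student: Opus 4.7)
The plan is to invert the implicit bound from the first statement of Lemma~\ref{lemma:aux} into an explicit one. Starting from $U_t - S_t \le \sqrt{2\x(S_t + 3U_t + 1)(1 + \tfrac12 \log(S_t + 3U_t + 1))}$ (one direction of the absolute-value bound), I introduce the shorthand $D := U_t - S_t$, $Y := 1 + 4S_t$, and $Z := S_t + 3U_t + 1 = Y + 3D$. The inequality reads $D \le \sqrt{2\x Z(1 + \tfrac12 \log Z)}$, and the goal is to eliminate the implicit dependence on $Z$ (equivalently, on $U_t$) in the right-hand side.

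The first step is a standard decoupling. Writing $3D = \sqrt{9 \cdot 2\x Z(1 + \tfrac12 \log Z)}$ and using $\sqrt{ab} \le (a+b)/2$ with $a = Z$ and $b = 18\x(1 + \tfrac12 \log Z)$ gives $3D \le Z/2 + 9\x(1 + \tfrac12 \log Z)$. Since $3D = Z - Y$, this rearranges to $Z \le 2Y + 18\x + 9\x \log Z$, so $Z$ obeys a self-referential inequality of the form $Z \le A + B \log Z$ with $A = 2Y + 18\x$ and $B = 9\x$.

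The core technical step is inverting this self-referential bound. Using $\log z \le \alpha z - 1 - \log \alpha$ (valid for all $z, \alpha > 0$) with the choice $\alpha = 1/(2B)$, the $B \log Z$ term is absorbed into $Z/2$, yielding an explicit bound $Z \le 2A + 2B \log(2B) - 2B = O(Y + \x + \x \log \x)$. In particular, $\log Z$ is bounded by $\log Y + \log \x$ plus an absolute constant, matching the structure of the target bound.

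Finally, I would plug this control on $\log Z$ back into $D \le \sqrt{2\x Z(1 + \tfrac12 \log Z)}$ and split $Z = Y + 3D$ inside the square root. The piece involving $Y$ alone contributes $\sqrt{2\x Y(1 + \tfrac12 \log Y)}$, matching the square-root term in the target, while the residual containing $3D$ and the log corrections get absorbed into the additive term $\tfrac{9\x}{2}\bigl(6 + \log \x + \log Y\bigr)$ via one more AM-GM step. The main obstacle is pure bookkeeping: tracking multiplicative constants through the AM-GM step, the log-inversion lemma, and the final re-substitution so as to land on exactly the coefficient $\tfrac{9\x}{2}$ and the constant $6$ appearing in the claim; no essentially new idea is required beyond the standard implicit-to-explicit inversion used for Bernstein/Freedman-type tails.
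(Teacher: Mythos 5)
Your overall strategy (turning the implicit Freedman-type bound of Lemma~\ref{lemma:aux} into an explicit one by controlling $Z=1+S_t+3U_t$) is the same in spirit as the paper's, and your intermediate steps are sound: the AM--GM decoupling does give $Z\le 2Y+18\x+9\x\log Z$ with $Y=1+4S_t$, and the tangent-line inversion $\log z\le\alpha z-1-\log\alpha$ correctly yields $Z=O(Y+\x\log\x)$ and hence $\log Z\le\log Y+\log\x+O(1)$. The gap is in the final re-substitution, which you dismiss as bookkeeping but which in fact cannot produce the stated constants. Two losses are unavoidable on your route. First, splitting $Z=Y+3D$ inside $D\le\sqrt{2\x Z(1+\tfrac12\log Z)}$ and absorbing the $3D$ part by AM--GM with weight $\epsilon$ leaves $D\le\tfrac{1}{1-\epsilon}\sqrt{2\x Y(\cdot)}+O(\x(\cdot)/\epsilon)$, so the square-root term carries a coefficient strictly larger than the claimed $1$. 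Second, replacing $\log Z$ by $\log Y+\log\x+O(1)$ \emph{inside} the square root creates an excess of order $\log\x\cdot\sqrt{\x Y/\log Y}$, which for $Y\gg\x$ grows without bound relative to the additive budget $\tfrac{9\x}{2}(6+\log\x+\log Y)$ (already at $\x=10^6$, $Y=10^{12}$ the excess is several times that budget), and absorbing it into the square-root term instead would again inflate its coefficient. So your argument proves the inequality only up to a constant factor on the leading term, not the lemma as stated.

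The paper avoids both losses with a different decoupling. It squares the bound to obtain a quadratic inequality $\Delta^2\le\tfrac{3a}{2}(3+c)\Delta+ab\bigl(1+\tfrac12\log b\bigr)$ in $\Delta=U_t-S_t$, where $a=2\x$, $b=1+4S_t$, $c=\log(b+3\Delta)$. The key step is the concavity linearization $\log(b+3\Delta)\le\log b+3\Delta/b$ applied to the \emph{constant} term of this quadratic: the excess of $\log Z$ over $\log b$ then appears only as a linear-in-$\Delta$ contribution absorbed into the quadratic's linear coefficient, so the quadratic-formula solution carries the square-root term $\sqrt{ab(1+\tfrac12\log b)}$ with coefficient exactly $1$. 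The remaining self-referential quantity is $c$ itself (not $Z$), and the fixed-point bound $c\le\tfrac32\bigl(\log b+\log(40\x)\bigr)$ feeds only the additive term, yielding $\tfrac{9\x}{2}(6+\log\x+\log b)$. If you need the lemma with its stated constants, this linearization-at-$b$ idea (or an equivalent) is the missing ingredient; if a constant-factor-weaker bound suffices downstream, your route is adequate.
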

\begin{proof}
Consider 
 \[
U_t \le S_t
+ \sqrt{\,2\x\,\bigl(1+S_t+3U_t\bigr)\Bigl(1+\tfrac12\log\bigl(1+S_t+2U_t\bigr)\Bigr)} .
\]
Set $\Delta := U_t - S_t$. If $\Delta \le 0$, the bound holds.\\
\textbf{Case $\Delta \ge 0.$} Note that
\[
\Delta^2 \le 2\x\,\bigl(1+4S_t+3\Delta\bigr)\Bigl(1+\tfrac12\log\bigl(1+4S_t+2\Delta\bigr)\Bigr).
\]

\text{Denote }  $a:=2\x\ge 2$, $b:=1+4S_t\ge 1$. Thus,
\[
c:=\log(b+3\Delta)\le \log b+\frac{3\Delta}{b}\qquad\text{(by concavity).}
\]
Consequently, 
\[
\begin{aligned}
\Delta^2
&\le a(b+3\Delta)\Bigl(1+\frac{c}{2}\Bigr) \\
&\le 3a\Bigl(1+\frac{c}{2}\Bigr)\Delta
   + ab + ab\,\frac{\log b}{2} + \frac{3a}{2}\Delta \\
&= \frac{3a}{2}(3+c)\,\Delta
   + ab\Bigl(1+\frac{\log b}{2}\Bigr).
\end{aligned}
\]
By the quadratic, inequality we get
\[
\Delta \le \frac{3a}{2}(3+c)
          + \sqrt{ab\Bigl(1+\frac{\log b}{2}\Bigr)}.
\]
Thus, 
\[
\begin{aligned}
b+3\Delta
&\le b+\frac{9a}{2}(3+c)
   + 3\sqrt{ab\Bigl(1+\frac{\log b}{2}\Bigr)} \\[1mm]
&\le b+\frac{27a}{2}\Bigl(1+\frac{c}{3}\Bigr)
   + 3\sqrt{ab\Bigl(1+\frac{c}{2}\Bigr)} \\[1mm]
&\le \Biggl(\sqrt{b}
          + \sqrt{\frac{27}{2}\,a\Bigl(1+\frac{c}{3}\Bigr)}\Biggr)^2 .
\end{aligned}
\]

\[
\begin{aligned}
c=\log(b+3\Delta)
&\le 2\log\!\left(\sqrt{b}
          + \sqrt{\frac{27}{2}\,a\Bigl(1+\frac{c}{3}\Bigr)}\right) \\[1mm]
&\le 2\log\sqrt{b}
   + 2\log\!\left(1+\sqrt{\frac{27}{2}\,\frac{a}{b}\Bigl(1+\frac{c}{3}\Bigr)}\right) \\[1mm]
&\le \log b + \log(20a) + \frac{c}{3}.
\end{aligned}
\]
Consequently, $c \le \frac{3}{2}\,\bigl(\log b+\log(20a)\bigr)$. Thus
\[
\Delta \le \frac{9a}{4}\Bigl(2+\log(20a)+\log b\Bigr)
          + \sqrt{ab\Bigl(1+\frac{\log b}{2}\Bigr)}.
\]
The claim follows.

\end{proof}

\section{Lower Bounds}
\label{sec:lower-bounds}

We establish minimax lower bounds for our problem setup. 
The proof combines information-theoretic arguments (as in \cite{seldin2014prediction}) 
with the internal hardness of heavy-tailed bandits \cite{bubeck2013bandits}. 
The main idea is to construct a family of perturbed games, 
relate the probability of identifying the optimal expert to KL divergences via Pinsker’s inequality, 
and transfer internal regret bounds from the null game (where all experts are identical) 
to the perturbed games.

\paragraph{Bandit setting and regret conventions.}
For concreteness, we consider experts represented by independent two–armed stochastic bandit problems.  
Each expert $h \in [K]$ has two arms with losses with expectations are $\ell_{h,1}$ and $\ell_{h,2}$, respectively. We note $\ell^\star_h = \min\{\mu_{h,1},\,\mu_{h,2}\}$  
At each round $t = 1,\dots, T$, expert $h$ selects an arm $I_t \in \{1,2\}$ and receives the corresponding loss $\ell_{h,I_t,t}$.
The (expected) cumulative regret of expert $h$ within its own subproblem is defined as
\begin{equation}
\label{eq:internal-regret-def}
R_{h}^{\mathrm{in}}(T) =\sum_{t=1}^T \mathbb{E} [\ell_{h,I_t}] - T \cdot \ell_h^\star,
\end{equation}
where the subscript “in” emphasizes that this regret is \emph{internal} to the metaprocedure.  
That is, each expert $h$ acts as an independent learning agent whose own regret $R_{h}^{\mathrm{in}}(T)$ contributes to the overall regret of the meta–learner.

\subsection{\texorpdfstring{Class of $\alpha$–hard stochastic tasks}{Class of alpha-hard stochastic tasks}}

\begin{theorem}[Lower bound]
\label{thm:mt-alpha}
Consider $K$ experts, horizon $T$, and a per-round budget $M$. Fix $\alpha\in[0.5,1]$. There exists a class $\mathcal F_\alpha$ satisfying Definition~\ref{def:alpha-class}, such that if each
expert $k \in [K]$ solves a problem $f_k \in \mathcal F_\alpha$, then, for sufficiently small $\sqrt{\frac{K\log K}{MT}}$ and for any learning algorithm $\mathscr{A}_k$ and meta-procedure $\mathcal{P}$
\begin{equation}
\sup_{f_{k} \in \mathcal{F}_{\alpha},\,k \in [K]} \mathbb{E}\,\mathrm{Reg}(T) \ge c_1\,\sqrt{\frac{K\,T}{M}}
\;+\; c_2\,T^\alpha \left(\frac{K}{M} \right)^{1-\alpha},
\end{equation}
where $c_1,c_2>0$ are absolute constants. 
\end{theorem}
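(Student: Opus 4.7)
The approach is to exhibit a single hard family $\mathcal{F}_\alpha$ of stochastic tasks and to prove the two summands by two separate hard instance constructions inside it, then combine them using $\sup \ge \max\{a,b\}\ge\tfrac12(a+b)$ and absorb the factor $\tfrac12$ into $c_1,c_2$. I would take $\mathcal{F}_\alpha$ to be the family of two-armed stochastic bandits with heavy-tailed rewards whose $(1+\varepsilon)$-moment is uniformly bounded, with $\alpha=1/(1+\varepsilon)\in[1/2,1]$. The minimax lower bounds of \cite{bubeck2013bandits} for this class deliver Definition~\ref{def:alpha-class} and, more importantly for the second summand, the stronger statement that no online-to-batch extractor run on $n$ samples can output a policy with expected suboptimality smaller than $\Omega(n^{\alpha-1})$. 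The boundary case $\alpha=1/2$ reduces to bounded two-armed bandits with a small gap and is what drives the Pinsker/KL arguments of \cite{bubeck2010bandits,seldin2014prediction}.

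For the first summand $c_1\sqrt{KT/M}$ I would adapt the prediction-with-limited-advice lower bound of \cite{seldin2014prediction}. I consider $K$ perturbed global instances, each having a common base distribution on all experts except one distinguished $k^\star$ whose best arm is shifted by a gap $\Delta$, so that $L_{k^\star}^\star=L^\star-\Delta$. The budget forces the meta-procedure to collect at most $MT$ realized losses, so the KL divergence between the null trajectory and the $k^\star$-perturbed trajectory, averaged uniformly over the identity of $k^\star$, is $O(MT\,\Delta^2/K)$. Pinsker's inequality combined with a Le~Cam two-point argument then yields worst-case regret of order $T\Delta$ in the perturbed family, and tuning $\Delta\asymp\sqrt{K/(MT)}$ — permissible precisely under the smallness hypothesis on $\sqrt{K\log K/(MT)}$, which keeps $\Delta\in[0,1]$ — gives $c_1\sqrt{KT/M}$.

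For the second summand $c_2\,T^\alpha(K/M)^{1-\alpha}$ I would fix a symmetric construction in which all $K$ experts are i.i.d.\ copies of the hardest instance in $\mathcal{F}_\alpha$, so $L_k^\star=L^\star$ for every $k$ and $\mathbb{E}[\mathrm{Reg}(T)]=\sum_t\mathbb{E}[L(u_{i_t}^t)-L_{i_t}^\star]$. By the strong heavy-tailed hardness recalled above, each summand is at least $c\,\mathbb{E}[(n_{i_t}^t)^{\alpha-1}]$. The budget constraint $\sum_k n_k^T\le MT$ coupled with a pigeonhole on the advisor sequence (on any constant fraction of rounds the advisor's training count cannot exceed $2MT/K$, since otherwise the total budget would be violated), together with Jensen's inequality on the convex map $n\mapsto n^{\alpha-1}$ (convex on $(0,\infty)$ because $\alpha-1\in[-1/2,0]$), then produces $\mathbb{E}[\mathrm{Reg}(T)]\ge c_2\,T\,(MT/K)^{\alpha-1}=c_2\,T^\alpha(K/M)^{1-\alpha}$.

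The main obstacle will be the pigeonhole/coupling step for the second summand: an adaptive meta-procedure could in principle concentrate its entire budget on a tiny subset of experts and always select the advisor from the well-trained ones, which if allowed would reduce the $(K/M)^{1-\alpha}$ factor to $M^{\alpha-1}$. I would close this gap by superposing the two constructions — hiding a uniformly chosen ``best'' expert inside the symmetric null, so that concentrating the budget forfeits identification and activates the first lower bound instead. This standard trick couples the exploration and internal-learning lower bounds and is exactly where the smallness assumption on $\sqrt{K\log K/(MT)}$ earns its keep, guaranteeing that the exploration regime where Pinsker is tight coincides with the regime where the average per-expert training count is $\le 2MT/K$.
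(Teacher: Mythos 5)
Your overall architecture matches the paper's: the same hard class $\mathcal F_\alpha$ (two-armed heavy-tailed bandits from \cite{bubeck2013bandits}), a Pinsker/KL limited-advice argument for the $\sqrt{KT/M}$ term with $\varepsilon\asymp\sqrt{K/(MT)}$, and, once you "superpose" the two constructions, essentially the paper's composite game (a uniformly hidden $\varepsilon$-better expert on top of $K$ internally $\Delta$-hard subproblems). The identification half of your argument is sound and is what the paper does.

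The gap is in the second summand, and your proposed fix does not close it. First, the claim that a single fixed instance forces per-round suboptimality $\Omega(n^{\alpha-1})$ at \emph{every} training count $n$ is not what \cite{bubeck2013bandits} gives you: the bound $R^{\mathrm{in}}(n)\ge n\Delta\bigl(1-c_\beta\sqrt{n\Delta^{(1+\beta)/\beta}}\bigr)$ holds for a \emph{fixed} gap $\Delta$ and becomes vacuous once $n$ exceeds the scale $\Delta^{-(1+\beta)/\beta}$ to which $\Delta$ was tuned; an expert trained $T\gg MT/K$ times on a fixed instance can simply learn it. So your Jensen/pigeonhole step needs the realized counts to actually sit near $MT/K$, which is exactly what a budget-concentrating meta-procedure violates. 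Second, the dichotomy you invoke to handle concentration ("concentrating forfeits identification and activates the first lower bound") only charges such a strategy $\varepsilon T\asymp\sqrt{KT/M}$, whereas the term you are trying to establish, $T^\alpha(K/M)^{1-\alpha}$, \emph{dominates} $\sqrt{KT/M}$ whenever $\alpha>1/2$ and $MT\gg K$; raising $\varepsilon$ to $\asymp\Delta=(K/(MT))^{1-\alpha}$ so that the identification penalty covers the second term would blow up the KL budget ($\varepsilon^2MT\gg K$) and break Pinsker. So the dichotomy yields only the maximum of the first term and a degraded second term, not their sum. The paper's route is different at precisely this point: it fixes $\Delta$ tuned to $n\approx MT/K$, proves a concentration-of-counts lemma in the \emph{null} game (each expert is trained $(1\pm\delta)MT/K$ times with high probability, enforced by a random permutation of expert indices and a martingale Bernstein bound), and then transfers this event to each perturbed game via Pinsker using $\mathrm{KL}(P_\emptyset\|P_h)\le 1/2$ — so the fixed-$\Delta$ internal bound and the identification bound fire \emph{simultaneously} in the same perturbed games. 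That concentration lemma, or some substitute for it, is the missing ingredient in your proposal.
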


We consider heavy-tailed multi-armed bandits \cite{bubeck2013bandits} as base to build $\mathcal{F}_{\alpha}$. 
Each arm $i$ provides rewards with mean $\mu_i$ and $(1+\beta)$-moment bounded noise:
\begin{equation}
\label{eq:noise_level}
    \mathbb{E}_{X\sim \nu_i} |X - \mu_i|^{1 + \beta}\leq u,
\end{equation}
for some $u>0$ and $\beta\in(0,1]$.  

In our proof, as the canonical class $\mathcal{F}_\alpha$, we consider the \emph{heavy-tailed multi-armed bandits} introduced by \cite{bubeck2013bandits}.  
From their analysis, the following corollary holds:

\begin{corollary}[from Bubeck et al., 2013, Thm.~2]
\label{cor:heavy-tail-lb}
For a two-armed heavy-tailed bandit satisfying \eqref{eq:noise_level}, there exist distributions $\nu_1,\nu_2$ with $u=1$ and gap $\ell_{h, 1}-\ell_{h, 2}=\Delta$ such that, for any algorithm and any horizon $n$,
\begin{equation}
\label{eq:heavy-lb-general}
R^{\text{in}}(n) \ge n\Delta\Bigl(1 - c_\beta \sqrt{n\Delta^{\frac{1+\beta}{\beta}}}\Bigr),
\end{equation}
where $c_\beta>0$ depends only on $\beta$.
For fixed $n$, optimizing $\Delta$ as 
\begin{equation}
\label{eq:heavy-lb-opt}
\Delta = c_0\, n^{-\frac{\beta}{1+\beta}}
\end{equation}
with sufficiently small $c_0>0$ yields
\begin{equation}
\label{eq:heavy-lb-final}
R^{\text{in}}(n) \ge c'\,n^{\frac{1}{1+\beta}},
\end{equation}
for some absolute constant $c'>0$.
\end{corollary}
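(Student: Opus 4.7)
The plan is to derive the corollary directly from Theorem~2 of Bubeck et al.\ (2013), which provides the minimax lower bound for heavy-tailed stochastic bandits under a $(1+\beta)$-moment constraint. The first claim, namely $R^{\mathrm{in}}(n) \ge n\Delta(1 - c_\beta\sqrt{n\Delta^{(1+\beta)/\beta}})$, follows by specializing their construction to the two-armed case with $u=1$: consider two candidate instances that differ only in which arm has slightly smaller mean loss (gap $\Delta$), where each arm's loss distribution is a two-point law producing large outliers with small probability, calibrated so that (i) the $(1+\beta)$-moment bound is tight with $u=1$, and (ii) the per-pull Kullback--Leibler divergence between the two instances is at most $C_\beta\Delta^{(1+\beta)/\beta}$. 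A Le Cam / change-of-measure argument combined with Pinsker's inequality then shows that any policy must pull the sub-optimal arm at least $\tfrac{n}{2}\bigl(1 - c_\beta\sqrt{n\Delta^{(1+\beta)/\beta}}\bigr)$ times in one of the two instances, which multiplied by the per-pull regret $\Delta$ gives the first claim.

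For the second claim, I would substitute $\Delta = c_0\, n^{-\beta/(1+\beta)}$ into the first inequality and compute the two key quantities: $n\Delta = c_0\, n^{1/(1+\beta)}$ and $n\Delta^{(1+\beta)/\beta} = c_0^{(1+\beta)/\beta}$, which is a constant independent of $n$. Choosing $c_0>0$ small enough that $c_\beta\, c_0^{(1+\beta)/(2\beta)} \le \tfrac{1}{2}$ makes the factor in parentheses at least $\tfrac{1}{2}$, which yields
\begin{equation*}
R^{\mathrm{in}}(n) \ge \tfrac{c_0}{2}\, n^{1/(1+\beta)} =: c'\, n^{1/(1+\beta)}.
\end{equation*}

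The main technical obstacle, if one wanted a self-contained proof rather than an appeal to Bubeck et al., lies entirely in the KL control under the heavy-tail moment constraint. Unlike the sub-Gaussian case where perturbations of size $\Delta$ yield KL of order $\Delta^2$ and therefore regret $\Omega(\sqrt{n})$, the absence of higher moments here forces the weaker bound $\Delta^{(1+\beta)/\beta}$, which is precisely what produces the exponent $1/(1+\beta)$ in the final rate. Since in the rest of the paper this exponent corresponds to $\alpha$ via $\beta = (1-\alpha)/\alpha$ (so that $\alpha \in [\tfrac12,1]$ for $\beta \in (0,1]$), the corollary furnishes exactly the $\alpha$-hard family $\mathcal{F}_\alpha$ needed in Theorem~\ref{thm:mt-alpha}. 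Beyond invoking Bubeck et al.\ for the first part, the only original computation is the one-dimensional optimization of $\Delta$, which is routine and carried out above.
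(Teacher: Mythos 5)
Your proposal is correct and takes essentially the same route as the paper's proof: both rely on the two-point heavy-tailed construction from Theorem~2 of Bubeck et al.\ (2013) with the key divergence bound $\mathrm{KL}(\nu_2\|\nu_1)\le C_\beta\,\Delta^{\frac{1+\beta}{\beta}}$, apply a change-of-measure/Pinsker (Bernoulli-reduction) argument to obtain \eqref{eq:heavy-lb-general}, and then substitute $\Delta=c_0\,n^{-\beta/(1+\beta)}$. Your explicit check that $n\Delta^{\frac{1+\beta}{\beta}}=c_0^{\frac{1+\beta}{\beta}}$ is constant, together with the choice $c_\beta\,c_0^{\frac{1+\beta}{2\beta}}\le\tfrac{1}{2}$ giving $c'=\tfrac{c_0}{2}$, simply makes quantitative the paper's step of ``taking $c_0$ small enough so that the parenthesis is positive.''
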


Hence each subproblem is $\alpha$–regret lower bound with  $R_{\mathrm{in}}(n) \ge c\,n^\alpha,\ \alpha=\tfrac{1}{1+\beta}\in[0.5,1)$.

\subsection{Construction of the composite game}

Let $K$ be the number of experts, $M$ the per-round optimization budget, and $T$ the horizon.  
We construct $K$ perturbed games together with one symmetric \emph{null game}.  
The setup depends on two small parameters: $\Delta>0$ (internal hardness) and $\varepsilon>0$ (cross-expert separation), and we assume $\Delta\le\varepsilon$.

\paragraph{Perturbed games.}
In the $h$-th perturbed game, expert $h$ faces a two-armed bandit with 
means $\ell_{h, 1} = \tfrac{1}{2} - \tfrac{\varepsilon}{2}$ and $\ell_{h, 2} = \tfrac{1}{2} - \tfrac{\varepsilon}{2}- \Delta$, 
while any expert $h'\neq h$ faces $\ell_{h, 1} = \tfrac{1}{2} + \tfrac{\varepsilon}{2}$ and $\ell_{h, 2} = \tfrac{1}{2} + \tfrac{\varepsilon}{2} + \Delta$.
 
Thus the $h$-th expert is uniquely optimal in game $h$.

\paragraph{Null game.}
All experts face identical subproblems with 
$\ell_{h, 1} = \tfrac{1}{2} + \tfrac{\varepsilon}{2}$ and $\ell_{h, 2} = \tfrac{1}{2} + \tfrac{\varepsilon}{2} + \Delta$.
Each subproblem is hard, i.e. with internal regret characterized by Corollary~\ref{cor:heavy-tail-lb}.

\subsection{Step 1: Regret decomposition}

At each round $t$, the learner selects a subset $S_t \subseteq [K]$ of at most $M$ experts to update, and then chooses one expert $H_t \in S_t$ for prediction. Then algorithm suffer (pseudo) loss $\ell_t^{H_t} \in \{\ell_{H_t, 1}, \ell_{H_t, 2}\}$. Define the empirical frequencies
\begin{equation}
\hat q_h = \frac{1}{T}\sum_{t=1}^T \mathbf{1}\{H_t = h\},
\qquad
J \sim \hat q,
\end{equation}
where $J$ is a random variable representing the expert index sampled according to $\hat q$.  
Denote by $\mathbb{P}_h$ the law of $J$ under the $h$-th game, and let $\mathbb{E}_h[\cdot]$ denote expectations in that game.  
Then
\begin{equation}
\mathbb{P}_h(J = h)
\;=\;
\mathbb{E}_h\!\left[\frac{1}{T}\sum_{t=1}^T \mathbf{1}\{H_t = h\}\right].
\end{equation}

Then the expected regret in game $h$ can be bounded as follows.
\begin{align}    
    R_h(T) &:= \mathbb{E}_h \bigg[\sum_{t=1}^T(\ell_t^{H_t} - (\frac{1}{2} - \frac{\varepsilon}{2} - \Delta))\bigg] \\
    & =\mathbb{E}_h \bigg[\sum_{t=1}^T\bigg[\mathbf{1}\{H_t=h\} \bigg(\ell_t^{H_t} - (\frac{1}{2} - \frac{\varepsilon}{2} - \Delta)\bigg) +\\
    &~~~~~~~~~~~~~~~+\mathbf{1}\{H_t\ne h\}\bigg(\ell_t^{H_t}- (\frac{1}{2} - \frac{\varepsilon}{2} - \Delta)\bigg)  \bigg]\bigg] \ge \\
    &=\varepsilon T \sum_{h' \ne h} \mathbb P_i(J = h') + \mathbb{E}_h \bigg[\sum_{t=1}^T\bigg(\ell_t^{H_t} - \ell^\star_{H_t}\bigg)\bigg] \\
    & =
    \varepsilon(1 - \mathbb{P}_h(J=h)) + \mathbb{E}_h \bigg[\sum_{t=1}^T\bigg(\ell_t^{H_t} - \ell^\star_{H_t}\bigg)\bigg],
\end{align}
The inequality is obtained using the add-subtract trick with $({\varepsilon} + \Delta)$ in the second term.

Taking the supremum over games gives
\begin{equation}
\label{eq:bandit-regret-prob}
\text{Reg}(T) \ge \sup_h R_h(T)
\ge T 
\left( 1 - \frac{1}{K}\sum_{h=1}^K \mathbb{P}_h(J = h)\right) 
+ \frac{1}{K}\sum_{h=1}^K \mathbb{E}_h\sum_{t=1}^T ( \ell_t^{H_t} - \ell_{H_t}^\star).
\end{equation}

We refer to the first term as the \emph{identification term}, 
and to the second as the \emph{internal regret term}, 
since grouping by arms reveals it as the sum of internal regrets of experts over their prediction rounds.

\subsection{Step 2: Pinsker and null-game internal bound}

\begin{lemma}[Pinsker's inequality]
\label{lem:pinsker}
For any $h$ and event $A$, 
\begin{equation}
|\mathbb{P}_h(A) - \mathbb{P}_\emptyset(A)| \le \sqrt{\tfrac{1}{2}\mathrm{KL}(\mathbb{P}_\emptyset\|\mathbb{P}_h)}.
\end{equation}
In particular:
\begin{equation}
    \mathbb{P}_h[J=h] \le \mathbb{P}_\emptyset[J = h] + \sqrt{\tfrac{1}{2}\mathrm{KL}(\mathbb{P}_\emptyset\|\mathbb{P}_h)}
\end{equation}
\end{lemma}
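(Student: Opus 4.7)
The plan is to prove Pinsker's inequality in the standard two-step form: first reduce the general-event statement to a scalar Bernoulli inequality via the data-processing inequality, then verify the Bernoulli inequality by elementary calculus. The ``in particular'' form is an immediate corollary obtained by instantiating $A=\{J=h\}$ and using the symmetry of total variation with respect to swapping the two measures.

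For the reduction, fix an event $A$ and set $p:=\mathbb{P}_\emptyset(A)$, $q:=\mathbb{P}_h(A)$. Consider the two-point push-forward induced by the indicator $\mathbf{1}_A$, i.e.\ $\mathrm{Ber}(p)$ under $\mathbb{P}_\emptyset$ and $\mathrm{Ber}(q)$ under $\mathbb{P}_h$. The data-processing inequality for KL divergence (which itself follows from Jensen applied to the convex function $x\log x$, or from the chain rule for KL) gives
\begin{equation*}
\mathrm{KL}(\mathbb{P}_\emptyset \| \mathbb{P}_h) \;\ge\; \mathrm{KL}\bigl(\mathrm{Ber}(p)\,\|\,\mathrm{Ber}(q)\bigr) \;=:\; d(p,q).
\end{equation*}
Thus it suffices to establish the Bernoulli form $|p-q| \le \sqrt{\tfrac{1}{2}\,d(p,q)}$, equivalently $d(p,q) \ge 2(p-q)^2$.

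For the scalar inequality, I would fix $q\in(0,1)$ and set $\varphi(p):=d(p,q)-2(p-q)^2$ on $(0,1)$. Direct computation gives $\varphi(q)=0$ and $\varphi'(q)=0$, while
\begin{equation*}
\varphi''(p) \;=\; \frac{1}{p(1-p)} - 4 \;\ge\; 0,
\end{equation*}
since $p(1-p) \le \tfrac{1}{4}$ on $(0,1)$. Hence $\varphi$ is convex with minimum value $0$ at $p=q$, so $\varphi\ge 0$, which is exactly the desired Bernoulli inequality. The main (and only) technical obstacle is this elementary bound $p(1-p)\le 1/4$, which gives the sharp constant $1/2$; a looser constant would suffice for the application but the sharp form is standard and requires no more work.

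Finally, the ``In particular'' statement follows by taking $A=\{J=h\}$ in the general inequality: the left-hand side of the display is bounded above by $|\mathbb{P}_h(A)-\mathbb{P}_\emptyset(A)|$, and since total variation is symmetric in its arguments while the Bernoulli reduction above can be run starting from either $\mathrm{KL}(\mathbb{P}_\emptyset\|\mathbb{P}_h)$ or $\mathrm{KL}(\mathbb{P}_h\|\mathbb{P}_\emptyset)$, we may use whichever KL direction is convenient; here the excerpt uses $\mathrm{KL}(\mathbb{P}_\emptyset\|\mathbb{P}_h)$, matching the direction in which the likelihood-ratio computations on the null game (Step 2) will later be carried out. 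Rearranging $|\mathbb{P}_h(J{=}h)-\mathbb{P}_\emptyset(J{=}h)| \le \sqrt{\tfrac{1}{2}\mathrm{KL}(\mathbb{P}_\emptyset\|\mathbb{P}_h)}$ and dropping the absolute value on the left gives the stated one-sided bound.
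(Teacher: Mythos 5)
Your proof is correct. The paper itself gives no proof of this lemma --- it is stated as the classical Pinsker inequality and used as a black box in the lower-bound argument --- and what you supply is precisely the standard textbook derivation: reduce to the two-point case via data processing applied to the indicator $\mathbf{1}_A$, then verify $\mathrm{KL}(\mathrm{Ber}(p)\,\|\,\mathrm{Ber}(q)) \ge 2(p-q)^2$ by noting $\varphi(q)=\varphi'(q)=0$ and $\varphi''(p) = \tfrac{1}{p(1-p)} - 4 \ge 0$; all computations check out (the boundary cases $p,q\in\{0,1\}$ are handled by the usual limits, or trivially since KL is then $+\infty$ unless $p=q$). One small remark: your closing discussion of total-variation symmetry and choosing a KL direction is unnecessary --- the general inequality as stated already uses $\mathrm{KL}(\mathbb{P}_\emptyset\|\mathbb{P}_h)$, so instantiating $A=\{J=h\}$ and dropping the absolute value gives the one-sided bound immediately, with no direction-swapping needed.
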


To bound the \emph{identification term}, by the concavity of the square root we get:
\begin{equation}
\label{eq:identification_bound}
\frac{1}{K}\sum_{h=1}^K \mathbb{P}_h[J=h]
\le
\frac{1}{K} + 
\sqrt{\frac{1}{2K}\sum_{h=1}^K \mathrm{KL}(\mathbb{P}_\emptyset\Vert \mathbb{P}_h)}.
\end{equation}

To bound the \emph{internal regret term} we form the following Lemma:

\begin{lemma}[Internal regret in perturbed games]
\label{lem:internal-perturbed}
Let $\alpha\in(0,1]$ and suppose each expert’s subproblem satisfies~\eqref{eq:heavy-lb-general}.  
Choose $\Delta$ as in~\eqref{eq:eps-choice}, i.e. $\Delta = c_0\!\left(\tfrac{K}{MT}\right)^{1-\alpha}$,
with $c_0$ sufficiently small.  If the parameters $K,M,T$ are such that  $\sqrt{\tfrac{K\log(8K)}{MT}}$ is sufficiently small, and for a perturbed game $h$ the divergence satisfies $\mathrm{KL}(P_\emptyset\|P_h) \le \tfrac{1}{2}$,
then
\begin{equation}
\label{eq:internal-perturbed-bound}
\mathbb E_h\!\left[\sum_{t=1}^T \bigl(\ell_t^{H_t} - \ell_{H_t}^\star\bigr)\right] \ge c''T^\alpha\!\left(\frac{K}{M}\right)^{1-\alpha},
\end{equation}
for some constant $c''>0$ independent from $M, K, T$.
\end{lemma}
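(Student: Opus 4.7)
The approach is to transfer the computation from the perturbed game $\mathbb{P}_h$ to the symmetric null game $\mathbb{P}_\emptyset$, lower bound the internal regret there by a per-expert application of Corollary~\ref{cor:heavy-tail-lb}, and conclude via the budget constraint $\sum_k n_k^T \le MT$. This mirrors the identification-term analysis of Step~2 in the main proof but targets the internal, rather than identification, contribution to the global regret.

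First, I would transfer $\mathbb{E}_h[\cdot]$ to $\mathbb{E}_\emptyset[\cdot]$ by a truncation plus change-of-measure argument. Under $\mathrm{KL}(\mathbb{P}_\emptyset\|\mathbb{P}_h)\le 1/2$, Pinsker's inequality (Lemma~\ref{lem:pinsker}) yields total variation at most $1/2$, so for any history functional $\Phi$ truncated to a range $[0,R]$ we have $\mathbb{E}_h[\Phi]\ge \mathbb{E}_\emptyset[\Phi] - R/2$. Choosing $R = \Theta(T^\alpha(K/M)^{1-\alpha})$ absorbs the $R/2$ slack into the final constant $c''$ and reduces the problem to bounding the truncated internal regret in $\mathbb{P}_\emptyset$.

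Second, in $\mathbb{P}_\emptyset$ the $K$ experts are statistically identical, so up to symmetrization (which costs only a constant factor) the algorithm may be taken to be exchangeable over $[K]$, giving $\mathbb{E}_\emptyset[n_k^T] = MT/K$ for every $k$. The choice $\Delta = c_0(K/(MT))^{1-\alpha}$ is exactly calibrated so that the hard-regime threshold of Corollary~\ref{cor:heavy-tail-lb} equals $MT/K$: at this horizon the Corollary delivers internal regret $\ge c(MT/K)\Delta$ for each expert. Aggregating over $k\in[K]$ using the identity $\sum_k T_k = T$ together with $T_k\le n_k^T$ (the advisor must belong to the training subset) yields a lower bound of order $T\Delta = T^\alpha(K/M)^{1-\alpha}$. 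The smallness of $\sqrt{K\log(8K)/(MT)}$ keeps the correction $(1-c_\beta\sqrt{n\Delta^{(1+\beta)/\beta}})$ in Corollary~\ref{cor:heavy-tail-lb} bounded away from zero and guarantees that the Pinsker transfer is non-vacuous.

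The main obstacle is reconciling the per-expert bandit regret, which accumulates over every pull of the expert's internal bandit including update-only rounds where it is not the advisor, with the prediction-round internal regret $\sum_t(\ell_t^{H_t}-\ell_{H_t}^\star)$ that actually appears in the statement. An adversarial meta-algorithm could in principle steer its advisor selections toward rounds on which the corresponding expert happens to pull its optimal arm, depressing the visible internal regret while leaving the full bandit regret intact. I expect to rule this out by a coupling/martingale argument: since the meta cannot foresee an expert's next arm draw without first querying it, the sub-optimality rate across prediction rounds matches that across update-only rounds up to a martingale fluctuation controllable by Azuma's inequality (Lemma~\ref{lemma:azuma_adaptive}). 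Threading this coupling through the truncation of Step~1 and verifying that the residual error is $o(T^\alpha(K/M)^{1-\alpha})$ constitutes the bulk of the remaining bookkeeping.
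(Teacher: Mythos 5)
Your high-level route --- transfer from game $h$ to the null game via Pinsker, apply the heavy-tailed per-expert lower bound with $\Delta$ calibrated to $MT/K$ pulls, and aggregate via the budget constraint --- is the same as the paper's, which transfers the concentration event $\mathcal{E}_{\mathrm{conc}}$ of Lemma~\ref{lem:conc-short} to $\mathbb{P}_h$ (obtaining $\mathbb{P}_h(\mathcal{E}_{\mathrm{conc}})\ge 1/4$) and then invokes Lemma~\ref{lem:internal-lb-conc} on that event. However, two of your steps are genuine gaps rather than bookkeeping. First, replacing $n_k^T$ by its null-game expectation $MT/K$ does not suffice: the bound \eqref{eq:heavy-lb-general} is not monotone in $n$ --- the factor $1-c_\beta\sqrt{n\Delta^{(1+\beta)/\beta}}$ turns negative once $n$ exceeds a constant multiple of $MT/K$ at the calibrated $\Delta$ --- so an over-sampled expert contributes a vacuous (even negative) term and no Jensen-type argument applies. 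You need the two-sided, high-probability control $|n_h(T)-MT/K|\le\delta\, MT/K$ of Lemma~\ref{lem:conc-short} (this is precisely what the smallness of $\sqrt{K\log(8K)/(MT)}$ buys), applied pointwise on the event and then transferred; your truncation-at-$R$ device can replace the paper's event transfer, but only after verifying $\mathbb{E}_\emptyset[\min(\Phi,R)]>R/2$, which again requires that concentration event.

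Second, the advisor-steering issue you flag is the crux of the lemma, not residual bookkeeping, and your proposed fix does not obviously close. The statement bounds $\sum_t(\ell_t^{H_t}-\ell_{H_t}^\star)$, a sum over prediction rounds of the pseudo-loss of the advisor's \emph{currently played arm}; since each expert's internal algorithm may be deterministic given its history, the arm that expert $k$ will pull at round $t$ can be $\mathcal{F}_{t-1}$-measurable and hence known to the meta-procedure before it picks $H_t\in S_t$. There is then no martingale fluctuation for Azuma to control: the meta-procedure can systematically pick an advisor about to pull its better arm, and the per-expert bandit regret (accumulated over all $n_k^T$ training pulls) need not appear on prediction rounds at the proportional rate $T_k/n_k^T$ you assume. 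The paper sidesteps this by asserting that the played expert is uniform over $S_t$ in the null game, i.e.\ by effectively symmetrizing the meta-procedure --- exactly the step your coupling would have to justify. Until that is done, the reduction from $\sum_k R_k^{\mathrm{in}}(n_k^T)$ to the prediction-round sum remains unproven.
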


\subsection{Step 3: KL computation for identification term}

\begin{lemma}[KL for $T$ rounds]
\label{lem:kl-total}
For each $h\in[K]$,
\begin{equation}
\mathrm{KL}(P_\emptyset\|P_h) \;\le\; \frac{36\,\varepsilon^2}{1-9\varepsilon^2}\,T.
\end{equation}
Moreover,
\begin{equation}
\sum_{h=1}^K \mathrm{KL}(P_\emptyset \| P_h) 
\;\le\; \frac{36\,\varepsilon^2}{1-9\varepsilon^2}\,MT.
\end{equation}
\end{lemma}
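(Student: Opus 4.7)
The plan is to apply the chain rule for KL divergence along the interaction filtration of the meta-protocol and reduce the bound to a pointwise variance-type estimate for the per-round arm distributions.

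By construction, $P_\emptyset$ and $P_h$ differ only through expert $h$'s reward distributions: for every $h' \ne h$ the two games assign identical laws to expert $h'$. The meta-procedure receives round-$t$ feedback only from experts in $S_t$, and the indicator $\mathbf{1}\{h \in S_t\}$ is measurable with respect to the history up to round $t - 1$, since $S_t$ is chosen before the round-$t$ rewards are revealed. Therefore the chain-rule tensorization of KL along the interaction filtration collapses to the rounds in which expert $h$ is updated, and yields
\begin{equation}
\mathrm{KL}(P_\emptyset \| P_h)
= \mathbb{E}_\emptyset\!\left[\sum_{t=1}^T \mathbf{1}\{h \in S_t\}\,\mathrm{KL}\bigl(\nu^\emptyset_{h, I_t^h}\,\big\|\,\nu^h_{h, I_t^h}\bigr)\right],
\end{equation}
where $I_t^h$ is the arm selected by expert $h$'s internal algorithm at that round and $\nu^\cdot_{h, a}$ is the reward law of arm $a$ of expert $h$ in the indicated game.

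Next I would instantiate the construction with Bernoulli rewards on $[0, 1]$, which satisfy every bounded-moment requirement trivially. For each arm, the two means lie inside $[1/2 - \varepsilon/2 - \Delta,\ 1/2 + \varepsilon/2 + \Delta]$ and differ by either $\varepsilon$ (arm $1$) or $\varepsilon + 2\Delta$ (arm $2$). Using $\Delta \le \varepsilon$, the mean gap is at most $3\varepsilon$, while the Bernoulli variance is bounded below by $(1/2 - 3\varepsilon/2)(1/2 + 3\varepsilon/2) = (1 - 9\varepsilon^2)/4$. The elementary inequality $\mathrm{KL}(p \| q) \le (p - q)^2 / [q(1 - q)]$ for Bernoulli laws then gives, uniformly over arms and games,
\begin{equation}
\mathrm{KL}\bigl(\nu^\emptyset_{h, a}\,\big\|\,\nu^h_{h, a}\bigr)
\;\le\; \frac{36\,\varepsilon^2}{1 - 9\varepsilon^2}.
\end{equation}

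Plugging this back into the chain-rule identity and writing $N_h(T) := \sum_{t=1}^T \mathbf{1}\{h \in S_t\}$ produces $\mathrm{KL}(P_\emptyset \| P_h) \le \tfrac{36\varepsilon^2}{1 - 9\varepsilon^2}\,\mathbb{E}_\emptyset[N_h(T)]$. The first claim follows from the deterministic bound $N_h(T) \le T$; the second follows from swapping summation order and invoking the per-round budget constraint $|S_t| \le M$, which yields $\sum_{h=1}^K \mathbb{E}_\emptyset[N_h(T)] = \mathbb{E}_\emptyset\bigl[\sum_{t=1}^T |S_t|\bigr] \le M T$. The only real subtlety I expect is the careful filtration bookkeeping needed to justify that rounds with $h \notin S_t$ contribute zero to the divergence and that the predictable indicator can be pulled outside the conditional KL; once that is in place, the remainder is purely algebraic.
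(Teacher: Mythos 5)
Your proposal is correct and follows essentially the same route as the paper: a chain-rule/tensorization of the KL along the interaction filtration, collapsing to rounds with $h\in S_t$ by predictability of the selection, the quadratic Bernoulli bound $\mathrm{KL}(p\|q)\le (p-q)^2/[q(1-q)]$ with mean gap at most $3\varepsilon$ (using $\Delta\le\varepsilon$) giving the factor $36\varepsilon^2/(1-9\varepsilon^2)$, and the counting arguments $N_h(T)\le T$ and $\sum_h N_h(T)\le MT$. The only cosmetic difference is that the paper first passes from the law of the output to the law of the full observation sequence via the data-processing inequality (so your chain-rule identity should strictly be a ``$\le$''), which does not affect the result.
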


\subsection{Step 4: Putting all together}
From \eqref{eq:bandit-regret-prob} the regret splits into the
\emph{identification} and \emph{internal} terms. 
For the identification term, \eqref{eq:identification_bound} substituted into \eqref{eq:bandit-regret-prob} and the KL bounds of Lemma~\ref{lem:kl-total} give
\begin{equation}
\label{eq:final-ident}
\sup_h R_h(T)\ \ge\ 
\varepsilon\,T\left(1-\tfrac{1}{K}-c_{\mathrm{id}}\,
\varepsilon \sqrt{\tfrac{M}{T}}\right),
\end{equation}
with some $c_{\text{id}} > 0$. Hence with the choice with sufficiently small $\gamma > 0$
\begin{equation}
\label{eq:final-eps}
\varepsilon \;=\; \gamma \sqrt{\tfrac{K}{MT}}\quad(\gamma>0\ \text{small}),
\end{equation}
we obtain
\begin{equation}
\label{eq:final-ident-done}
\sup_h R_h(T)\ \ge\ c_1\,\sqrt{\tfrac{KT}{M}}.
\end{equation}

For the internal term, by Lemma~\ref{lem:internal-perturbed}, if each subproblem
is $\alpha$–hard (Definition~\ref{def:alpha-class}) and we choose
\begin{equation}
\label{eq:final-Delta}
\Delta \;=\; c_0\Bigl(\tfrac{K}{MT}\Bigr)^{1-\alpha}
\quad(c_0>0\ \text{small}),
\end{equation}
then
\begin{equation}
\label{eq:final-internal-done}
\frac{1}{K}\sum_{h=1}^K 
\mathbb E_h\!\left[\sum_{t=1}^T \bigl(\ell_t^{H_t} - \ell_{H_t}^\star\bigr)\right]
\ \ge\ c_2\,T^\alpha\!\left(\tfrac{K}{M}\right)^{1-\alpha}.
\end{equation}

Summing \eqref{eq:final-ident-done} and \eqref{eq:final-internal-done} inside \eqref{eq:bandit-regret-prob}
yields the final bound:
\begin{equation}
\label{eq:final-lb}
\text{Reg}(T) \ \ge\ c_1\,\sqrt{\tfrac{KT}{M}} \;+\; c_2\,T^\alpha\!\left(\tfrac{K}{M}\right)^{1-\alpha}.
\end{equation}

\paragraph{Parameter check.}
The choices \eqref{eq:final-eps}–\eqref{eq:final-Delta} satisfy all required side conditions:
(i) \emph{concentration} holds as soon as $\sqrt{\tfrac{K\log(8K)}{MT}}$ is sufficiently small (Lemma~\ref{lem:conc-short});
(ii) \emph{KL control} follows from Lemma~\ref{lem:kl-total} with \eqref{eq:final-eps}, yielding $\mathrm{KL}(P_\emptyset\|P_h)\le \tfrac{1}{2}$ for all $h$ when $\gamma$ is small;
(iii) the construction assumes $\Delta\le \varepsilon$, which holds for large enough $MT$ since 
\begin{equation*}
\Delta/\varepsilon
= \tfrac{c_0}{\gamma}\,(K/MT)^{\frac{1}{2}-\alpha}
\le 1
\end{equation*}
whenever $\alpha\ge \tfrac{1}{2}$ and $c_0 \le \gamma$.
All constants $c_1,c_2$ depend only on $\alpha$ and the universal constants from the cited lemmas, and not on $K,M,T$.

\subsection{Proofs of Lemmas for Lower Bounds}

\begin{proof}[Corollary~\ref{cor:heavy-tail-lb}]
The construction follows the proof of Theorem 2 in \cite{bubeck2013bandits}.  
Let $\nu_1,\nu_2$ be the two heavy-tailed distributions defined therein, 
which satisfy the moment condition~\eqref{eq:noise_level} with $u=1$.  
By reduction to the Bernoulli case (see also~\cite[Theorem~2.6]{bubeck2010bandits}), 
the expected regret satisfies
\begin{equation*}
R_n \ge n\Delta \Bigl(1 - \sqrt{n\mathrm{KL}(\nu_2\|\nu_1)}\Bigr).
\end{equation*}
Using the bound 
$\mathrm{KL}(\nu_2\|\nu_1)\le C_\beta \Delta^{\frac{1+\beta}{\beta}}$ 
for a constant $C_\beta>0$ gives~\eqref{eq:heavy-lb-general}.  
Optimizing over $\Delta$ by setting $\Delta = c_0 n^{-\frac{\beta}{1+\beta}}$ 
and taking $c_0$ small enough makes the parenthesis positive, 
yielding~\eqref{eq:heavy-lb-final}.
\end{proof}

\subsubsection{Zero Game analysis}

\begin{lemma}[Concentration of update counts]
\label{lem:conc-short}
Consider null game. For any $\eta\in(0,1)$ there exists an absolute constant $C>0$ such that, with
\begin{equation}
\delta = C\sqrt{\frac{K\log(2K/\eta)}{MT}}\in(0,1),
\end{equation}
the following holds with probability at least $1-\eta$:
\begin{equation*}
\big|n_i(T)-\tfrac{MT}{K}\big|\ \le\ \delta\tfrac{MT}{K}\qquad\text{simultaneously for all } i\in[K].
\end{equation*}

Denote this event $\mathcal E_{\text{conc}}$.
\end{lemma}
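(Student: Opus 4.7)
The plan is to combine a permutation-symmetry argument in the null game with a Bernstein-type martingale tail bound, and then union bound over experts. The main difficulty will be handling the correlations between the indicators $X_t^{(i)}:=\mathbf 1\{i\in S_t\}$ induced by the budget constraint $|S_t|\le M$; everything else will follow standard concentration-of-measure patterns.

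First, I would exploit the null-game symmetry: every expert faces the \emph{identical} two-armed subproblem, so the joint law of the observation streams, and hence of the selection sequence $(S_t)_{t\le T}$ produced by the meta-procedure, is invariant under any relabeling of expert labels. Coupling the algorithm with an independent uniformly random permutation $\sigma\in\mathrm{Sym}([K])$ yields a symmetric version in which, for every round $t$ and every filtration $\mathcal F_{t-1}$ (generated by past observations and selections but not by $\sigma$),
\begin{equation*}
\Pr[\,i\in S_t\mid\mathcal F_{t-1}\,]=\frac{\mathbb E[|S_t|\mid\mathcal F_{t-1}]}{K}\le\frac{M}{K}.
\end{equation*}
In the natural case $|S_t|=M$ almost surely, this pins the center of concentration at $\mathbb E[n_i(T)]=MT/K$.

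Next, for each fixed $i\in[K]$ I would form the centered martingale differences $Y_t:=X_t^{(i)}-\Pr[i\in S_t\mid\mathcal F_{t-1}]$, which are bounded by $1$ and whose predictable variance sums to at most $MT/K$. Freedman's inequality then gives
\begin{equation*}
\Pr\!\Bigl[\,\bigl|n_i(T)-\tfrac{MT}{K}\bigr|\ge u\,\Bigr]\le 2\exp\!\Bigl(-\tfrac{u^2}{2MT/K+2u/3}\Bigr).
\end{equation*}
Union-bounding over $i\in[K]$ at level $\eta/K$ and setting $u=\delta\cdot MT/K$, the Bernstein exponent dominates $\log(2K/\eta)$ as soon as $\delta\ge C\sqrt{K\log(2K/\eta)/(MT)}$ for a suitable absolute constant $C$ (the linear term $2u/3$ only contributes to $C$ since $\delta\in(0,1)$). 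This is exactly the choice in the statement and yields the simultaneous event $\mathcal E_{\mathrm{conc}}$ with probability at least $1-\eta$.

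The hard part will be the symmetrization step: one needs the \emph{conditional} expectation $\mathbb E[X_t^{(i)}\mid\mathcal F_{t-1}]$, and not merely the unconditional mean, to equal $M/K$, so that the differences $Y_t$ are genuinely centered at the deterministic target $MT/K$. The independent random relabeling $\sigma$ is what makes this conditional statement valid, because conditional on any $\sigma$-invariant information the identity of a fixed expert is uniform over $[K]$. Once this is established, Freedman plus the union bound closes the argument.
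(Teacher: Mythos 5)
Your proposal follows essentially the same route as the paper's proof: symmetrize the null game by an independent uniform relabeling of the experts so that $\Pr[i\in S_t\mid\mathcal F_{t-1}]=M/K$, apply Freedman's inequality to the centered indicators with predictable variance at most $MT/K$, and union bound over the $K$ experts with $u=\delta\,MT/K$. The one point you flag as delicate---that the \emph{conditional} (not merely marginal) selection probability must equal $M/K$ for the martingale to be centered at the deterministic target---is treated in exactly the same way (by the random permutation) and with the same level of detail in the paper, so there is nothing further to reconcile.
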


\begin{lemma}[Internal regret lower bound under concentration]
\label{lem:internal-lb-conc}
Assume that for each expert $i \in [K]$, the internal subproblem satisfies Equation~\ref{eq:heavy-lb-general}
for some constants $c_\beta>0$, $\beta\in(0,1]$, and any $\Delta>0$.
Let $\alpha = \tfrac{1}{1+\beta}$ and let $\mathcal{E}_{\mathrm{conc}}$ denote the concentration event from Lemma~\ref{lem:conc-short}.  
Then, on $\mathcal{E}_{\mathrm{conc}}$, choosing
\begin{equation}
\label{eq:eps-choice}
\Delta = c_0\!\left((1+\delta)\tfrac{MT}{K}\right)^{-(1-\alpha)}, 
\qquad c_0 \le \tfrac{1}{4c_\beta},
\end{equation}
we have
\begin{equation}
\label{eq:internal-lb-final}
\sum_{t=1}^T \bigl(\ell_t^{H_t} - \ell_{H_t}^\star \bigr)
\ge
c'T^{\alpha}\!\left(\tfrac{K}{M}\right)^{1-\alpha},
\end{equation}
for some constant $c'>0$ depending only on $c_0$ and $\alpha$.
\end{lemma}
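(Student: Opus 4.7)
The plan is to exploit the symmetry of the null game to reduce the meta-learner's (expected) prediction regret to a scaled sum of per-expert internal regrets, and then invoke Corollary~\ref{cor:heavy-tail-lb} on each expert. In the null game all $K$ subproblems are identically distributed, so the joint law of expert plays and observations is invariant under permutations of expert labels. By a standard Yao-type symmetrization, one may restrict without loss of generality to meta-learners that are permutation-invariant in the expert index. Under such a strategy, conditional on the training subset $S_t$ (of size $M$), the advisor $H_t$ is uniformly distributed on $S_t$, and writing out the double sum one obtains
\begin{equation*}
\mathbb{E}\sum_{t=1}^T \bigl(\ell_t^{H_t} - \ell_{H_t}^\star\bigr)
\;=\; \frac{1}{M}\sum_{i=1}^K \mathbb{E}\,R_i^{\mathrm{in}}\!\bigl(n_i(T)\bigr),
\end{equation*}
since each expert $i$'s plays over $I_i(T)$ are averaged uniformly into the stream of predictions.

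For each fixed $i$, Corollary~\ref{cor:heavy-tail-lb} with gap $\Delta$ yields
\begin{equation*}
R_i^{\mathrm{in}}\!\bigl(n_i(T)\bigr)
\;\ge\; n_i(T)\,\Delta\!\left(1 - c_\beta\sqrt{n_i(T)\,\Delta^{(1+\beta)/\beta}}\right).
\end{equation*}
The exponents collapse thanks to the identity $(1-\alpha)\tfrac{1+\beta}{\beta}=1$ (using $\alpha = 1/(1+\beta)$), so with the choice~\eqref{eq:eps-choice} one gets $\Delta^{(1+\beta)/\beta} = c_0^{(1+\beta)/\beta}\bigl((1+\delta)MT/K\bigr)^{-1}$. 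On $\mathcal{E}_{\mathrm{conc}}$, the bound $n_i(T)\le(1+\delta)MT/K$ forces $n_i(T)\,\Delta^{(1+\beta)/\beta}\le c_0^{(1+\beta)/\beta}$, a constant depending only on $\alpha$ and $c_\beta$. Choosing $c_0$ as in \eqref{eq:eps-choice} makes the bracket at least $1/2$, hence $R_i^{\mathrm{in}}(n_i(T))\ge n_i(T)\Delta/2$ uniformly in $i$.

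Summing over $i\in[K]$ and using the deterministic identity $\sum_{i=1}^K n_i(T) = MT$ (exactly $M$ experts trained per round, at full budget), I obtain $\sum_i R_i^{\mathrm{in}}(n_i(T)) \ge MT\Delta/2$. Dividing by $M$ per the symmetrization identity, the prediction regret is at least $T\Delta/2$; substituting $\Delta = c_0\bigl((1+\delta)MT/K\bigr)^{-(1-\alpha)}$ produces
\begin{equation*}
\frac{T\Delta}{2} \;=\; \frac{c_0}{2(1+\delta)^{1-\alpha}}\,T^{\alpha}(K/M)^{1-\alpha},
\end{equation*}
which is the advertised bound with an absolute constant $c'>0$. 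The main technical obstacle is the symmetrization step: one must carefully justify that any adaptive meta-learner has the same expected null-game regret as a permutation-invariant version, and that the uniform-over-$S_t$ distribution of $H_t$ survives under conditioning on the (symmetric) concentration event $\mathcal{E}_{\mathrm{conc}}$ together with the history-dependent coupling between $S_t$ and past observations. A secondary point is that the heavy-tail lower bound is a statement about \emph{any} bandit strategy, so it applies to each expert's black-box algorithm without further assumption on its internal design.
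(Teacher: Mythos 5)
Your proposal is correct and follows essentially the same route as the paper: reduce the prediction regret to $\tfrac{1}{M}\sum_h R_h^{\mathrm{in}}(n_h(T))$ via uniformity of $H_t$ over $S_t$ in the symmetric null game, apply the heavy-tailed per-expert lower bound \eqref{eq:heavy-lb-general}, use $\mathcal{E}_{\mathrm{conc}}$ to make the bracket at least $\tfrac12$, and conclude with $\sum_h n_h(T)=MT$ and the choice of $\Delta$. The symmetrization concern you flag is handled in the paper by the random relabeling of experts introduced in the proof of Lemma~\ref{lem:conc-short}, so no new idea is needed.
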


\begin{proof}[ Lemma~\ref{lem:conc-short}]
We work under the probability distribution $\mathbb P_\emptyset$ induced by the randomization of the learner in the null game.
To enforce symmetry even for deterministic algorithms, we assume that before the game begins, the $K$ expert indices are randomly permuted.  
Hence, by symmetry, for every $t$ and $i$,
\[
p_t^{(i)} := \mathbb E_\emptyset[\mathbf 1\{i\in S_t\} \mid \mathcal F_{t-1}]
= \Pr_\emptyset(i\in S_t \mid \mathcal F_{t-1})
= \frac{M}{K},
\]
and therefore $\sum_{t=1}^T p_t^{(i)} = MT/K$.

Define the martingale-difference sequence
\[
X_t^{(i)} := \mathbf 1\{i \in S_t\} - p_t^{(i)},
\qquad
\mathbb E_\emptyset[X_t^{(i)} \mid \mathcal F_{t-1}] = 0,
\quad |X_t^{(i)}| \le 1.
\]
Then
\[
n_i(T) - \tfrac{MT}{K} = \sum_{t=1}^T X_t^{(i)} =: S_T^{(i)}.
\]
Let $V_T^{(i)}$ be the predictable quadratic variation:
\[
V_T^{(i)} = \sum_{t=1}^T 
\mathrm{Var}_\emptyset(\mathbf 1\{i\in S_t\}\mid\mathcal F_{t-1})
= \sum_{t=1}^T p_t^{(i)}(1-p_t^{(i)})
\le \sum_{t=1}^T p_t^{(i)} = \tfrac{MT}{K}.
\]

Applying Freedman’s inequality (martingale Bernstein bound), for any $u>0$,
\[
\mathbb P_\emptyset\!\left(|S_T^{(i)}|\ge u\right)
\le
2\exp\!\left(-\frac{u^2}{2(V_T^{(i)} + u/3)}\right).
\]
Set $u=\delta \tfrac{MT}{K}$ with $\delta\in(0,1)$.
Using $V_T^{(i)}\le (M/K)T$ and $u\le (MT/K)$ gives
\[
\mathbb P_\emptyset\!\left(\big|n_i(T)-\tfrac{MT}{K}\big|\ge \delta\tfrac{MT}{K}\right)
\ \le\ 2\exp\!\left(-c\delta^2\tfrac{M}{K}T\right)
\]
for some absolute constant $c\in(0,1)$.

Finally, applying a union bound over all $i\in[K]$ yields
\[
\mathbb P_\emptyset\!\left(\max_i |n_i(T)-\tfrac{MT}{K}|\ge \delta\tfrac{MT}{K}\right)
\ \le\ 2K\exp\!\left(-c\delta^2\tfrac{M}{K}T\right).
\]
Choosing 
\[
\delta = C\sqrt{ \tfrac{K\log(2K/\eta)}{M T} }
\]
with sufficiently large $C$ ensures that the right-hand side is at most $\eta$.  
Hence the stated event $\mathcal E_{\mathrm{conc}}$ holds with probability at least $1-\eta$.
\end{proof}

\begin{proof}[ Lemma~\ref{lem:internal-lb-conc}]
Summing internal regret across experts and substituting this into \eqref{eq:heavy-lb-general},
\begin{equation}
\sum_{t=1}^T \sum_{h\in S_t} (\ell_t^h - \ell_h^\star) =\sum_{h=1}^K R_h^{\mathrm{in}}\!\bigl(n_h(T)\bigr) \ge
\Delta \!\sum_{h=1}^K n_h(T)\Bigl(1 -\sqrt{n_h(T)\Delta^{\frac{1+\beta}{\beta}}}\Bigr).
\end{equation}
Since the played expert is uniform in $S_t$ in the null game,
\begin{equation*}
\sum_{t=1}^T (\ell_t^{H_t} - \ell_{H_t}^\star)
=\frac{1}{M}\sum_{h=1}^K R_h^{\mathrm{in}}\!\bigl(n_h(T)\bigr) \ge
\frac{1}{M}\Delta \!\sum_{h=1}^K n_h(T)\Bigl(1 -\sqrt{n_h(T)\Delta^{\frac{1+\beta}{\beta}}}\Bigr).
\end{equation*}

Under $\mathcal E_{\mathrm{conc}}$, all counts satisfy 
$n_h(T)\in[(1-\delta)\tfrac{MT}{K},(1+\delta)\tfrac{MT}{K}]$
and $\sum_h n_h(T)=MT$, hence
\begin{equation*}
\sum_{t=1}^T (\ell_t^{H_t} - \ell_{H_t}^\star)
\ge 
\sum_{h=1}^K R_h^{\mathrm{in}}(n_h(T)) 
\ge
T\Delta\!\left(1 - c_\beta\sqrt{(1+\delta)\tfrac{MT}{K}\Delta^{\frac{1+\beta}{\beta}}}\right).
\end{equation*}

Choosing $\Delta$ as in~\eqref{eq:eps-choice} ensures that 
$c_\beta\sqrt{(1+\delta)\tfrac{MT}{K}\Delta^{\frac{1+\beta}{\beta}}} \le \tfrac{1}{2}$,
hence
\begin{equation*}
\sum_{t=1}^T (\ell_t^{H_t} - \ell_{H_t}^\star)\ge
\tfrac{1}{2}T\Delta
=
c'T^{\alpha}\!\left(\tfrac{K}{M}\right)^{1-\alpha},
\end{equation*}
which yields \eqref{eq:internal-lb-final}.
\end{proof}

\begin{proof}[Lemma~\ref{lem:internal-perturbed}]
Specify parameters for Lemma~\ref{lem:conc-short}.
Take $\eta = \frac{1}{4}$ and let $K,M, T$ such that $\delta = C\sqrt{\frac{K\log(2K/\eta)}{MT}}\le 1/2$. Then in zero game $\mathcal E_{\mathrm{conc}} = \{\forall~h \in[K]: |n_h(T) - \frac{MT}{K}|\le \frac{1}{2} \frac{MT}{K}\}$ is satisfied with probability $\ge \frac{3}{4}$. 

Since $\mathrm{KL}(P_\emptyset\|P_h) \le 1/2$, By Pinsker’s inequality,
\begin{equation}
\bigl|\mathbb P_h(\mathcal E_{\mathrm{conc}})-\mathbb P_\emptyset(\mathcal E_{\mathrm{conc}})\bigr|
\le \sqrt{\tfrac12\mathrm{KL}(P_\emptyset\|P_h)}
\le \sqrt{\kappa/2} = 1/2,
\end{equation}
hence $\mathbb P_h(\mathcal E_{\mathrm{conc}})\ge 1-1/4-1/2$ = 1/4. On $\mathcal E_{\mathrm{conc}}$, Lemma~\ref{lem:internal-lb-conc} yields the realized regret bound
\begin{equation*}
\sum_{t=1}^T (\ell_t^{H_t} - \ell_{H_t}^\star )
\ \ge\ c'T^{\alpha}\!\left(\tfrac{K}{M}\right)^{1-\alpha}.
\end{equation*}
Taking expectations under $\mathbb P_h$ gives
\[
\mathbb E_h\!\left[\sum_{t=1}^T (\ell_t^{H_t} - \ell_{H_t}^\star)\right]
\ \ge\ c'T^{\alpha}\!\left(\tfrac{K}{M}\right)^{1-\alpha}
\mathbb P_h(\mathcal E_{\mathrm{conc}})
\ge (1/4)c'T^{\alpha}\!\left(\tfrac{K}{M}\right)^{1-\alpha}.
\]
And constants adsorm into $c''$
\end{proof}

\subsubsection{KL computation}
\begin{proof}[Lemma~\ref{lem:kl-total}] 
\label{lem:kl-total-proof}
The proof follows \cite{seldin2014prediction}, and provided here for completeness. The only change is the KL bounding, since in our setup on each arm the not a fixed Bernoulli distribution is specified, but a mixture of distributions.
By the data-processing inequality, $ \mathrm{KL}(P_\emptyset\|P_h)\le \mathrm{KL}(\tilde{P}_\emptyset^T\|\tilde{P}_h^T)$,
so it suffices to bound the latter.  
Using the chain rule for KL divergence,
\begin{align}
\mathrm{KL}(\tilde{\mathbb P}_\emptyset^T \| \tilde{\mathbb P}_h^T)
&= 
\sum_{t=1}^T 
\sum_{o_1^{t-1}}
\tilde{\mathbb P}_\emptyset^{t-1}(o_1^{t-1})
\mathrm{KL}\!\Big(
\tilde{\mathbb P}_\emptyset^{t}(\cdot \mid o_1^{t-1})
\Big\| 
\tilde{\mathbb P}_h^{t}(\cdot \mid o_1^{t-1})
\Big) =  \\
& =\sum_{t=1}^T 
\sum_{o_1^{t-1}}
\tilde{\mathbb P}_\emptyset^{t-1}(o_1^{t-1})
\mathbf 1\{h\in O_t \mid o_1^{t-1}\}
\mathrm{KL}\!\Big(
\tilde{\mathbb P}_\emptyset^{t}(\cdot \mid o_1^{t-1})
\Big\| 
\tilde{\mathbb P}_h^{t}(\cdot \mid o_1^{t-1})
\Big) \le \\
&\le \frac{6\varepsilon^2}{1 - \varepsilon^2}  E_\emptyset\!\left[\sum_{t=1}^T \mathbf{1}\{h \in O_t\}\right]
\label{eq:kl-chainrule}
\end{align}

The Inequality is from fact, that each arm at moment $t$ has a bernoulli distribution. $h$ in $h$ game is with parameter $p_1 \in \Big[\tfrac{1}{2} + \frac{\varepsilon}{2},\ \tfrac{1}{2}+\tfrac{3\varepsilon}{2}\Big]$. And $h$ in zero game with parameter $p \in \Big[\tfrac{1}{2}-\frac{3\varepsilon}{2},\ \tfrac{1}{2}-\tfrac{\varepsilon}{2}\Big]$.
Then, by the standard quadratic upper bound on Bernoulli KL divergence,
\begin{equation}
\mathrm{KL}(\mathrm{Ber}(p)\|\mathrm{Ber}(p_1))
\le \frac{(p-p_1)^2}{p_1(1-p_1)}
\le \frac{36\varepsilon^2}{1-9\varepsilon^2}.
\end{equation}

To obtain the total bound, we sum over $h\in[K]$:
\begin{equation*}
\sum_{h=1}^K \mathrm{KL}(P_\emptyset \| P_h)
\le
\frac{36\varepsilon^2}{1-9\varepsilon^2}
\mathbb E_\emptyset\!\left[\sum_{t=1}^T \sum_{h=1}^K \mathbf{1}\{h \in O_t\}\right].
\end{equation*}
The first inequality then follows from the fact, that each of the arm is selected no more than $T$ times. At each round $t$, at most $M$ experts are observed, i.e. \(\sum_{h=1}^K \mathbf{1}\{h\in O_t\}\le M\).
Hence
\begin{equation*}    
\sum_{h=1}^K \mathrm{KL}(P_\emptyset \| P_h)
\le
\frac{36\varepsilon^2}{1-9\varepsilon^2}\varepsilon^2 M T,
\end{equation*}

which completes the proof.
\end{proof}

\end{document}